\newcommand{\shortversion}[1]{}
\newcommand{\longversion}[1]{#1}
\DeclareRobustCommand{\DE}[3]{#2}
\newtheorem{definition}{Definition}
\newtheorem{theorem}{Theorem}
\newtheorem{theorem*}[theorem]{Theorem$^{\star}$}
\newtheorem{lemma}[theorem]{Lemma}
\newtheorem{lemma*}[theorem]{Lemma$^{\star}$}
\newtheorem{proposition}[theorem]{Proposition}
\newtheorem{proposition*}[theorem]{Proposition$^{\star}$}
\newtheorem{corollary}[theorem]{Corollary}
\newtheorem{corollary*}[theorem]{Corollary$^{\star}$}
\newtheorem{example}{Example}
\DeclareMathOperator*{\argmax}{arg\,max}
\DeclareMathOperator*{\argmin}{arg\,min}
\newcommand{\SB}{\{\,}%
\newcommand{\SM}{\;{:}\;}%
\newcommand{\SE}{\,\}}%
\newcommand{\SBs}{\{}%
\newcommand{\SEs}{\}}%
\renewcommand{\P}{\text{\normalfont P}}
\newcommand{\NP}{\text{\normalfont NP}}
\newcommand{\co}{\text{\normalfont co-}}
\newcommand{\NN}{\mathbb{N}}
\newcommand{\ZZ}{\mathbb{Z}}
\newcommand{\AAA}{\mathcal{A}}
\newcommand{\CCC}{\mathcal{C}}
\newcommand{\III}{\mathcal{I}}
 \newcommand{\RRR}{\mathcal{R}}
\newcommand{\Card}[1]{|#1|}
\newcommand{\mtext}[1]{\text{\normalfont #1}}
\newcommand{\CNF}{\ensuremath{\mtext{\sc CNF}}}
\newcommand{\Horn}{\ensuremath{\mtext{\sc Horn}}}
\newcommand{\Krom}{\ensuremath{\mtext{\sc Krom}}}
\newcommand{\DefHorn}{\ensuremath{\mtext{\sc DefHorn}}}
\newcommand{\MinVC}[0]{\ensuremath{\mtext{\textsc{Min}}\-\mtext{\textsc{Vertex}}\-\mtext{\textsc{Cover}}}}
\newcommand{\Var}[1]{\mtext{Var\ensuremath{(#1)}}}
\newcommand{\Lit}[1]{\mtext{Lit\ensuremath{(#1)}}}
\newcommand{\ThetaP}[1]{\ensuremath{\Theta^{\mtext{p}}_{#1}}}
\newcommand{\DeltaP}[1]{\ensuremath{\Delta^{\mtext{p}}_{#1}}}
\newcommand{\prof}[1]{\text{\boldmath $#1$}}
\newcommand{\Outcome}[1]{\ensuremath{\mtext{\textsc{Out}}\-\mtext{\textsc{come}}\-\mtext{\textsc{(#1)}}}}
\newcommand{\Majority}[0]{\mtext{\textsc{maj}}}
\newcommand{\Kemeny}[0]{\mtext{\textsc{med}}}
\newcommand{\Slater}[0]{\mtext{\textsc{mcc}}}
\newcommand{\Young}[0]{\mtext{\textsc{young}}}
\newcommand{\Tideman}[0]{\mtext{\textsc{ra}}}
\newcommand{\ReversalScoring}[0]{\mtext{\textsc{rev}}}
\newcommand{\ms}[0]{\mtext{ms}}
\newcommand{\MaxHamming}[0]{\mtext{\textsc{maxham}}}
\newcommand{\MaxModel}[0]{\ensuremath{\mtext{\textsc{Max}-}\allowbreak{}\mtext{\textsc{Model}}}}
\newenvironment{myquote}{\begin{center}
    \begin{minipage}{.98\linewidth}}{\end{minipage}\end{center}}
\newcommand{\probdef}[1]{
  \begin{myquote}
    \framebox[\linewidth][l]{\parbox{\linewidth-10pt}{
    #1
    }}
  \end{myquote}
}
\newcommand{\citealt}[1]{\citeauthor{#1} \citeyear{#1}}
\newcommand{\citealp}[1]{\citeauthor{#1} (\citeyear{#1})}
\begin{document}
%
\title{Hunting for Tractable Languages for Judgment Aggregation}
\author{Ronald de Haan\\
Institute for Logic, Language and Computation\\
University of Amsterdam\\
\url{me@ronalddehaan.eu}
}
\maketitle

\begin{abstract}
Judgment aggregation is a general framework for collective
decision making that can be used to model many different
settings.
Due to its general nature, the worst case complexity of
essentially all relevant problems in this framework is very high.
However, these intractability results are mainly due to the
fact that the language to represent the aggregation domain
is overly expressive.
We initiate an investigation of representation languages for
judgment aggregation that strike a balance between
(1)~being limited enough to yield computational tractability results
and (2)~being expressive enough to model relevant applications.
In particular, we consider the languages of Krom formulas,
(definite) Horn formulas, and Boolean circuits in decomposable
negation normal form (DNNF).
We illustrate the use of the positive complexity results
that we obtain for these languages with a concrete application:
voting on how to spend a budget
(i.e., participatory budgeting).
\end{abstract}

\section{Introduction}

Judgment aggregation is a general framework to study methods
for collective opinion forming, that has been investigated in the area
of computational social choice
(see, e.g.,~\citealt{Endriss16}, \citealt{GrossiPigozzi14}).
The framework is set up in such a general way
that it can be used to model an extremely
wide range of scenarios---including,
e.g., the setting of voting \cite{DietrichList07b}.
On the one hand, this generality is an advantage: methods
studied in judgment aggregation can be employed in all
these scenarios.
On the other hand, however, this generality severely hinders
the use of judgment aggregation methods in applications.
Because there are no restrictions on the type of aggregation
settings that are modeled,
relevant computational tasks across the board
are computationally intractable in the worst case.
In other words, no performance guarantees are available
that warrant the efficient use of judgment aggregation methods
for applications---not even for simple settings.
For example, computing the outcome of a judgment aggregation
scenario is NP-hard for all aggregation procedures studied in the
literature that satisfy the rudimentary quality condition
of \emph{consistency}
\cite{EndrissDeHaan15,DeHaanSlavkovik17,LangSlavkovik14}.%

These negative computational complexity results are in many cases
due purely to the expressivity of the language used to represent
aggregation scenarios (full propositional logic, or CNF formulas)---%
not to the structure of the scenario being modeled.
In other words, the known negative complexity results draw
an overly negative picture

To correct this gloomy and misleading image, a more detailed and
more fine-grained perspective is needed on the way
that application settings are modeled in the general framework
of judgment aggregation.
In this paper, we take a first look at the complexity of judgment aggregation
scenarios using this more sensitive point of view.
That is, we initiate an investigation of representation languages for
judgment aggregation that (1)~are modest enough to
yield positive complexity results for relevant computational tasks,
yet (2)~are general enough to model interesting and
relevant applications.

Concretely, we look at several restricted propositional languages
that strike a balance between expressivity and tractability in other
settings, and we study to what extent such a balance is attained
in the setting of judgment aggregation.
In particular, we look at Krom (2CNF), Horn and definite Horn
formulas, and we consider the class of Boolean circuits in
decomposable negation normal form (DNNF).
We study the impact of these restricted languages on the
complexity of computing outcomes for a number of
judgment aggregation procedures studied in the literature.
We obtain a wide range of (positive and negative) results.
Most of the results we obtain are summarized in
Tables~\ref{table:krom-horn-results},
\ref{table:dnnf-results}
and~\ref{table:budget-results},
located in later sections.

In particular, we obtain several interesting
positive complexity results for the case where
the domain is represented using a Boolean circuit
in DNNF.
Additionally, we illustrate how this representation language
of Boolean circuits in DNNF---%
that combines expressivity and tractability---%
can be used to get tractability results for a specific
application: 
voting on how to spend a budget.
This application setting can be seen as an instantiation of the
setting of \emph{Participatory Budgeting}
(see, e.g.,~\citealt{BenadeNathProcacciaShah17}).

\paragraph{Related Work}
Judgment aggregation has been studied
in the field of computational social choice
from (a.o.) a philosophy,
economics and computer science perspective
(see, e.g., \citealt{Dietrich07}, \citealt{Endriss16},
\citealt{GrossiPigozzi14},
\citealt{LangPigozziSlavkovikVanderTorreVesic17},
\citealt{ListPettit02},
\citealt{Rothe16}).
The complexity of computing outcomes for
judgment aggregation procedures
has been studied by, a.o.,~\citealp{EndrissGrandiPorello12},
\citealp{EndrissGrandiDeHaanLang16},
\citealp{EndrissDeHaan15},
\citealp{DeHaanSlavkovik17} and
\citealp{LangSlavkovik14}.
See Table~\ref{table:bg-results} for
complexity results that are relevant for this paper.

\paragraph{Roadmap}
We begin by explaining the framework of judgment aggregation.
We then study to what extent the known languages of
Krom and (definite) Horn formulas lead to suitable results for
judgment aggregation.
We continue with looking at the class of DNNF circuits---%
studied in the field of knowledge compilation---and we illustrate
how results for this class of circuits can be used for a
concrete application of judgment aggregation
(that of voting on how to allocate a budget).
We conclude with outlining some promising ways in which
the research path that we set out can be followed.

An overview of notions
from propositional logic and computational complexity theory
that we use can be found in the appendix.
\longversion{%
The proofs of some results are omitted from the main paper
and are located in the additional material at the end%
---these results are marked with a star ($\star$).}
\shortversion{%
The proofs of some results are omitted from the paper%
---these results are marked with a star ($\star$).
Full proofs can be found in the accompanying
technical report \textcolor{red}{\cite{DeHaan18}}.}

\section{Judgment Aggregation}

We begin by introducing the setting of Judgment Aggregation
\cite{Dietrich07,Endriss16,GrossiPigozzi14,ListPettit02}.
In this paper, we will use a variant of the framework that has been studied
by, e.g.,~\citealp{Grandi12}, \citealp{GrandiEndriss13}
and \citealp{EndrissGrandiDeHaanLang16}.%
\footnote{This framework is also known under the name
of \emph{binary aggregation with integrity constraints},
and can be used interchangeably with other Judgment Aggregation
frameworks from the literature
---%
as shown by~\citealp{EndrissGrandiDeHaanLang16}.}

Let~$\III = \SBs x_1,\dotsc,x_n \SEs$ be a finite set of \emph{issues},
in the form of propositional variables.
Intuitively, these issues are the topics about which the individuals want
to combine their judgments.
A truth assignment~$\alpha : \III \rightarrow \SBs 0,1 \SEs$ is
called a \emph{ballot}, and represents
an opinion that individuals and the group can have.
%
%
We will also denote ballots~$\alpha$ by a binary
vector~$(b_1,\dotsc,b_{n}) \in \SBs 0,1 \SEs^{n}$,
where~$b_i = \alpha(x_i)$ for each~$i \in [n]$---%
we use~$[n]$ to denote~$\SBs 1,\dotsc,n \SEs$ for each~$n \in \NN$.
Moreover, we say that~$(p_1,\dotsc,p_{n}) \in \SBs 0,1,\star \SEs^{n}$
is a \emph{partial ballot}, and that~$(p_1,\dotsc,p_{n})$ \emph{agrees
with} a ballot~$(b_1,\dotsc,b_{n})$ if~$p_i = b_i$ whenever~$p_i \neq \star$,
for all~$i \in [n]$.
%
We use an integrity constraint~$\Gamma$ to restrict the
set of feasible opinions (for both the individuals and the group).
The integrity constraint~$\Gamma$ is a propositional formula
(or more generally, a single-output Boolean circuit),
whose variables can include~$x_1,\dotsc,x_n$.
We define the set~$\RRR(\III,\Gamma)$ of \emph{rational ballots}
to be the ballots (for~$\III$) that are consistent with the integrity
constraint~$\Gamma$.
We say that finite sequences~$\prof{r} \in \RRR(\III,\Gamma)^{+}$
of rational ballots are \emph{profiles}.
A profile contains a ballot for each individual participating
in the judgment aggregation scenario.
Where convenient we equate a profile~$\prof{r} = (r_1,\dotsc,r_p)$
with the multiset containing~$r_1,\dotsc,r_p$.
%

A \emph{judgment aggregation procedure} (or \emph{rule}),
for the set~$\III$ of issues and the integrity constraint~$\Gamma$,
is a function~$F$ that takes as input a
profile~$\prof{r} \in \RRR(\III,\Gamma)^{+}$,
and that produces a non-empty set of ballots.
A procedure~$F$ is called \emph{consistent} if
for all~$\III$,~$\Gamma$ and~$\prof{r}$
it holds that each~$r^{*} \in F(\prof{r})$
is consistent with~$\Gamma$.
Consistency is a central requirement for judgment aggregation
procedures, and all rules that we consider in this paper
are consistent.

An example of a simple judgment aggregation procedure
is the \emph{majority rule} (defined for profiles with an
odd number of ballots).
We let the majority outcome~$m_{\prof{r}}$ be the partial ballot
such that for each~$x \in \III$,~$m_{\prof{r}}(x) = 1$
if a strict majority of ballots~$r_i \in \prof{r}$ satisfy~$r_i(x) = 1$,%
~$m_{\prof{r}}(x) = 0$
if a strict majority of ballots~$r_i \in \prof{r}$ satisfy~$r_i(x) = 0$,
and~$m_{\prof{r}}(x) = \star$ otherwise.
The majority rule returns the majority outcome~$m_{\prof{r}}$.
The majority rule is efficient to compute, but is not consistent
(as shown in Example~\ref{ex:ja-setting}).

\begin{example}
\label{ex:ja-setting}
Consider the judgment aggregation scenario.
where~$\III = \SBs x_1,x_2,x_3 \SEs$,
$\Gamma = (\neg x_1 \vee \neg x_2 \vee \neg x_3)$,
and the profile~$\prof{r} = (r_1,r_2,r_3)$
is as shown in Table~\ref{table:ja-example}.
The majority outcome~$\Majority(\prof{r})$
is inconsistent with~$\Gamma$.
\begin{table}[h!]
\begin{center}
  \begin{tabular}{c || c c c}
    \toprule
    $\prof{r}$ & $x_1$ & $x_2$ & $x_3$ \\
    \midrule
    $r_1$ & $1$ & $1$ & $0$ \\
    $r_2$ & $1$ & $0$ & $1$ \\
    $r_3$ & $0$ & $1$ & $1$ \\
    \midrule
    $\Majority(\prof{r})$ & $1$ & $1$ & $1$ \\
    \bottomrule
  \end{tabular}
\end{center}
  \vspace{-5pt}
  \caption{Example of a judgment aggregation scenario.}
  \label{table:ja-example}
  \vspace{-10pt}
\end{table}
\end{example}

\subsection{Judgment Aggregation Procedures}

Next, we introduce the judgment aggregation rules
that we use in this paper.
These procedures are all consistent
and are many of the ones that have been
studied in the literature
(for an overview see,
e.g.,~\citealt{LangPigozziSlavkovikVanderTorreVesic17}).


Several procedures that we consider can be seen as
instantiations of a general template: \emph{scoring procedures}.
Let~$\III$ be a set of issues and~$\Gamma$ be an integrity constraint.
Moreover, let~$s : \RRR(\III,\Gamma) \times \Lit{\III} \rightarrow \NN$ be
a \emph{scoring function} that assigns a value to each literal~$l \in \Lit{\III}$
with respect to a ballot~$r \in \RRR(\III,\Gamma)$.
The scoring judgment aggregation procedure~$F_s$ that corresponds
to~$s$ is defined as follows:
\[ F_s(\prof{r}) =
  \argmax\limits_{r \in \RRR(\III,\Gamma)}
  \sum\limits_{r_i \in \prof{r}}
  \sum\limits_{l \in \Lit{\III} \atop r(l) = 1}
  s(r_i,l).
\]
That is,~$F_s$ selects the rational ballots~$r \in \RRR(\III,\Gamma)$
that maximize the cumulative score for all literals agreeing with~$r$
with respect to all ballots~$r_i \in \prof{r}$.

The \emph{median (or Kemeny) procedure}~\Kemeny{}
is based on the scoring function and is defined
by letting~$s_{\mtext{K}}(r,l) = r(l)$
for each~$r \in \RRR(\III,\Gamma)$ and
each~$l \in \Lit{\III}$.
Alternatively, the \Kemeny{} procedure can be defined as the rule
that selects the ballots~$r^{*} \in \RRR(\III,\Gamma)$
that minimize the cumulative Hamming distance to the profile~$\prof{r}$.
The \emph{Hamming distance}
between two ballots~$r,r'$
is~$d_{\mtext{H}}(r,r') = \Card{\SB x \in \III \SM r(x) \neq r'(x) \SE}$.

The \emph{reversal scoring procedure}~\ReversalScoring{}
is based on the scoring
function~$s_{\mtext{R}}(r,l)$ such
that~$s_{\mtext{R}}(r,l) = \min\nolimits_{r' \in \RRR(\III,\Gamma), r'(l) = 0}
d_{\mtext{H}}(r,r')$
for each~$r \in \RRR(\III,\Gamma)$ and each~$l \in \Lit{\III}$.
That is, the score~$s_{\mtext{R}}(r,l)$ of~$l$ w.r.t.~$r$
is the minimal number of issues whose truth value needs to be flipped
to get a rational ballot~$r'$ that sets~$l$ to false.


The \emph{max-card Condorcet (or Slater) procedure}~\Slater{}
is also based on the Hamming distance.
Let~$\prof{r}$ be a profile.
The \Slater{} procedure is defined
by letting~$\Slater(\prof{r}) = \argmin\nolimits_{r^{*} \in \RRR(\III,\Gamma)}
d_{\mtext{H}}(r^{*},m_{\prof{r}})$.
That is, the \Slater{} procedure selects the rational ballots
that minimize the Hamming distance
to the majority outcome~$m_{\prof{r}}$.




The \emph{Young procedure}~\Young{}
selects those ballots that
can be obtained as a rational majority outcome by deleting a minimal number
of ballots from the profile.
Let~$\prof{r}$ be a profile, and let~$d$ denote the smallest number such that
deleting~$d$ individual ballots from~$\prof{r}$ results in a profile~$\prof{r}'$
such that~$m_{\prof{r}'}$ is a complete and rational ballot.
We let the outcome~$\Young(\prof{r})$ of the Young procedure
be the set of rational ballots~$r^*$ such that deleting~$d$ individual
from~$\prof{r}$ results in a profile~$\prof{r}'$ with~$m_{\prof{r}'} = r^*$.


The \emph{Max-Hamming procedure}
\MaxHamming{}
is also based on the Hamming distance.
Let~$r$ be a single ballot,
and let~$\prof{r} = (r_1,\dotsc,r_p)$ be a profile.
We define the max-Hamming distance between~$r$ and~$\prof{r}$
to be~$d_{\mtext{max,H}}(r,\prof{r}) = \max\nolimits_{r_i \in \prof{r}} d_{\mtext{H}}(r,r_i)$.
The Max-Hamming procedure is defined
by letting~$\MaxHamming(\prof{r}) =
\argmin\nolimits_{r^{*} \in \RRR(\III,\Gamma)}
d_{\mtext{max,H}}(r^{*},\prof{r})$.
That is, the Max-Hamming procedure selects the rational ballots
that minimize the max-Hamming distance to~$\prof{r}$.


The \emph{ranked agenda (or Tideman) procedure}
\Tideman{}
is based on the notion of majority strength.%
\footnote{Here, we consider a variant of the ranked agenda
procedure that works with a fixed tie-breaking order.
Other variants, where all possible tie-breaking orders are considered
in parallel, have also been studied in the literature
(see, e.g.,~%
\citealt{LangPigozziSlavkovikVanderTorreVesic17}).}
Let~$\prof{r}$ be a profile and let~$l \in \Lit{\III}$.
The majority strength~$\ms(\prof{r},l)$ of~$l$ for~$\prof{r}$
is the number of ballots~$r \in \prof{r}$ such that~$r(l) = 1$.
Let~$<_{\mtext{tb}}$ be a fixed linear order on~$\Lit{\III}$
(the tie-breaking order).
Based on~$<_{\mtext{tb}}$ and the majority strength,
we define the linear order~$<_{\prof{r}}$ on~$\Lit{\III}$.
Let~$l_1,l_2 \in \Lit{\III}$.
Then~$l_1 <_{\prof{r}} l_2$ if either (i)~$\ms(\prof{r},l_1) > \ms(\prof{r},l_2)$
or (ii)~$\ms(\prof{r},l_1) = \ms(\prof{r},l_2)$ and~$l_1 <_{\mtext{tb}} l_2$.
Then~$\Tideman(\prof{r}) = \SBs r^{*} \SEs$ where the ballot~$r^{*}$ is
defined inductively as follows.
Let~$l_1, l_2, \dotsc, l_{2n}$ be such
that for each~$i \in [2n-1]$ it holds that~$l_i <_{\prof{r}} l_{i+1}$.
Let~$s_0$ be the empty truth assignment.
For each~$i \in [2n-1]$, check whether both~$s_i(l_i) \neq 0$
and~$s'_i$ is consistent with~$\Gamma$,
where~$s'_i$ is obtained from~$s_i$ by setting~$l_i$ to true
(and keeping the assignments to variables not occurring in~$l_i$ unchanged).
If both are the case, then let~$s_{i+1} = s'_i$.
Otherwise, let~$s_{i+1} = s_i$.
Then~$r^{*} = s_{2n}$.
Intuitively, the procedure iterates over
the assignments~$l_1,l_2,\dotsc$ in the order specified by~$<_{\prof{r}}$.
Each literal~$l_i$ is set to true whenever this does
not lead to an inconsistency with previously assigned literals.

\subsection{Outcome Determination}

When given a judgment aggregation scenario (i.e., an agenda, an integrity
constraint, and a profile of individual opinions),
an important computational task is to compute a
possible collective opinion, for a fixed judgment aggregation procedure.
This task is often referred to as \emph{outcome determination}.
Moreover, often it makes sense to seek possible collective opinions that
satisfy certain properties (e.g., whether or not a given issue is accepted
in the collective opinion).

Essentially, this is a search problem: the task is to find one of (possibly)
multiple solutions.
However, to make the theoretical complexity analysis easier, we will
consider the following decision variant of this problem.

\probdef{
  \Outcome{F}
  
  \emph{Instance:} A set~$\III$ of issues with
    an integrity constraint~$\Gamma$
    a profile~$\prof{r} \in \RRR(\III,\Gamma)^{+}$
    and a partial ballot~$s$ (for~$\III$).
  
  \emph{Question:} Is there a
    ballot~$r^* \in F(\prof{r})$
    such that~$s$ agrees with~$r^*$?
}

An outcome~$r^{*}$ witnessing a yes-answer can be obtained
by solving this decision problem a linear number of times.
In addition to the basic task of finding one outcome
(that agrees with a given partial ballot~$s$), one could consider
other computational tasks, e.g., representing the set~$F(\prof{r})$
of outcomes in a succinct way that admits certain queries/operations
to be performed efficiently.
For example, it might be desirable to enumerate all
(possibly exponentially many) outcomes with polynomial delay.
It could also be desirable to check whether all outcomes agree
with a given partial ballot~$s$ (\emph{skeptical reasoning}).
For the sake of simplicity, in this paper
we will stick to the decision problem described above.
All tractability results that we obtain for the decision problem
can straightforwardly be extended
to tractability results for the above computational tasks.

For the judgment aggregation procedures~$F$ that we considered
above, \Outcome{F} is \ThetaP{2}-hard.
For an overview, see Table~\ref{table:bg-results}.

\begin{table}[!htb]
  \centering
  
  \begin{small}
  \begin{tabular}{@{} c |@{\quad}p{5cm} @{} } \toprule
       $F$ & complexity of \Outcome{F} \\
      \midrule
      \Kemeny & \ThetaP{2}-c \hfill \cite{LangSlavkovik14} \\[3pt]
      \ReversalScoring & \ThetaP{2}-c \hfill \cite{DeHaanSlavkovik17} \\[3pt]
      \Slater & \ThetaP{2}-c \hfill \cite{LangSlavkovik14} \\[3pt]
      \Young & \ThetaP{2}-c \hfill \cite{EndrissDeHaan15} \\[3pt]
      \MaxHamming & \ThetaP{2}-c \hfill \cite{DeHaanSlavkovik17} \\[3pt]
      \Tideman & \DeltaP{2}-c \hfill \cite{EndrissDeHaan15} \\
    \bottomrule
  \end{tabular}
  \end{small}
  
  \caption{The computational complexity of outcome determination
  for various procedures~$F$.}
  \label{table:bg-results}
  \vspace{-15pt}
\end{table}

\section{Krom and (Definite) Horn Formulas}

In this section, we consider the fragments
of Krom (2CNF), Horn and definite Horn formulas---%
for a formal definition of these fragments,
see the appendix.
These fragments can be used to express settings where
only basic dependencies between issues play a role---%
see Example~\ref{ex:basic-dependencies}
for an indication.

\begin{example}
\label{ex:basic-dependencies}
Krom (2CNF) formulas can be used to express
dependencies of the form ``if we decide to use
software tool~1 ($s_1$) or software tool~2 ($s_2$),
then we need to purchase the entire package ($p$):''
$(s_1 \vee s_2) \rightarrow p \equiv (\neg s_1 \vee p) \wedge (\neg s_2 \vee p)$.

Definite Horn formulas can be used to express
dependencies of the form ``if we hire both researcher~1 ($r_1$)
and researcher~2 ($r_2$), then we need to rent another office~$o$:''
$(r_1 \wedge r_2) \rightarrow o \equiv (\neg r_1 \vee \neg r_2 \vee o)$.
\end{example}

For some judgment aggregation rules these fragments
make computing outcomes tractable,
and for other judgment aggregation rules they do not.
We begin with considering the rules \Kemeny{} and \Slater{}.
Computing outcomes for these rules is tractable when restricted
to Krom formulas, but not when restricted to (definite) Horn formulas.
%

\begin{proposition*}
\label{prop:defhorn-kemeny}
\Outcome{\Kemeny} is \ThetaP{2}-hard even when
restricted to the case where~$\Gamma \in \DefHorn$.
\end{proposition*}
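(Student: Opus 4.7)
The plan is to give a polynomial-time reduction from a $\ThetaP{2}$-complete problem---for instance the parity variant of \SAT, which asks, given a CNF formula~$\phi$, whether the maximum number of simultaneously satisfiable clauses of~$\phi$ is odd. The general $\ThetaP{2}$-hardness proof for \Outcome{\Kemeny} already provides a blueprint; the main task is to adapt it so that the integrity constraint~$\Gamma$ stays within the definite Horn fragment.

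I would encode the source instance as follows. For each propositional variable~$x_i$ of~$\phi$, introduce a complementary issue~$\bar{x}_i$, and for each clause~$C_j$ introduce a witness issue~$c_j$; I would also add a small pool of auxiliary variables per clause, used as ``intermediate heads'' to simulate disjunctions by chaining single-headed definite Horn implications. The constraint~$\Gamma$ would contain (i)~propagation clauses ensuring that whenever~$c_j$ is true in an outcome, some literal of~$C_j$ is also made true via the auxiliary chain, and (ii)~consistency clauses of the form $x_i \wedge \bar{x}_i \rightarrow z$ for a distinguished issue~$z$. In the profile~$\prof{r}$ I would then add many copies of rational ballots voting against~$z$, thereby penalising outcomes that make both~$x_i$ and~$\bar{x}_i$ true, and equal numbers of rational ballots voting for~$x_i$ and for~$\bar{x}_i$, so that the Kemeny optimum prefers a definite polarity for each variable. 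Many copies voting for each~$c_j$ then drive the outcome to satisfy as many clauses of~$\phi$ as possible. A final small voting block on a designated ``answer'' issue---the one queried by the partial ballot~$s$---lets me extract the parity of the max-SAT value.

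The main obstacle will be guaranteeing that every ballot in~$\prof{r}$ is itself rational under~$\Gamma$. Since~$\Gamma$ is definite Horn, the all-ones assignment is always a model, and any partial assignment can be completed to a rational ballot by closing under the implications and setting the remaining issues to~$1$. This gives enough leeway to realise each voting block using rational ballots, provided the block's chosen settings do not activate the premise of a clause without also satisfying the corresponding head. The delicate part is the counting: the multiplicities of the voting blocks must be chosen so that the ordering of the cumulative Hamming scores tracks both the number of satisfied clauses of~$\phi$ and its parity. I expect this counting argument to be essentially the one used in the unrestricted $\ThetaP{2}$-hardness proof, but it needs to be carried out entirely within the definite Horn fragment, with special care taken to show that the auxiliary chains do not create spurious optima that would break either the parity encoding or the identification of the answer bit.
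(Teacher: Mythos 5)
There is a genuine gap at the heart of your construction: the step where you ``simulate disjunctions by chaining single-headed definite Horn implications'' cannot work as a property of the constraint alone. The models of a definite Horn formula are closed under intersection, and this closure survives projection onto any subset of the variables (the componentwise minimum of two models is a model, and its projection is the intersection of the projections). Hence no definite Horn~$\Gamma$, with however many auxiliary ``intermediate heads,'' can enforce a condition of the form ``if~$c_j$ is accepted then at least one of several positive issue-literals is accepted'': any faithful encoding must admit one rational ballot accepting~$c_j$ together with the first disjunct only and another accepting~$c_j$ together with the second disjunct only, and their intersection is then a rational ballot accepting~$c_j$ but neither disjunct. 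Your propagation clauses~(i) therefore do not exist in~$\DefHorn$, and the reduction collapses exactly where acceptance of~$c_j$ is supposed to certify that clause~$C_j$ is satisfied. A secondary problem is the polarity gadget: with equal numbers~$k$ of ballots voting~$(x_i,\bar{x}_i)=(1,0)$ and~$(0,1)$, all four assignments to the pair~$(x_i,\bar{x}_i)$ incur the same cumulative Hamming cost~$2k$ on those two issues, so this block is neutral and does not ``prefer a definite polarity'' as claimed; in particular nothing in your sketch rules out the assignment~$(0,0)$.

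The paper avoids both problems by never asking the constraint to express a disjunction of positive atoms. It first proves hardness for~$\Horn$ (Lemma~\ref{lem:horn-kemeny}) by reducing from~$\MaxModel(\Horn)$ with gadget clauses~$y_{i,1}\wedge y_{i,2}\wedge y_{i,3}\rightarrow x_i$ (and a primed copy) together with a three-voter profile in which each voter rejects~$x_i$ and accepts exactly two of the three~$y_{i,j}$: accepting~$x_i$ then costs Hamming distance~$3$ on that block, while rejecting it forces rejecting some~$y_{i,j}$ and some~$y'_{i,j}$ at cost at least~$4$, so the distance minimization---not the constraint---does the disjunctive work. It then passes from~$\Horn$ to~$\DefHorn$ by replacing each purely negative clause~$c_j$ with~$n+1$ padded copies~$(c_j\vee y_{j,\ell})$ using fresh positive heads that every voter rejects, and argues that every Kemeny outcome must set all~$y_{j,\ell}$ to~$0$, thereby recovering the original clause. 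To salvage your one-shot approach you would need to replace the propagation clauses with a cost-based gadget of this kind; as written, the key step is not merely unverified but unrealizable within~$\DefHorn$.
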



\begin{proposition*}
\label{prop:defhorn-slater}
\Outcome{\Slater} is \ThetaP{2}-hard even when
restricted to the case where~$\Gamma \in \DefHorn$.
\end{proposition*}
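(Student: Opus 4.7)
The plan is to reduce from the $\ThetaP{2}$-complete membership problem for minimum vertex cover: given a graph $G = (V, E)$ and a distinguished vertex $v^* \in V$, decide whether $v^*$ belongs to some minimum vertex cover of $G$. This is a standard $\ThetaP{2}$-complete problem, because the size of a minimum vertex cover can be pinned down by $O(\log n)$ adaptive $\NP$-queries and membership of a specified vertex in a minimum-size cover is then a single additional $\NP$-question.

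Given $(G, v^*)$ with $V = \{v_1, \dots, v_n\}$, I would build a judgment aggregation scenario whose issues are $u_i$ (one per vertex) and $y_e$ (one per edge), with integrity constraint
\[
  \Gamma \;=\; \bigwedge\limits_{e = (v_i,v_j) \in E}
  (\neg u_i \vee \neg u_j \vee y_e),
\]
logically equivalent to $\bigwedge_{e = (v_i,v_j)}(u_i \wedge u_j \rightarrow y_e)$. Every clause has exactly one positive literal, so $\Gamma \in \DefHorn$. The intended reading of a rational ballot $r$ is that $S_r = \SB v_i \SM r(u_i) = 1 \SE$ is an ``independent-set candidate''---the complement of a vertex-cover candidate---and whenever both endpoints of some edge $e$ lie in $S_r$, the constraint forces $r(y_e) = 1$.

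Next, I would realise the (non-rational) partial ballot $\tau$ with $\tau(u_i) = 1$ for all $i$ and $\tau(y_e) = 0$ for all $e$ as the majority outcome of a carefully chosen profile. Specifically, take $n-1$ copies of the all-$1$ ballot $\mathbf{1}$ (which satisfies every definite Horn clause) together with, for each vertex $v_i$, one ``vertex-witness'' ballot $q_i$ defined by $q_i(u_i) = 1$, $q_i(u_j) = 0$ for $j \neq i$, and $q_i(y_e) = 0$ for every edge $e$. Each $q_i$ is rational since, for every edge, at most one of its two endpoints lies in $\{v_i\}$, which falsifies the antecedent of the corresponding implication. A direct count over this odd-sized profile of $2n-1$ ballots shows that every $u_i$ receives $n$ votes for $1$ against $n-1$ for $0$, while every $y_e$ receives $n$ votes for $0$ against $n-1$ for $1$; hence $m_{\prof{r}} = \tau$.

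Finally, I would observe that for every rational ballot $r$,
\[
  d_{\mtext{H}}(r,\tau) \;=\; (n - |S_r|) + |\SB e \in E \SM
  \text{both endpoints of } e \text{ lie in } S_r \SE|,
\]
and check that this quantity is minimised exactly when $V \setminus S_r$ is a minimum vertex cover of $G$ (with all unforced $y_e$ then set to $0$). Consequently, $\Slater(\prof{r})$ is in bijection with the minimum vertex covers of $G$, and the partial ballot $s$ with $s(u_{i^*}) = 0$ (where $v^* = v_{i^*}$) and $\star$ elsewhere agrees with some outcome iff $v^*$ lies in some minimum vertex cover. The main subtlety of this approach is that $\tau$ is itself not a rational ballot, so it must be engineered as a majority over individually rational ballots while the vote counts work out simultaneously for every $u$-issue and every $y$-issue; the all-$1$/$q_i$ mixture above achieves this without disturbing the optimisation structure underlying the Slater outcome.
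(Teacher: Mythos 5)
Your reduction is genuinely different from the paper's. The paper first establishes \ThetaP{2}-hardness of \Outcome{\Slater} for $\Gamma \in \Horn$ by reducing from $\MaxModel(\Horn)$ with a three-ballot profile over gadget variables $y_{i,j},y'_{i,j}$, and then removes the non-definite clauses via a second padding reduction; you instead reduce directly from \MinVC{} (whose \ThetaP{2}-completeness the paper itself proves in the appendix) and land in \DefHorn{} in one step, which is more direct. Your profile is correct: all $2n-1$ ballots are rational and the majority outcome is exactly $\tau$. However, one intermediate claim is false as stated: writing $C = V \setminus S_r$, the quantity $(n-\Card{S_r}) + \Card{\SB e \SM \mtext{both endpoints of } e \mtext{ in } S_r \SE}$, i.e., $\Card{C}$ plus the number of edges left uncovered by $C$, is \emph{not} minimised only when $C$ is a minimum vertex cover, so $\Slater(\prof{r})$ is \emph{not} in bijection with the minimum vertex covers. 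For a single edge $\SBs v_1,v_2 \SEs$ the ballot with $u_1=u_2=1$ and $y_e=1$ (so $C=\emptyset$, one uncovered edge) attains the optimal value $1$ alongside the two genuine covers $\SBs v_1 \SEs$ and $\SBs v_2 \SEs$. The reduction nevertheless survives, because the final equivalence only needs two facts: (i) every minimum vertex cover $C$ yields a minimising outcome (set $y_e=0$ everywhere), which is immediate; and (ii) every minimising $C$ with $v^{*} \in C$ extends to a minimum vertex cover containing $v^{*}$ --- adding one endpoint of each uncovered edge to $C$ gives a vertex cover of size at most $\Card{C}$ plus the number of uncovered edges, which equals the optimum, hence equals the minimum vertex cover size, and this cover still contains $v^{*}$. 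Replace the bijection claim by this short repair and your argument is correct.
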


%

The following result refers to the notion of majority consistency
(see, e.g.,~\citealt{LangSlavkovik14}).
A profile~$\prof{r}$ is \emph{majority consistent}
(with respect to an integrity constraint~$\Gamma$)
if the majority outcome~$m_{\prof{r}}$ is consistent with~$\Gamma$.
A judgment aggregation procedure is
\emph{majority consistent} if for each integrity constraint~$\Gamma$ and
each profile~$\prof{r}$ that is majority consistent (w.r.t.~$\Gamma$),
the procedure outputs all and only those complete ballots
that agree with the (partial) ballot~$m_{\prof{r}}$.

\begin{theorem}
\label{thm:krom-majority-consistent}
For all judgment aggregation procedures~$F$
that are majority consistent, e.g.,~$F \in \SBs \Kemeny, \Slater \SEs$, 
\Outcome{F} is polynomial-time solvable
when~$\Gamma \in \Krom$.
\end{theorem}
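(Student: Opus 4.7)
The plan is to establish that every profile of rational ballots is majority consistent whenever $\Gamma \in \Krom$, after which the decision problem reduces to a single 2SAT-satisfiability check. The central ingredient is the classical median-closure property of 2CNF formulas: whenever $r_1,\dotsc,r_p \in \RRR(\III,\Gamma)$ and $p$ is odd, the coordinate-wise majority of $r_1,\dotsc,r_p$ also satisfies $\Gamma$.

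First I would prove this median-closure property directly. Let $C$ be an arbitrary clause of $\Gamma$. If $C = (l)$ is a unit clause then every rational ballot must set $l$ to true, and hence so does the majority. If $C = (l_1 \vee l_2)$, then the sets $A = \SB i \SM r_i(l_1) = 1 \SE$ and $B = \SB i \SM r_i(l_2) = 1 \SE$ satisfy $A \cup B = [p]$ by rationality of each $r_i$, so $|A| + |B| \geq p$; since $p$ is odd, at least one of the two sets strictly exceeds $p/2$, and the corresponding literal is therefore set to true by $m_{\prof{r}}$.

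Second, I would handle even-length profiles via a duplication trick. Given $\prof{r} = (r_1,\dotsc,r_p)$ with $p$ even, consider the augmented profile $\prof{q} = (r_1,\dotsc,r_p,r_1)$, which has odd length. Because the strict-majority threshold $\lfloor p/2 \rfloor + 1$ coincides for $p$ and $p+1$ ballots, the complete ballot $m_{\prof{q}}$ agrees with $m_{\prof{r}}$ on every issue where the latter is not $\star$, and at the remaining (tied) issues it simply adopts the value of $r_1$. The median-closure argument shows that $m_{\prof{q}}$ is rational, so $m_{\prof{r}}$ can always be extended to a rational ballot and is thus consistent with $\Gamma$.

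Finally, since every profile is majority consistent, for any majority-consistent $F$ the set $F(\prof{r})$ consists exactly of the complete rational ballots extending $m_{\prof{r}}$. Answering \Outcome{F} on a query partial ballot $s$ therefore reduces to checking whether $\Gamma$ has a satisfying assignment that simultaneously agrees with $m_{\prof{r}}$ and with $s$, which amounts to fixing some variables of a 2CNF formula and testing 2SAT satisfiability -- a polynomial-time task. The main obstacle is the median-closure lemma itself; once it is in place, the remaining reduction to 2SAT is essentially bookkeeping.
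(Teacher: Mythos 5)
Your proposal is correct and follows essentially the same route as the paper: both rest on the observation that for a $2$-clause satisfied by every rational ballot, at least one of its two literals must be supported by a strict majority (a pigeonhole/counting argument), so the majority outcome is $\Gamma$-consistent, after which majority consistency of $F$ reduces \Outcome{F} to a 2SAT satisfiability check. The only cosmetic difference is that you establish closure of the Krom models under coordinate-wise majority for complete ballots (handling even-length profiles by duplicating one ballot), whereas the paper argues directly that the partial majority outcome falsifies no entailed $2$-clause and is therefore extendable.
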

\begin{proof}
The general idea behind this proof is 
to use the property that when~$\Gamma \in \Krom$,
the majority outcome~$m_{\prof{r}}$ is always $\Gamma$-consistent.
Let~$(\III,\Gamma,\prof{r},s)$ be an instance
of~\Outcome{F} with~$\Gamma \in \Krom$.
Let~$\prof{r} = (r_1,\dotsc,r_p)$.
We consider the majority outcome~$r^{*} = m_{\prof{r}}$.

We show that the partial ballot~$r^{*}$ is consistent with~$\Gamma$.
Suppose, to derive a contradiction, that~$r^{*}$ is inconsistent with~$\Gamma$.
Then there must be some clause~$(l_1 \vee l_2)$ of size~$2$
such that~$\Gamma \models (l_1 \vee l_2)$ and~$r^{*}$ sets both~$l_1$
and~$l_2$ to false.
By definition of~$r^{*}$, then a strict majority of the ballots in~$\prof{r}$ set~$l_1$
to false, and a strict majority of the ballots in~$\prof{r}$ set~$l_2$ to false.
By the pigeonhole principle then there must be some ballot~$r_i$ in~$\prof{r}$
that sets both~$l_1$ and~$l_2$ to false.
However, since~$\Gamma \models (l_1 \vee l_2)$, we get that~$r_i$ does
not satisfy~$\Gamma$, which is a contradiction with our assumption that all
ballots in the profile satisfy~$\Gamma$.
Thus, we can conclude that~$r^{*}$ is consistent with~$\Gamma$.

Since~$F$ is majority consistent, we know that~$F(\prof{r})$
contains all ballots that are consistent with both~$r^{*}$ and~$\Gamma$.
Since~$\Gamma \in \Krom$, deciding if~$F(\prof{r})$ contains
a ballot that is consistent with~$s$ can be done in polynomial time.
\end{proof}

We continue with the \MaxHamming{} procedure
for which computing outcomes is not tractable when restricted
to Krom formulas nor when restricted to definite Horn formulas.

\begin{proposition*}
\label{prop:outcome-maxhamming}
\Outcome{\MaxHamming} is \ThetaP{2}-hard even when restricted
to the case where~$\Gamma = \top$.
\end{proposition*}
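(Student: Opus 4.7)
The plan is to establish both membership and matching hardness in $\ThetaP{2}$. Membership is routine: the optimal max-Hamming radius $d^{*}$ is an integer between~$0$ and~$|\III|$, so one can binary-search for it with $O(\log |\III|)$ queries to an NP oracle, followed by one additional NP query that tests whether some $r^{*}$ with $d_{\mtext{max,H}}(r^{*},\prof{r}) \le d^{*}$ agrees with~$s$. The interesting part is the lower bound.

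For hardness, I would reduce from a canonical $\ThetaP{2}$-complete problem of the flavour: given a CNF formula $\varphi(x_1,\dotsc,x_n)$ and a distinguished variable~$x_1$, does some minimum-Hamming-weight satisfying assignment of $\varphi$ set $x_1 = 1$? From such an instance I would build an \Outcome{\MaxHamming} instance $(\III,\top,\prof{r},s)$ where $\III$ consists of the variables of $\varphi$ together with polynomially many padding issues, the integrity constraint is trivial, and the partial ballot~$s$ simply fixes $x_1 = 1$.

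The profile~$\prof{r}$ would contain two kinds of ballots. An ``origin'' ballot, set to~$0$ on the $\varphi$-variables and carrying a fixed padding pattern, is designed so that its Hamming distance to any candidate~$r^{*}$ equals, up to a constant offset, the Hamming weight of $r^{*}$ restricted to the $\varphi$-variables. In addition, for each clause~$C$ of~$\varphi$ I would include a ``clause ballot'' whose padding is chosen so that its Hamming distance to~$r^{*}$ stays strictly below a threshold~$T$ whenever $r^{*}$ satisfies~$C$, and strictly exceeds~$T$ otherwise. Choosing~$T$ just above the maximum possible value of the weight term ensures that minimising the max-Hamming distance first forces $r^{*}$ to satisfy $\varphi$ and only then drives the weight down; hence the ballots in $\MaxHamming(\prof{r})$ are exactly the minimum-weight satisfying assignments of~$\varphi$, and one of them agrees with~$s$ iff some minimum-weight satisfying assignment sets $x_1 = 1$.

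The main obstacle is the calibration of the padding. The clause-ballot distances must lie in a regime where an unsatisfied clause incurs a cost that strictly exceeds every possible value of the weight term, yet all satisfying assignments keep their clause distances within a narrow band whose width is precisely the spread of possible weights on the $\varphi$-variables. This balancing is the standard but delicate bookkeeping that appears in Closest-String-style reductions, and it is the only technical step of the argument that requires real care; once it is in place, correctness and polynomial-time computability of the reduction are immediate.
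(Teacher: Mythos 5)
Your membership sketch is fine, and your source problem (a minimum-weight-model variant) is indeed $\ThetaP{2}$-complete, so the overall plan is legitimate in outline. The gap is in the clause gadget, and it is not ``delicate bookkeeping'' --- the calibration you defer to is impossible as described. Whether $r^{*}$ satisfies a clause~$C$ changes $d_{\mtext{H}}(r^{*},s_C)$ by at most the number of variables occurring in~$C$ (a constant after normalization), whereas the contribution of the \emph{remaining} $\varphi$-variables to that same distance varies with~$r^{*}$ by up to roughly~$n$, and the contribution of the padding issues is not a constant you control: it depends on how~$r^{*}$ sets the padding issues, which the optimizer chooses. Hence no fixed threshold~$T$ can separate ``$C$ satisfied'' from ``$C$ violated'' uniformly over candidates of varying weight; writing the clause-ballot distance as (weight of~$r^{*}$ off the clause) $+$ (clause contribution) $+$ (padding disagreement), keeping the satisfied case below the origin-ballot distance and pushing the violated case above~$T$ at a low-weight assignment impose contradictory requirements on the same padding term. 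The same constant-size gap also defeats the intended lexicographic behaviour: the all-zeroes candidate violates clauses at a cost only $O(1)$ above its (minimal) weight term, so it can beat every satisfying assignment of non-trivial weight, and $\MaxHamming(\prof{r})$ need not contain any satisfying assignment at all.

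The paper's proof is structured precisely to dodge this. It reduces from the parity problem (is the maximum number of true variables in a model of~$\varphi$ odd?) in Wagner style: it precomputes~$n$ independent 1-in-3-SAT instances~$\psi_i$ on disjoint issue sets, so that all weight optimization is pushed into~$n$ parallel NP instances and the max-Hamming instance only has to detect how many of them are satisfiable. Each~$\psi_i$ is handled by the gadget of Lemma~\ref{lem:max-hamming}, which (i)~uses complementary bit-pattern ballot pairs over doubled variables~$y_j,y'_j$ to force exactly one of each pair true, so every relevant candidate has the \emph{same} weight rather than a weight to be minimized; (ii)~uses 1-in-3 semantics so the per-clause distance contribution is exact rather than a range; and (iii)~copies the issues $2\log n + 3m$ times and rotates the ballots across the copies so that every ballot attains the same distance to an optimum, amplifying a gap of~$2$ per copy into a usable one. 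To rescue your plan you would need at least a weight-normalization device of type~(i) and a distance-equalization device of type~(iii); at that point you are essentially rebuilding the paper's lemma.
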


\Outcome{\MaxHamming} restricted to the case where~$\Gamma = \top$
coincides with a problem known as \textsc{Closest String}
for binary alphabets
(see, e.g.,~\citealt{LiMaWang02}).
To the best of our knowledge, this is the first time that the exact
complexity of (this variant of) this problem has been identified.
\Outcome{\MaxHamming} is also very similar to the problem
of computing outcomes for the minimax rule in approval voting
\cite{BramsKilgourSanver04}.

\begin{corollary}
\label{cor:defhorn-krom-maxhamming}
\Outcome{\MaxHamming} is \ThetaP{2}-hard even when restricted
to the case where~$\Gamma \in \DefHorn \cap \Krom$.
\end{corollary}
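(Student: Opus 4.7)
The plan is to deduce the result directly from the starred Proposition~\ref{prop:outcome-maxhamming}, which already yields $\ThetaP{2}$-hardness of $\Outcome{\MaxHamming}$ when $\Gamma = \top$. The first observation is that the empty conjunction $\top$ is vacuously both a definite Horn formula and a Krom formula (it contains no clause that could violate either restriction), so strictly speaking the corollary is an immediate specialization. Since one might prefer a nontrivial $\Gamma$, I would also give an explicit reduction that produces a formula with at least one clause.

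The reduction I have in mind is the following. Given an instance $(\III,\top,\prof{r},s)$ of $\Outcome{\MaxHamming}$ on which hardness was established, introduce a single fresh issue~$y \notin \III$, let $\III' = \III \cup \SBs y \SEs$, and take $\Gamma' = (y)$. The formula $\Gamma'$ is a unit clause with exactly one positive literal and size one, so $\Gamma' \in \DefHorn \cap \Krom$. Extend every ballot~$r_i \in \prof{r}$ to a ballot~$r_i'$ on~$\III'$ by setting~$r_i'(y) = 1$; let $\prof{r}'$ be the resulting profile, and let $s'$ be the partial ballot obtained from $s$ by setting $y$ to $1$ as well. Since~$\Gamma'$ forces~$y=1$, every rational ballot on~$\III'$ assigns~$y=1$, and for any two rational ballots~$r,r'$ of the new instance we have $d_{\mtext{H}}(r,r') = d_{\mtext{H}}(r{\restriction}_\III,r'{\restriction}_\III)$. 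In particular, for every rational ballot $r^*$ on $\III'$ and every $r_i' \in \prof{r}'$, $d_{\mtext{H}}(r^*,r_i') = d_{\mtext{H}}(r^*{\restriction}_\III,r_i)$, so $d_{\mtext{max,H}}(r^*,\prof{r}') = d_{\mtext{max,H}}(r^*{\restriction}_\III,\prof{r})$. Hence $\MaxHamming(\prof{r}')$ consists exactly of the ballots $r^* \cup \SBs y \mapsto 1 \SEs$ with $r^* \in \MaxHamming(\prof{r})$ (where on the left the integrity constraint is $\top$ over $\III$, and on the right it is $\Gamma'$ over $\III'$). Thus $s$ agrees with some outcome of the original instance iff $s'$ agrees with some outcome of the new one, and the reduction is clearly polynomial.

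The only thing to verify is that the added unit clause does not distort the optimization being performed, and this is immediate: all feasible ballots agree on~$y$, so the coordinate~$y$ contributes zero to every pairwise Hamming distance among rational ballots. There is no genuine obstacle here; the corollary is primarily a restatement of Proposition~\ref{prop:outcome-maxhamming} under a syntactic restriction on~$\Gamma$, showing that even inside the very limited fragment $\DefHorn \cap \Krom$ the hardness is preserved.
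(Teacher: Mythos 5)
Your proposal is correct and takes essentially the same route as the paper: the corollary is stated without a separate proof precisely because $\top$ (the empty CNF) vacuously belongs to $\DefHorn \cap \Krom$, so the \ThetaP{2}-hardness of Proposition~\ref{prop:outcome-maxhamming} applies verbatim. Your supplementary unit-clause reduction (adding a fresh issue $y$ with $\Gamma' = (y)$ forced to true in every ballot, which leaves all Hamming distances among rational ballots unchanged) is sound but not required.
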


Finally, we consider the procedure \Tideman{},
for which computing outcomes is tractable for both
Krom and Horn formulas.

\begin{theorem}
\label{thm:outcome-tideman-tractable}
Let~$\CCC$ be a class of propositional formulas
(or Boolean circuits) with
the following two properties:
\begin{itemize}
  \item $\CCC$ is closed under instantiation, i.e.,
    for any~$\Gamma \in \CCC$ and any partial truth
    assignment~$\alpha : \Var{\Gamma} \rightarrow \SBs 0,1 \SEs$
    it holds that~$\Gamma[\alpha] \in \CCC$; and
  \item satisfiability of formulas in~$\CCC$
    is polynomial-time solvable.
\end{itemize}
Then \Outcome{\Tideman} is polynomial-time
solvable when restricted to the case
where~$\Gamma \in \CCC$.
\end{theorem}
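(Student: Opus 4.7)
The plan is to turn the inductive definition of \Tideman{} directly into a polynomial-time algorithm, using the two hypotheses on~$\CCC$ only at the single step that requires reasoning about~$\Gamma$.

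First I would compute the majority strengths~$\ms(\prof{r},l)$ for every literal~$l \in \Lit{\III}$ directly from the profile, and then sort~$\Lit{\III}$ into the sequence~$l_1,\dotsc,l_{2n}$ induced by~$<_{\prof{r}}$ (together with the fixed tie-breaking order~$<_{\mtext{tb}}$). Both steps are clearly polynomial. Next I would construct the partial assignments~$s_0,s_1,\dotsc$ exactly as in the definition of the rule, arriving at~$r^{*}$. Finally I would check whether the partial ballot~$s$ given in the instance agrees with~$r^{*}$, which is immediate.

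The only nontrivial substep is, at each iteration~$i$, deciding whether~$s'_i$ is consistent with~$\Gamma$---that is, whether~$\Gamma \wedge s'_i$ is satisfiable. Here I would use both properties of~$\CCC$ in sequence: closure under instantiation gives~$\Gamma[s'_i] \in \CCC$, and polynomial-time satisfiability of formulas in~$\CCC$ then decides the question in polynomial time. Since there are only~$O(|\Lit{\III}|)$ such tests along the construction, the overall running time stays polynomial in the input size.

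I do not expect a real obstacle: correctness is essentially automatic since the algorithm literally implements the definition of the rule, and the running-time bound is an immediate consequence of the two assumptions on~$\CCC$. The one small point I would verify when writing up the full proof is that the partial assignments~$s'_i$ arising during the construction are assignments to (a subset of)~$\Var{\Gamma}$, so that the closure hypothesis applies verbatim; any auxiliary variables of~$\Gamma$ that do not appear among the issues simply remain unassigned, and~$\Gamma[s'_i]$ is still a formula (or circuit) in~$\CCC$ on which the polynomial-time satisfiability test can be invoked.
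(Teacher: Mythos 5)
Your proposal is correct and follows essentially the same route as the paper's own proof sketch: simulate the iterative definition of the ranked agenda procedure, using closure under instantiation plus polynomial-time satisfiability of~$\CCC$ to perform each consistency check on~$\Gamma[s'_i]$, for a linear number of iterations. The extra details you supply (computing and sorting majority strengths, the final agreement check against~$s$, and the remark about unassigned auxiliary variables) are consistent with and slightly more explicit than the paper's sketch.
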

\begin{proof}[Proof (sketch)]
Let~$\CCC$ be a class of propositional formulas that satisfies the
conditions stated above, and let~$\Gamma \in \CCC$.
We can then compute~$\Outcome{\Tideman} = \SBs r^{*} \SEs$
by directly using the iterative definition of~$r^{*}$ given in the
description of the {ranked agenda procedure}.
This definition iteratively constructs partial
ballots~$s_0,\dotsc,s_{2n}$.
Ballot~$s_0$ is the empty ballot,
and for each~$i > 0$, ballot~$s_i$ is constructed from~$s_{i-1}$ by using only
the operations of instantiating the integrity constraint and
checking satisfiability of the resulting formula.
Due to the properties of~$\CCC$, these operations are all
polynomial-time solvable.
Thus, constructing~$r^{*} = s_{2n}$ can be done
in polynomial time.
\end{proof}

\begin{corollary}
\label{cor:outcome-tideman-tractable}
For each~$\CCC \in \SBs \Krom, \Horn \SEs$,
\Outcome{\Tideman} is polynomial-time
solvable when restricted to the case
where~$\Gamma \in \CCC$.
\end{corollary}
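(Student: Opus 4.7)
The plan is to derive the corollary as a direct application of Theorem~\ref{thm:outcome-tideman-tractable}. It suffices to verify, for each~$\CCC \in \SBs \Krom, \Horn \SEs$, the two hypotheses of that theorem: (i)~closure under instantiation and (ii)~polynomial-time satisfiability.

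First, I would check closure under instantiation. For~$\CCC = \Krom$, fix a 2CNF formula~$\Gamma$ and a partial assignment~$\alpha$. Each clause~$(l_1 \vee l_2)$ of~$\Gamma$ either becomes~$\top$ (when some~$l_i$ is set to true by~$\alpha$), shrinks to a unit clause (when exactly one literal is set to false), remains unchanged (when neither literal's variable is in the domain of~$\alpha$), or collapses to the empty clause (when both literals are set to false). In every case the resulting formula~$\Gamma[\alpha]$ is a conjunction of clauses of length at most~$2$, so~$\Gamma[\alpha] \in \Krom$. For~$\CCC = \Horn$, the analogous argument works: instantiation only deletes literals from clauses or removes clauses entirely, so it cannot increase the number of positive literals in any clause. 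Hence every clause of~$\Gamma[\alpha]$ still has at most one positive literal, and~$\Gamma[\alpha] \in \Horn$.

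Second, I would invoke the classical tractability results for satisfiability of these fragments: 2SAT is polynomial-time solvable (e.g., by the strongly-connected-components algorithm of Aspvall, Plass, and Tarjan), and Horn-SAT is polynomial-time solvable (via unit propagation, as shown by Dowling and Gallier). Together with closure under instantiation, these facts verify both hypotheses of Theorem~\ref{thm:outcome-tideman-tractable}, and the corollary follows immediately.

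I do not expect a real obstacle here; the only subtlety worth a remark is the convention regarding unit and empty clauses under instantiation, which must be considered as part of the class (with the empty clause interpreted as unsatisfiable so that the SAT check correctly rejects~$\Gamma[\alpha]$). Since both~\Krom{} and~\Horn{} are typically defined to allow clauses of all sizes up to the prescribed bound, this is merely a matter of bookkeeping and does not threaten the argument.
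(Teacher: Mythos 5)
Your proposal is correct and matches the paper's intent exactly: the corollary is stated as an immediate consequence of Theorem~\ref{thm:outcome-tideman-tractable}, relying precisely on the facts that \Krom{} and \Horn{} are closed under instantiation and admit polynomial-time satisfiability checking. Your explicit verification of these two hypotheses (including the bookkeeping remark about unit and empty clauses) is exactly the argument the paper leaves implicit.
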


An overview of the complexity results
that we established in this section
can be found in Table~\ref{table:krom-horn-results}.

%
%
%

\begin{table}[!htb]
  \centering
  
  \begin{small}
  \begin{tabular}{@{} c |@{\quad}p{4.2cm} @{} } \toprule
       $F$ & complexity of \Outcome{F} \\
       & restricted to~$\Horn$ / $\DefHorn$ \\
      \midrule
      \Kemeny & \ThetaP{2}-c \hfill (Proposition~\ref{prop:defhorn-kemeny}) \\[3pt]
      \Slater & \ThetaP{2}-c \hfill (Proposition~\ref{prop:defhorn-slater}) \\[3pt]
      \MaxHamming & \ThetaP{2}-c \hfill (Corollary~\ref{cor:defhorn-krom-maxhamming}) \\[3pt] 
      \Tideman & in P \hfill (Corollary~\ref{cor:outcome-tideman-tractable}) \\ 
    \midrule
%
%
    \midrule
       $F$ & complexity of \Outcome{F} \\
       & restricted to~$\Krom$ \\
      \midrule
      \Kemeny & in P \hfill (Theorem~\ref{thm:krom-majority-consistent}) \\[3pt]
      \Slater & in P \hfill (Theorem~\ref{thm:krom-majority-consistent}) \\[3pt]
      \MaxHamming & \ThetaP{2}-c \hfill (Corollary~\ref{cor:defhorn-krom-maxhamming}) \\[3pt] 
      \Tideman & in P \hfill (Corollary~\ref{cor:outcome-tideman-tractable}) \\ 
    \bottomrule
  \end{tabular}
  \end{small}
  
  \caption{The computational complexity of outcome determination
  for several procedures~$F$ restricted to the case
  where~$\Gamma \in \Krom$,
  the case where~$\Gamma \in \Horn$,
  and the case where~$\Gamma \in \DefHorn$.}
  \label{table:krom-horn-results}
  \vspace{-10pt}
\end{table}

The results that we obtained for Horn formulas can all be
straightforwardly extended to the fragment of renamable Horn formulas---%
e.g., the fragment of renamable Horn formulas satisfies the
requirements of Theorem~\ref{thm:outcome-tideman-tractable}.
A propositional formula~$\varphi$ is \emph{renamable Horn}
if there is a set~$R \subseteq \Var{\varphi}$ of variables
such that~$\varphi$ becomes Horn when all literals over~$R$
are complemented.


\section{Boolean Circuits in DNNF}

Next, we consider the case where the integrity
constraints are restricted to Boolean circuits in Decomposable
Negation Normal Form (DNNF).
This is a class of Boolean circuits studied in the area
of knowledge compilation.
We illustrate how this class of circuits is useful
for judgment aggregation.

\subsection{Knowledge Compilation}

\emph{Knowledge compilation} (see, e.g.,~\citealt{DarwicheMarquis02}, \citealt{Darwiche14}, \citealt{Marquis15}) refers to a collection of approaches
for solving reasoning problems in the area of artificial intelligence and knowledge
representation and reasoning that are computationally intractable
in the worst-case asymptotic sense.
These reasoning problems typically involve knowledge in the form of
a Boolean function---often represented as a propositional formula.
The general idea behind these approaches is to split the reasoning process
into two phases: (1)~compiling the knowledge into a different
format that allows the reasoning problem to be solved efficiently,
and (2)~solving the reasoning problem using the compiled knowledge.
Since the entire reasoning problem is computationally intractable,
at least one of these two phases must be intractable.
Indeed, typically the first phase does not enjoy performance
guarantees on the running time---%
upper bounds on the size of the compiled knowledge
are often desired instead.
One of the advantages of this methodology is that one can reuse
the compiled knowledge for many instances,
which could lead to a smaller overall running time.

A prototypical example of a problem studied in the setting of knowledge
compilation is that of \emph{clause entailment}
(see, e.g.,~\citealt{DarwicheMarquis02},
~\citealt{CadoliDoniniLiberatoreSchaerf02}).
In this problem, one is given a knowledge base,
say in the form of a propositional formula~$\varphi$ in CNF,
together with a clause~$\delta$.
The question is to decide whether~$\varphi \models \delta$.
This problem is \co{\NP}-complete in general.
The knowledge compilation approach to solve this problem would be
to firstly compile the CNF formula~$\varphi$ into an equivalent
expression in a different format.
For example, one could consider the formalism of Boolean circuits
in \emph{Decomposable Negation Normal Form (DNNF)}
(or \emph{DNNF circuits}, for short).

DNNF circuits are a particular class of Boolean circuits in
\emph{Negation Normal Form (NNF)}.
A Boolean circuit~$C$ in NNF is a direct acyclic graph with a single root
(a node with no ingoing edges) where each leaf is labelled
with~$\top$,~$\bot$,~$x$ or~$\neg x$ for a propositional variable~$x$,
and where each internal node is labelled with~$\wedge$ or~$\vee$.
(An arc in the graph from~$N_1$ to~$N_2$ indicates that~$N_2$ is a child
node of~$N_1$.)
The set of propositional variables occurring in~$C$ is denoted
by~$\Var{C}$.
For any truth assignment~$\alpha : \Var{C} \rightarrow \SBs 0,1 \SEs$, we define
the truth value~$C[\alpha]$ assigned to~$C$ by~$\alpha$ in the usual
way, i.e., each node is assigned a truth value based on its label and
the truth value assigned to its children, and the truth value assigned to~$C$
is the truth value assigned to the root of the circuit.
DNNF circuits are Boolean circuits in NNF that satisfy the additional
property of decomposability.
A circuit~$C$ is \emph{decomposable} if for each conjunction in the circuit,
the conjuncts do not share variables.
That is, for each node~$d$ in~$C$ that is labelled with~$\wedge$ and for any
two children~$d_1,d_2$ of this node, it holds that~$\Var{C_1} \cap \Var{C_2} = \emptyset$,
where~$C_1,C_2$ are the subcircuits of~$C$ that have~$d_1,d_2$ as root,
respectively.
An example of a DNNF circuit is given
in Figure~\ref{fig:example-dnnf}.

\begin{figure}[htp]
  \begin{center}
    \begin{tikzpicture}[yscale=1.2]
      \node[] (x1) at (0,0) {$x_1$};
      \node[] (-x1) at (1.5,0) {$\neg x_1$};
      \node[] (x2) at (3,0) {$x_2$};
      \node[] (-x2) at (4.5,0) {$\neg x_2$};
      \node[] (and1) at (1.5,0.75) {$\wedge$};
      \node[] (and2) at (3,0.75) {$\wedge$};
      \node[] (or) at (2.25,1.125) {$\vee$};
      \draw[->] (x1) -- (and1);
      \draw[->] (x2) -- (and1);
      \draw[->] (-x1) -- (and2);
      \draw[->] (-x2) -- (and2);
      \draw[->] (and1) -- (or);
      \draw[->] (and2) -- (or);
    \end{tikzpicture}
  \end{center}
  \vspace{-5pt}
  \caption{An example of a DNNF circuit.}
  \label{fig:example-dnnf}
\end{figure}
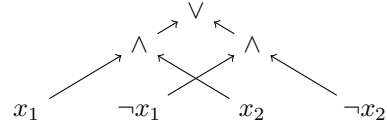

The problem of clause entailment can be solved in polynomial time
when the propositional knowledge is given as a DNNF circuit
\cite{DarwicheMarquis02}.
Moreover, every CNF formula can be translated to an equivalent
DNNF circuit---without guarantees on the size of the circuit.
Thus, one could solve the problem of clause entailment by
first compiling the CNF formula~$\varphi$ into an equivalent
DNNF circuit~$C$
(without guarantees on the running time or size of the result)
and then solving~$C \models \delta$
in time polynomial in~$|C|$.

Next, we will show how representation languages such as
DNNF circuits can be used in the setting of Judgment Aggregation,
and we will argue how Judgment Aggregation
can benefit from the approach of first compiling knowledge
(without performance guarantees) before using the compiled knowledge
to solve the initial problem.

\subsection{Algebraic Model Counting}

We will use the technique of algebraic model counting
\cite{KimmigVandenBroeckDeRaedt17}
to execute several judgment aggregation procedures efficiently
using the structure of DNNF circuits.
Algebraic model counting is a generalization of the problem
of counting models of a Boolean function
that uses the addition
and multiplication operators of a commutative semiring.

\begin{definition}[Commutative semiring]
A \emph{semiring} is a structure~$(\AAA,\oplus,\otimes,e^{\oplus},e^{\otimes})$,
where:
\begin{itemize}
  \item addition~$\oplus$ is an associative and commutative binary operation
    over the set~$\AAA$;
  \item multiplication~$\otimes$ is an associative binary operation
    over the set~$\AAA$;
  \item $\otimes$ distributes over~$\oplus$;
  \item $e^{\oplus} \in \AAA$ is the neutral element of~$\oplus$,
    i.e., for all~$a \in \AAA$,~$a \oplus e^{\oplus} = a$;
  \item $e^{\otimes} \in \AAA$ is the neutral element of~$\otimes$,
    i.e., for all~$a \in \AAA$,~$a \otimes e^{\otimes} = a$; and
  \item $e^{\oplus}$ is an annihilator for~$\otimes$,
    i.e., for all~$a \in \AAA$,~$e^{\oplus} \otimes a = a \otimes e^{\oplus} = e^{\oplus}$.
\end{itemize}
When~$\otimes$ is commutative, we say that the semiring is commutative.
When~$\oplus$ is idempotent, we say that the semiring is idempotent.
\end{definition}

\begin{definition}[Algebraic model counting]
Given:
\begin{itemize}
  \item a Boolean function~$f$ over a set~$\III$ of propositional variables;
  \item a commutative semiring~$(\AAA,\oplus,\otimes,e^{\oplus},e^{\otimes})$, and
  \item a labelling function~$\lambda : \Lit{\III} \rightarrow \AAA$ mapping literals
    over the variables in~$\III$ to values in the set~$\AAA$,
\end{itemize}
the task of \emph{algebraic model counting (AMC)} is to
compute:
\[ \bm{A}(f) =
  \bigoplus\limits_{\alpha : \III \rightarrow \SBs 0,1 \SEs \atop f(\alpha) = 1}
  \bigotimes\limits_{l \in \Lit{\III} \atop \lambda(l) = 1}
  \lambda(l). \]
\end{definition}

We can solve the task of algebraic model counting efficiently
for DNNF circuits when the semiring satisfies
an additional condition.

\begin{definition}[Neutral~$(\oplus,\alpha)$]
Let~$(\AAA,\oplus,\otimes,e^{\oplus},e^{\otimes})$ be a semiring,
and let~$\lambda : \Lit{\III} \rightarrow \AAA$ be a labelling function
for some set~$\III$ of propositional variables.
A pair~$(\oplus,\lambda)$ is \emph{neutral} if
for all~$x \in \III$ it holds that~$\lambda(x) \oplus \lambda(\neg x) = e^{\otimes}$.
\end{definition}

\begin{theorem}[{\citealt{KimmigVandenBroeckDeRaedt17}, Thm~5}]
\label{thm:amc}
When~$f$ is represented as a DNNF circuit,
and the semiring~$(\AAA,\oplus,\otimes,e^{\oplus},e^{\otimes})$
and the labelling function~$\lambda$ have the properties
that (i)~$\oplus$ is idempotent,
and (ii)~$(\oplus,\lambda)$ is neutral,
then the algebraic model counting problem is polynomial-time solvable---%
when given~$f$ and~$\lambda$ as input,
and when the operations of addition~($\oplus$) and multiplication~($\otimes$)
over~$\AAA$ can be performed in polynomial time.
\end{theorem}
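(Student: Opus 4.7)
The plan is to prove the theorem by a standard bottom-up dynamic-programming evaluation of the DNNF circuit. For each node $N$ in the circuit, I would compute a semiring value $v(N) \in \AAA$ in a single post-order traversal: for a leaf labelled with literal $l$, set $v(l) = \lambda(l)$; for leaves $\top$ and $\bot$, set $v(\top) = e^{\otimes}$ and $v(\bot) = e^{\oplus}$; for an internal AND-node with children $N_1,\dotsc,N_k$, set $v(N) = v(N_1) \otimes \dotsb \otimes v(N_k)$; and for an OR-node, set $v(N) = v(N_1) \oplus \dotsb \oplus v(N_k)$. A DNNF circuit has polynomially many nodes, each visited once with a polynomial number of semiring operations, so the overall running time is polynomial provided $\oplus$ and $\otimes$ are each polynomial-time computable, which is given by hypothesis.

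Correctness would be proved by structural induction, with the target invariant being that $v(N)$ equals the algebraic model count of the sub-circuit rooted at $N$, suitably normalized to the full variable set $\III$ via neutrality. The AND case is the place where decomposability does the work: since the children's variable sets are pairwise disjoint, each satisfying assignment of the sub-circuit splits uniquely into independent partial assignments of the children, and distributivity of $\otimes$ over $\oplus$ then turns the product of the children's sums into the required sum of products. This step is mechanical once decomposability is invoked correctly.

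The OR case is where the two side conditions on the semiring enter. Without determinism, one assignment may witness several children, and such an assignment appears multiple times as an identical summand in $v(N_1) \oplus \dotsb \oplus v(N_k)$; idempotence of $\oplus$ collapses these duplicates so that each model contributes exactly once, in line with the AMC definition. Moreover, DNNF need not be smooth, so different children of an OR-node may mention different subsets of $\III$. Neutrality, $\lambda(x) \oplus \lambda(\neg x) = e^{\otimes}$, guarantees that summing over both extensions of a variable $x$ not mentioned in a sub-circuit multiplies the contribution by $e^{\otimes}$ and therefore leaves it untouched, so that values computed on different variable scopes can be $\oplus$-combined consistently. The same neutrality argument handles variables in $\III \setminus \Var{C}$ at the root.

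I expect the main obstacle to be formulating the inductive invariant for $v(N)$ cleanly in the non-smooth, non-deterministic case: it must be stated in a way that is invariant under enlarging the ``scope'' of a node to a superset of the variables actually occurring below $N$, and must compose correctly across AND-nodes (via decomposability), OR-nodes (via idempotence), and the root (via neutrality for missing variables). Once such an invariant is pinned down, each of the three semiring/circuit conditions is used exactly once, and the polynomial-time bound is immediate from the size of the circuit.
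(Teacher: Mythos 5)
The paper does not prove this statement itself---it is imported verbatim from Kimmig, Van den Broeck and De Raedt (2017, Theorem~5)---so there is no in-paper proof to compare against. Your sketch is the standard and correct argument for that result: a single bottom-up evaluation, with decomposability plus distributivity justifying the $\otimes$-combination at AND-nodes, idempotence of $\oplus$ absorbing the duplicate contributions of assignments witnessed by several children of an OR-node (since the circuit need not be deterministic), and neutrality of $(\oplus,\lambda)$ making the node values invariant under enlarging the variable scope (since the circuit need not be smooth, and the root need not mention all of $\III$); each hypothesis is invoked exactly where it is needed, and the polynomial bound is immediate from the circuit size.
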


We will use the result of Theorem~\ref{thm:amc} to show that
outcome determination for several judgment aggregation procedures
is tractable for the case where~$\Gamma$ is a DNNF circuit.
To do so, we will consider the following commutative, idempotent semiring
(also known as the \emph{max-plus algebra}).
We let~$\AAA = \ZZ \cup \SBs -\infty, \infty \SEs$,
we let $\oplus = \max$, $\otimes = +$, 
$e^{\oplus} = -\infty$, and~$e^{\otimes} = 1$.
Whenever we have a labelling function~$\alpha$ such
that~$(\oplus,\lambda)$ is neutral---%
i.e., such that~$\max(\lambda(x),\lambda(\neg x)) = 0$
for each~$x \in \III$---we satisfy the conditions
of Theorem~\ref{thm:amc}.

\begin{theorem}
\label{thm:dnnf-kemeny-slater}
\Outcome{\Kemeny} and \Outcome{\Slater}
are polynomial-time computable
when~$\Gamma$ is a DNNF circuit.
\end{theorem}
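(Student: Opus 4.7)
The plan is to cast both procedures as algebraic model counting problems over the max-plus semiring $(\ZZ \cup \SBs -\infty, \infty \SEs, \max, +, -\infty, 0)$ and invoke Theorem~\ref{thm:amc}. For \Kemeny{}, I would first rewrite the objective in per-variable form: letting $c(x) = \bCard{\SB r_i \in \prof{r} \SM r_i(x) = 1 \SE}$ and $p = |\prof{r}|$, one has $\Kemeny(\prof{r}) = \argmax_{r \in \RRR(\III,\Gamma)} \sigma(r)$ where $\sigma(r) = \sum_{x : r(x)=1} c(x) + \sum_{x : r(x)=0}(p - c(x))$. Next I would pick the shifted labeling $\lambda(x) = c(x) - m(x)$ and $\lambda(\neg x) = (p - c(x)) - m(x)$ with $m(x) = \max(c(x), p - c(x))$; by construction $\max(\lambda(x),\lambda(\neg x)) = 0$, so $(\max,\lambda)$ is neutral. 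Theorem~\ref{thm:amc} then produces $\bm{A}(\Gamma) = \max_{r \in \RRR(\III,\Gamma)} \sigma(r) - \sum_x m(x)$ in polynomial time.

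For \Slater{}, I would first compute the majority outcome $m_{\prof{r}}$ in linear time, then use the labeling $\lambda(l) = 0$ if $l$ agrees with $m_{\prof{r}}$ at its underlying variable (or that variable carries $\star$) and $\lambda(l) = -1$ otherwise. Then $\sum_{l : r(l)=1} \lambda(l)$ equals the negative Hamming distance between $r$ and $m_{\prof{r}}$, so maximizing it is the same as minimizing $d_{\mtext{H}}(r, m_{\prof{r}})$; and $\max(\lambda(x), \lambda(\neg x)) = 0$ at every variable, securing neutrality. To decide the outcome determination problem with a partial ballot $s$, I would exploit that DNNF is closed under conditioning: replacing every leaf labelled $x$ (resp.~$\neg x$) for a variable $x$ fixed by $s$ with $\top$ or $\bot$ according to $s(x)$ yields a DNNF circuit $\Gamma_s$ whose models are precisely the $\Gamma$-models extending $s$. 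Running AMC on $\Gamma_s$ with the restricted labeling, and adding back the contributions of the literals fixed by $s$, yields the optimum over ballots agreeing with $s$; the answer is yes iff this equals the optimum over all of $\RRR(\III,\Gamma)$.

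The main obstacle I anticipate is the bookkeeping around turning the natural per-variable scores into neutral labelings without losing sight of the original objective: the constant shift $\sum_x m(x)$ for \Kemeny{} (and its analogue for \Slater{}) must be tracked carefully, and then reconciled with the contributions of literals fixed by $s$ when computing the restricted optimum. Once this accounting is set up correctly, the full algorithm reduces to a constant number of AMC calls on DNNF circuits of polynomial size, each polynomial by Theorem~\ref{thm:amc}, so the overall procedure runs in polynomial time.
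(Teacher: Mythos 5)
Your proposal is correct and follows essentially the same route as the paper: both reduce \Outcome{\Kemeny} and \Outcome{\Slater} to algebraic model counting over the max-plus semiring, shift the per-literal scores so that $(\max,\lambda)$ is neutral, invoke Theorem~\ref{thm:amc}, and decide the instance by comparing the AMC value on $\Gamma$ with the value on $\Gamma$ conditioned by $s$. If anything, your explicit bookkeeping of the shift constants and of the contributions of the literals fixed by $s$ is slightly more careful than the paper's own write-up, which leaves that accounting implicit.
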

\begin{proof}
We prove the statement for \Outcome{\Kemeny}.
The case for \Outcome{\Slater} is analogous.
Let~$(\III,\Gamma,\prof{r},s)$ be an instance of \Outcome{\Kemeny}.
We solve the problem by reducing it to the problem of algebraic model
counting.
For~$(\AAA,\oplus,\otimes,e^{\oplus},e^{\otimes})$, we use the
max-plus algebra described above.
We construct the labelling function~$\lambda$ as follows.
For each~$x \in \III$, we count the number~$n_{x,1}$ of
ballots~$r \in \prof{r}$ such that~$r(x) = 1$
and we count the number~$n_{x,0}$ of ballots~$r \in \prof{r}$
such that~$r(x) = 0$.
That is, we let~$n_{x,0}$ and~$n_{x,1}$ be the majority strength
of~$\neg x$ and~$x$, respectively, in the profile~$\prof{r}$.
We pick a constant~$c_x$ such
that~$\max \SBs n'_{x,0}, n'_{x,1} \SEs = 0$
where~$n'_{x,0} = n_{x,0} + c_x$
and~$n'_{x,1} = n_{x,1} + c_x$.
We then let~$\lambda(x) = n'_{x,1}$ and~$\lambda(\neg x) = n'_{x,0}$.
This ensures that~$(\oplus,\lambda)$ satisfies the condition of
neutrality
(i.e., that~$\lambda(x) \oplus \lambda(\neg x) = e^{\otimes}$
for each~$x \in \III$).

This choice of~$(\AAA,\oplus,\otimes,e^{\oplus},e^{\otimes})$
and~$\lambda$ has the property that the
ballots~$r^{*} \in \Kemeny(\prof{r})$ are exactly
those complete ballots~$r^{*}$ that satisfy~$\Gamma$
and for which holds that~$\bm{A}(\Gamma) =
\bigotimes\nolimits_{l \in \Lit{\III}, r^{*}(l) = 1} \lambda(l)$.
That is, the set~$\Kemeny(\prof{r})$ consists of those
rational ballots that achieve the solution of the
algebraic model counting problem~$\bm{A}(\Gamma)$.
We can solve the instance of decision problem~\Outcome{\Kemeny}
by solving the algebraic model counting problem twice:
once for~$\Gamma$ and once for~$\Gamma[s]$.
The instance is a yes-instance
if and only if~$\bm{A}(\Gamma) = \bm{A}(\Gamma[s])$.
By Theorem~\ref{thm:amc},
this can be done in polynomial time.

To make this algorithm work for the case of \Outcome{\Slater},
one only needs to adapt the values of~$n_{x,0}$ and~$n_{x,1}$.
Instead of setting~$n_{x,0}$ and~$n_{x,1}$ to the majority
strength of~$\neg x$ and~$x$, respectively, we let~$n_{x,0} = 0$
if a strict majority of ballots~$r \in \prof{r}$ have that~$r(x) = 1$,
and we let~$n_{x,0} = 1$ otherwise.
Similarly, we let~$n_{x,1} = 0$ if a strict majority of
ballots~$r \in \prof{r}$ have that~$r(x) = 0$,
and we let~$n_{x,0} = 1$ otherwise.
\end{proof}

Representing the integrity constraint as a DNNF circuit makes
it possible to perform more tasks efficiently than just the
decision problem \Outcome{F}.
For example, the algorithms for algebraic model counting
can be used to produce a DNNF circuit that represents
the set~$F(\prof{r})$ of outcomes, allowing further operations
to be carried out efficiently.

\begin{theorem}
\label{thm:dnnf-reversal-scoring}
\Outcome{\ReversalScoring}
is polynomial-time computable
when~$\Gamma$ is a DNNF circuit.
\end{theorem}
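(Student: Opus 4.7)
The plan is to extend the algebraic model counting recipe of Theorem~\ref{thm:dnnf-kemeny-slater} by prepending a preprocessing phase that computes all reversal scores. Both phases rely on the fact that DNNF is closed under conditioning on a literal: substituting~$l = \bot$ and simplifying yields a DNNF circuit~$\Gamma[\ol{l}]$ of size at most~$|\Gamma|$.

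\emph{Phase 1 (scores).} For each ballot~$r_i \in \prof{r}$ and each literal~$l \in \Lit{\III}$, I would compute~$s_{\mtext{R}}(r_i, l)$ via AMC over the tropical (min-plus) semiring~$(\ZZ \cup \SBs +\infty \SEs, \min, +, +\infty, 0)$, which is commutative, idempotent, and supports polynomial-time operations. Use the labelling~$\lambda_{r_i}(l) = 0$ if~$r_i(l) = 1$ and~$\lambda_{r_i}(l) = 1$ otherwise. Neutrality~$\min(\lambda_{r_i}(x), \lambda_{r_i}(\neg x)) = 0 = e^{\otimes}$ is immediate, and for any rational ballot~$r'$ the literals it satisfies contribute exactly~$d_{\mtext{H}}(r_i, r')$ to the~$\otimes$-product; hence~$\bm{A}(\Gamma[\ol{l}]) = s_{\mtext{R}}(r_i, l)$ by Theorem~\ref{thm:amc}, so the~$O(|\prof{r}|\cdot |\III|)$ calls run in polynomial total time.

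\emph{Phase 2 (optimisation).} Aggregate~$S(l) = \sum_{r_i \in \prof{r}} s_{\mtext{R}}(r_i, l)$; the reversal scoring rule picks those~$r \in \RRR(\III,\Gamma)$ maximising~$\sum_{l : r(l)=1} S(l)$. This is structurally identical to the optimisation carried out inside the proof of Theorem~\ref{thm:dnnf-kemeny-slater}: use the max-plus semiring with the shifted labelling~$\lambda(x) = S(x) - c_x$,~$\lambda(\neg x) = S(\neg x) - c_x$ where~$c_x = \max(S(x), S(\neg x))$, which is neutral, and conclude that~$(\III, \Gamma, \prof{r}, s)$ is a yes-instance iff~$\bm{A}(\Gamma) = \bm{A}(\Gamma[s])$---both computable in polynomial time by Theorem~\ref{thm:amc}.

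The main obstacle is bookkeeping in Phase~1: one must verify carefully that the tropical-semiring AMC on the conditioned circuit~$\Gamma[\ol{l}]$ really returns~$\min \SB d_{\mtext{H}}(r_i, r') \SM r' \in \RRR(\III,\Gamma),\ r'(l) = 0 \SE$, and that the tropical semiring meets every condition in Theorem~\ref{thm:amc}. Once this is in hand, Phase~2 is a direct reuse of the Kemeny/Slater construction with the aggregated reversal scores in place of majority strengths.
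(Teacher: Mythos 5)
Your proof follows essentially the same route as the paper: reuse the max-plus AMC algorithm of Theorem~\ref{thm:dnnf-kemeny-slater} with $n_{x,0},n_{x,1}$ replaced by aggregated reversal scores, which are themselves obtained by a further reduction to algebraic model counting on the DNNF circuit. Your Phase~1 (min-plus semiring on the conditioned circuit~$\Gamma[\ol{l}]$ with indicator labels, together with the bookkeeping you flag for the conditioned literal's own contribution) is precisely the sub-reduction whose details the paper explicitly omits, so this is a faithful and slightly more detailed version of the paper's argument.
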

\begin{proof}[Proof (sketch)]
The polynomial-time algorithm for \Outcome{\ReversalScoring}
is analogous to the algorithm described
for \Outcome{\Kemeny} described in the proof
of Theorem~\ref{thm:dnnf-kemeny-slater}.
The only modification that needs to be made to make this
algorithm work for \Outcome{\ReversalScoring} is
to adapt the numbers~$n_{x,0}$ and~$n_{x,1}$, for each~$x \in \III$.
Instead of identifying these numbers with the majority strength
of~$\neg x$ and~$x$, respectively,
we identify them with the total reversal score of~$x$ and~$\neg x$,
over the profile~$\prof{r}$.
That is, we let~$n_{x,0} = \sum\nolimits_{r \in \prof{r}} s_{\mtext{R}}(r,\neg x)$
and we let~$n_{x,1} = \sum\nolimits_{r \in \prof{r}} s_{\mtext{R}}(r,x)$.
For general propositional formulas~$\Gamma$, the reversal
scoring function~$s_{\mtext{R}}$ is NP-hard to compute.
However, since~$\Gamma$ is given as a DNNF circuit,
we can compute the scoring function~$s_{\mtext{R}}$,
and thereby~$n_{x,0}$ and~$n_{x,1}$, in polynomial time---%
by using another reduction to the problem of algebraic model counting.
We omit the details of this latter reduction.
\end{proof}

Intuitively, the results of Theorems~\ref{thm:dnnf-kemeny-slater}
and~\ref{thm:dnnf-reversal-scoring} are a consequence of the fact
that DNNF circuits allow polynomial-time weighted maximal model
computation, and that the judgment aggregation
procedures~\Kemeny{}, \Slater{} and \ReversalScoring{} are based
on weighted maximal model computation.
These results can therefore also straightforwardly be extended to
other judgment aggregation procedures that are based on
weighted maximal model computation.

\subsection{Other Results}

We can extend some previously established
results (Proposition~\ref{prop:outcome-maxhamming}
and Theorem~\ref{thm:outcome-tideman-tractable})
to the case of DNNF circuits.

\begin{corollary}
\label{cor:dnnf-tideman}
\Outcome{\Tideman}
is polynomial-time computable when restricted
to the case where~$\Gamma$ is a DNNF circuit.
\end{corollary}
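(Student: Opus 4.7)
The plan is straightforward: invoke Theorem~\ref{thm:outcome-tideman-tractable} and verify that the class $\CCC$ of DNNF circuits satisfies its two hypotheses, namely (i)~closure under instantiation and (ii)~polynomial-time satisfiability testing. If both hold, the corollary is immediate.

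For closure under instantiation, I would argue that applying a partial truth assignment $\alpha : \Var{C} \rightarrow \SBs 0,1 \SEs$ to a DNNF circuit $C$ yields another DNNF circuit $C[\alpha]$. Concretely, one replaces each literal leaf over a variable $x \in \mathrm{dom}(\alpha)$ by the constant $\top$ or $\bot$ according to $\alpha(x)$; the underlying DAG and all internal labels are unchanged. This keeps the circuit in NNF, and decomposability is preserved because the variable set at any subcircuit can only shrink under instantiation: if the children $C_1, C_2$ of an $\wedge$-gate in $C$ satisfied $\Var{C_1} \cap \Var{C_2} = \emptyset$, then the corresponding $\wedge$-gate in $C[\alpha]$ has children with variable sets $\Var{C_1} \setminus \mathrm{dom}(\alpha)$ and $\Var{C_2} \setminus \mathrm{dom}(\alpha)$, which are still disjoint.

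For polynomial-time satisfiability, I would appeal to the well-known fact (a standard consequence of the DNNF tractability results of~\citealt{DarwicheMarquis02}) that deciding whether a DNNF circuit has a satisfying assignment can be done in linear time by a single bottom-up pass. Each leaf is trivially handled; each $\vee$-gate is satisfiable iff some child is; and each $\wedge$-gate is satisfiable iff every child is, where the soundness of the latter rule crucially relies on decomposability (so satisfying assignments of the children can be combined without variable clashes).

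With both hypotheses verified, the corollary follows directly from Theorem~\ref{thm:outcome-tideman-tractable}. The only non-routine point is the decomposability argument in step~(i), and even that is really just the observation that restriction never creates new variable overlaps; so I do not anticipate any serious obstacle in turning this plan into a formal proof.
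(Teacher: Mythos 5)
Your proposal is correct and matches the paper's intended argument exactly: the corollary is stated as a direct consequence of Theorem~\ref{thm:outcome-tideman-tractable}, applied with $\CCC$ taken to be the class of DNNF circuits, and the paper leaves the verification of the two hypotheses (closure under instantiation and polynomial-time satisfiability, both standard facts about DNNF from \citealp{DarwicheMarquis02}) implicit. Your explicit check of those two conditions is exactly the right filling-in of that gap.
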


\begin{corollary}
\label{cor:dnnf-maxhamming}
\Outcome{\MaxHamming}
is \ThetaP{2}-complete when restricted
to the case where~$\Gamma$ is a DNNF circuit.
\end{corollary}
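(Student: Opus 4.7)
The plan is to obtain both directions of the claimed completeness essentially for free, by inheritance from results already in the paper. For hardness, I would simply observe that Proposition~\ref{prop:outcome-maxhamming} already gives \ThetaP{2}-hardness in the extreme case $\Gamma = \top$. The trivially-true constraint $\top$ can be written as a (one-leaf) DNNF circuit, and hence the same reduction lands inside the DNNF-restricted version of \Outcome{\MaxHamming} without any modification.

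For the upper bound I would invoke the general \ThetaP{2} membership of \Outcome{\MaxHamming} recorded in Table~\ref{table:bg-results}. Since DNNF circuits are a special case of the Boolean circuits already allowed as integrity constraints, any $\P^{\NP[\log]}$ algorithm for the general problem works verbatim: perform a binary search over $d \in \{0,1,\dots,|\III|\}$ using $O(\log |\III|)$ calls to an NP oracle that decides ``does there exist $r^{*} \in \RRR(\III,\Gamma)$ with $d_{\mtext{max,H}}(r^{*},\prof{r}) \leq d$?'' to pin down the optimal radius $d^{*}$, and then make one further NP query asking whether some such $r^{*}$ of distance exactly $d^{*}$ agrees with the partial ballot $s$. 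Each oracle call is in NP because a witness $r^{*}$ can be checked in polynomial time: evaluating a DNNF circuit on a total assignment is trivially polynomial, and the Hamming-distance bounds to each $r_i \in \prof{r}$ are checked directly.

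There is essentially no obstacle here: the corollary exists to record that restricting to DNNF does \emph{not} lower the complexity of \Outcome{\MaxHamming}, in contrast with the positive DNNF results for \Kemeny, \Slater, \ReversalScoring\ and \Tideman\ obtained via algebraic model counting. The only thing worth emphasising in the writeup is the asymmetry with those positive results: even though DNNF makes satisfiability and weighted maximal-model computation tractable, the outer $\min\max$ structure of the Max-Hamming objective still forces the \ThetaP{2} lower bound inherited from the unconstrained case.
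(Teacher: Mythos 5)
Your proposal is correct and matches the paper's intended argument exactly: hardness is inherited from Proposition~\ref{prop:outcome-maxhamming} because $\Gamma = \top$ is itself a (trivial) DNNF circuit, and membership in \ThetaP{2} carries over from the general case recorded in Table~\ref{table:bg-results}. The explicit binary-search oracle algorithm you sketch for the upper bound is a standard elaboration of that membership claim and adds nothing that conflicts with the paper.
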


A similar result for \Young{} follows from a result
that we will establish in the next section
(Proposition~\ref{prop:budget-young}).

\begin{corollary}
\label{cor:dnnf-young}
\Outcome{\Young}
is \ThetaP{2}-complete when restricted
to the case where~$\Gamma$ is a DNNF circuit.
\end{corollary}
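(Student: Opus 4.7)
The plan is to split the corollary into its two standard halves, $\ThetaP{2}$-membership and $\ThetaP{2}$-hardness, and to dispatch each by referring to an existing result rather than building a new reduction from scratch.

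For the upper bound I would simply inherit the general bound from Table~\ref{table:bg-results}: \Outcome{\Young} is already known to lie in $\ThetaP{2}$ for arbitrary integrity constraints, so the restriction to DNNF circuits is a fortiori in $\ThetaP{2}$. If a self-contained argument is wanted, the standard $\ThetaP{2}$ algorithm works. One performs $O(\log |\prof{r}|)$ parallel NP-queries to binary-search for the smallest number $d^{*}$ of ballots whose removal leaves a subprofile with a $\Gamma$-consistent majority outcome; the corresponding NP-query guesses a set of at most $d$ ballots to delete, computes the majority outcome on the remainder in polynomial time, and checks its consistency with $\Gamma$—for DNNF circuits this last check is in P, though for the upper-bound argument NP membership is already enough. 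A final NP-query then asks whether $d^{*}$ can be achieved by a deletion whose resulting majority outcome also agrees with the partial ballot~$s$, yielding a yes-instance iff the answer is yes.

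For the lower bound I would defer to Proposition~\ref{prop:budget-young} in the next section, which, as the text foreshadows, establishes $\ThetaP{2}$-hardness of \Outcome{\Young} in the participatory-budgeting instantiation. The only point to check is that the integrity constraints arising there can be represented as polynomial-size DNNF circuits, which is where the main obstacle lies: budget constraints are pseudo-Boolean inequalities of knapsack type, so one needs the costs appearing in the hardness construction to be small enough (typically in unary) that the constraint admits a polynomial-size OBDD. Since OBDDs are a subclass of DNNF circuits, this representability is then immediate, and the reduction underlying Proposition~\ref{prop:budget-young} directly transports to \Outcome{\Young} with $\Gamma$ a DNNF circuit.

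Combining the two halves yields the claimed $\ThetaP{2}$-completeness. The delicate step that I would actually write out carefully is the DNNF representability of the budget constraint used in the hardness reduction; the rest is appeal to earlier results.
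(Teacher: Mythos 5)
Your proposal is correct and takes essentially the same route as the paper: membership is inherited from the known general $\ThetaP{2}$ upper bound, and hardness is obtained by transporting Proposition~\ref{prop:budget-young} via the polynomial-size DNNF (FBDD) encoding of budget constraints from Theorem~\ref{thm:budget-dnnf-encoding}. The representability concern you flag is indeed the one point to check, and it is satisfied because the costs and total budget in that reduction (e.g.,~$B = 12m+1$) are polynomially bounded, so the encoding runs in time polynomial in~$B + \Card{\III}$.
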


An overview of the results established so far in this
section can be found in Table~\ref{table:dnnf-results}.

\begin{table}[!hbt]
  \centering
  
  \begin{small}
  \begin{tabular}{@{} c |@{\quad}p{4.0cm} @{} } \toprule
       $F$ & complexity of \Outcome{F} \\
      \midrule
      \Kemeny & in P \hfill (Theorem~\ref{thm:dnnf-kemeny-slater}) \\[3pt]
      \ReversalScoring & in P \hfill (Theorem~\ref{thm:dnnf-reversal-scoring}) \\[3pt]
      \Slater & in P \hfill (Theorem~\ref{thm:dnnf-kemeny-slater}) \\[3pt]
      \Young & \ThetaP{2}-c \hfill (Corollary~\ref{cor:dnnf-young}) \\[3pt] 
      \MaxHamming & \ThetaP{2}-c \hfill (Corollary~\ref{cor:dnnf-maxhamming}) \\[3pt]
      \Tideman & in P \hfill (Corollary~\ref{cor:dnnf-tideman}) \\ 
    \bottomrule
  \end{tabular}
  \end{small}
  
  \caption{The computational complexity of outcome determination
  for various procedures~$F$ restricted to the case where~$\Gamma$
  is a DNNF circuit.}
  \label{table:dnnf-results}
  \vspace{-10pt}
\end{table}

\subsection{A Compilation Approach}

The results of
Theorems~\ref{thm:dnnf-kemeny-slater}
and~\ref{thm:dnnf-reversal-scoring}
and Corollary~\ref{cor:dnnf-tideman} pave the way for
another approach towards finding cases where judgment aggregation
procedures can be performed efficiently.
The idea behind this approach is to compile the integrity constraint
into a DNNF circuit---regardless of whether this compilation process
enjoys a polynomial-time worst-case performance guarantee.
There are several off-the-shelf tools available that
compile CNF formulas into DNNF circuits using optimized methods
based on SAT solving algorithms
\cite{Darwiche04,MuiseMcIlraithBeckHsu12,OztokDarwiche14b}.
Since the class of DNNF circuits is expressively complete---i.e.,
every Boolean function can be expressed using a DNNF circuit---%
it is possible to compile any integrity constraint~$\Gamma$
into a DNNF circuit~$C_{\Gamma}$.

The downside is that
the circuit~$C_{\Gamma}$ could be of exponential size, or it could take
exponential time to compute it.
However, once the circuit~$C_{\Gamma}$ is computed and stored in
memory, one can use several judgment aggregation procedures
efficiently: \Kemeny{}, \Slater{}, \ReversalScoring{} and \Tideman{}.

Thus, this approach restricts the computational bottleneck to the
\emph{compilation phase}, before any judgments are solicited from the
individuals in the judgment aggregation scenario.
Once the compilation phase has been completed,
there are polynomial-time guarantees on the \emph{aggregation phase}
(polynomial in the size of the compiled DNNF circuit~$C_{\Gamma}$).

\subsection{CNF Formulas of Bounded Treewidth}

The tractability results for DNNF circuits can be leveraged to
get parameterized tractability results
for the case where the integrity constraint is a CNF formula
with a `treelike' structure.

\paragraph{Parameterized Complexity Theory \& Treewidth}

In order to explain the results that follow,
we briefly introduce some relevant concepts from
the theory of parameterized complexity.
For more details, we refer to textbooks on the topic
(see, e.g.,~\citealt{CyganEtAl15},
\citealt{DowneyFellows13}).
The central notion in parameterized complexity is
that of \emph{fixed-parameter tractability}---a notion of computational
tractability that is more lenient than the traditional
notion of polynomial-time solvability.
In parameterized complexity
running times are measured in terms of the input size~$n$
as well as a problem parameter~$k$.
Intuitively, the parameter is used to capture structure that is present
in the input and that can be exploited algorithmically.
The smaller the value of the problem parameter~$k$,
the more structure the input exhibits.
Formally, we consider \emph{parameterized problems} that
capture the computational task at hand as well as the choice
of parameter.
A parameterized problem~$Q$ is a subset of~$\Sigma^{*} \times \NN$
for some fixed alphabet~$\Sigma$.
An instance~$(x,k)$ of~$Q$ contains the problem input~$x \in \Sigma^{*}$
and the parameter value~$k \in \NN$.
A parameterized problem is \emph{fixed-parameter tractable}
there is a deterministic algorithm
that for each instance~$(x,k)$ decides
whether~$(x,k) \in Q$ and that runs in time~$f(k)\Card{x}^c$,
where~$f$ is a computable function of~$k$,
and~$c$ is a fixed constant.
Algorithms running within such time bounds are called \emph{fpt-algorithms}.
The idea behind these definitions is that fixed-parameter tractable running
times are scalable whenever the value of~$k$ is small.

A commonly used parameter is that of the treewidth of a graph.
Intuitively, the treewidth measures the extent to which a graph is
like a tree---trees and forests have treewidth~1,
cycles have treewidth~2, and so forth.
The notion of treewidth is defined as follows.
A \emph{tree decomposition} of a graph $G = (V,E)$ is a pair
$(\mathcal{T},(B_t)_{t \in T})$ where $\mathcal{T} = (T,F)$ is a tree
and $(B_t)_{t \in T}$ is a family of subsets of $V$ such that:
\begin{itemize}
  \item for every $v \in V$, the set $B^{-1}(v) = \SB t \in T \SM v \in B_t \SE$
    is nonempty and connected in $\mathcal{T}$; and
  \item for every edge $\SBs v,w \SEs \in E$, there is a $t \in T$ such that
    $v,w \in B_t$.
\end{itemize}
The \emph{width} of the decomposition $(\mathcal{T},(B_t)_{t \in T})$ is the
number $\max \SB \Card{B_t} \SM t \in T \SE - 1$.
The \emph{treewidth} of $G$ is the minimum of the widths
of all tree decompositions of $G$.
Let~$G$ be a graph and~$k$ a nonnegative integer.
There is an fpt-algorithm that computes
a tree decomposition of~$G$ of width~$k$ if it exists,
and fails otherwise \cite{Bodlaender96}.

\paragraph{Encoding Results}

We can then use results from the literature to establish tractability
results for computing outcomes of various judgment aggregation
procedures for integrity constraints whose variable interactions
have a treelike structure.
Let~$\Gamma = c_1 \wedge \dotsm \wedge c_m$ be a CNF formula.
The \emph{incidence graph of~$\Gamma$}
is the graph~$(V,E)$,
where~$V = \Var{\Gamma} \cup \SBs c_1,\dotsc,c_u \SEs$
and~$E = \SB \SBs c_j, x \SEs \SM 1 \leq j \leq m, x \in \Var{\Gamma},
\mtext{$x$ occurs in the clause~$c_j$} \SE$.
The \emph{incidence treewidth of~$\Gamma$} is defined as the
treewidth of the incidence graph of~$\Gamma$.

We can leverage the results of Theorems~\ref{thm:dnnf-kemeny-slater}
and~\ref{thm:dnnf-reversal-scoring}
and Corollary~\ref{cor:dnnf-tideman} to get fixed-parameter
tractability results for computing outcomes
of \Kemeny{}, \Slater{}, \ReversalScoring{} and \Tideman{}
for integrity constraints with small incidence treewidth.

\begin{proposition}[{\citealt{OztokDarwiche14},~%
\citealt{BovaCapelliMengelSlivovsky15}}]
Let~$\Gamma$ be a CNF formula of incidence treewidth~$k$.
Constructing a DNNF circuit~$\Gamma'$
that is equivalent to~$\Gamma$
can be done in fixed-parameter tractable time.
\end{proposition}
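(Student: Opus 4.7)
The plan is to follow the standard knowledge compilation approach for bounded-treewidth CNF via dynamic programming on a nice tree decomposition of the incidence graph. First, I would invoke Bodlaender's theorem to compute in fpt-time a tree decomposition $(\TTT,(B_t)_{t \in T})$ of the incidence graph of $\Gamma$ of width $k$, and then convert it into a nice tree decomposition of the same width in linear time, obtaining leaf, introduce-vertex, forget-vertex, and join nodes.

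Next, I would run a bottom-up dynamic program along this decomposition. For each node $t$, let $X_t$ be the propositional variables in $B_t$ and let $C_t$ be the clause-vertices in $B_t$. For each pair $(\alpha,S)$ with $\alpha : X_t \to \{0,1\}$ and $S \subseteq C_t$ recording which bag-clauses are already satisfied by the forgotten variables, I would maintain a DNNF circuit $D_{t,\alpha,S}$ whose variables are exactly those already forgotten in the subtree rooted at $t$, representing those partial completions that, together with $\alpha$, satisfy every clause already fully forgotten below $t$ and agree with $S$ on the bag-clauses. Since $|X_t|+|C_t|\le k+1$, there are at most $2^{k+1}$ states per node. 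The circuits are updated via the standard operations: an introduce-vertex node simply duplicates states for the two values of the new variable; introduce-clause records whether $\alpha$ already satisfies it; forget-variable $x$ appends a literal for $x$ to each child circuit and takes the disjunction over the two values of $x$; forget-clause discards states whose clause-bit is $0$; and a join node conjoins, for each pair of compatible states, the circuits coming from the two children. The final DNNF is the disjunction over all root-states whose $S$ certifies that every remaining bag-clause is satisfied.

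The main obstacle to verify is that the conjunction used at join nodes is decomposable, which is where the tree-decomposition structure is essential. By the connectedness property, any variable occurring in the variable-sets of both subtrees below a join node $t$ must still lie in $B_t$, hence has not yet been forgotten, and therefore is not among the variables of either child circuit $D_{t_1,\alpha,S_1}$ or $D_{t_2,\alpha,S_2}$. The two forgotten-variable sets are thus disjoint, so the conjunction is decomposable; decomposability is trivially preserved by the other operations. Summing over the $O(|\Gamma|)$ nodes of the nice decomposition, each contributing $2^{O(k)}$ circuits of polynomial size, the overall DNNF has size and construction time bounded by $2^{O(k)}\cdot|\Gamma|^{O(1)}$, yielding fixed-parameter tractability with parameter $k$.
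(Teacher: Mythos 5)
The paper does not prove this proposition at all---it is imported wholesale from Oztok--Darwiche (2014) and Bova et al.\ (2015)---so your proposal supplies an argument where the paper offers only a citation. Your argument is the standard one and is essentially sound: dynamic programming over a nice tree decomposition of the incidence graph with states $(\alpha,S)$, $2^{O(k)}$ states per node, and the crucial observation that the connectedness property of tree decompositions forces the forgotten-variable sets of the two subtrees below a join node to be disjoint, which is exactly what makes the conjunctions decomposable (and likewise for the literal appended at a forget-variable node, since that variable was in the bag and hence not yet in either child circuit). This matches the mechanism behind the cited results, which in fact prove something stronger (structured, deterministic DNNF of size $2^{O(k)}\cdot|\Gamma|$); your weaker conclusion is all the paper needs.

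One bookkeeping point deserves tightening. You declare that $S$ records the bag-clauses satisfied by the \emph{forgotten} variables, yet at an introduce-clause node you initialize its bit from $\alpha$, and at a forget-clause node you discard states whose bit is $0$. Under your stated semantics the latter is wrong: a clause still in the bag can be satisfied by a current bag variable via $\alpha$ (possibly one introduced after the clause), in which case its bit is legitimately $0$ but the state must be kept. Either redefine $S$ as ``satisfied by some variable assigned so far'' (updating bits at forget-variable time and checking only the bit at forget-clause time, with the bit correctly initialized to $0$ at introduce-clause since no variable of a clause can have been forgotten before the clause enters a bag), or keep your semantics and test ``bit is $1$ or $\alpha$ satisfies the clause'' when forgetting it. Also note that the final disjunction over root states only yields a circuit over all of $\mathrm{Var}(\Gamma)$ if the root bag is empty (or if you conjoin the literals of $\alpha$ on the remaining root-bag variables); this is standard but should be said. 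Neither issue affects the fixed-parameter tractable bound.
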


\begin{corollary}
The problems \Outcome{\Kemeny}, \Outcome{\Slater},
\Outcome{\ReversalScoring} and \Outcome{\Tideman}
are fixed-parameter tractable when parameterized by the
incidence treewidth of~$\Gamma$.
\end{corollary}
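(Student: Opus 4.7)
The plan is to obtain the result by directly composing the compilation proposition stated immediately above with the polynomial-time algorithms for DNNF integrity constraints established earlier in this section. In other words, the corollary is essentially a ``for free'' consequence once one checks that the compilation is semantics-preserving and that the bound on the compiled circuit propagates correctly through the downstream polynomial-time algorithms.

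Concretely, given an instance $(\III,\Gamma,\prof{r},s)$ of \Outcome{F} with $F \in \SBs \Kemeny, \Slater, \ReversalScoring, \Tideman \SEs$ and $\Gamma$ a CNF formula whose incidence treewidth is bounded by~$k$, I would first invoke the preceding proposition (\citealt{OztokDarwiche14}; \citealt{BovaCapelliMengelSlivovsky15}) to compute, in time $f(k) \cdot |\Gamma|^{O(1)}$ for some computable function~$f$, a DNNF circuit~$\Gamma'$ equivalent to~$\Gamma$. Since $\Gamma$ and $\Gamma'$ define the same Boolean function over $\III$, the set of rational ballots $\RRR(\III,\Gamma) = \RRR(\III,\Gamma')$ coincides, and hence so does the outcome $F(\prof{r})$; in particular $(\III,\Gamma,\prof{r},s)$ is a yes-instance of \Outcome{F} if and only if $(\III,\Gamma',\prof{r},s)$ is.

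Next, I would run the polynomial-time algorithm for the corresponding DNNF case: Theorem~\ref{thm:dnnf-kemeny-slater} for $\Kemeny$ and $\Slater$, Theorem~\ref{thm:dnnf-reversal-scoring} for $\ReversalScoring$, and Corollary~\ref{cor:dnnf-tideman} for $\Tideman$. This takes time polynomial in $|\Gamma'| + |\prof{r}| + |s|$. Since $|\Gamma'|$ is at most the running time of the compilation phase, i.e., $|\Gamma'| \leq f(k) \cdot |\Gamma|^{O(1)}$, the total running time is bounded by $g(k) \cdot (|\Gamma| + |\prof{r}| + |s|)^{O(1)}$ for some computable $g$, which is exactly the form required for fixed-parameter tractability with respect to the incidence treewidth of~$\Gamma$.

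There is no genuine obstacle here; the only point to verify carefully is that the equivalence provided by the compilation really does preserve the semantics needed for outcome determination (i.e., equality of the sets of rational ballots over~$\III$, not merely equisatisfiability), and that the size of the compiled circuit is controlled by the fpt running time of the compilation so that it can be fed as input to the polynomial-time DNNF algorithms without inflating the overall bound beyond fpt.
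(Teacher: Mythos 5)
Your proposal is correct and follows exactly the route the paper intends: compile the bounded-treewidth CNF into an equivalent DNNF circuit in fpt time via the cited proposition, then apply Theorems~\ref{thm:dnnf-kemeny-slater} and~\ref{thm:dnnf-reversal-scoring} and Corollary~\ref{cor:dnnf-tideman} to the compiled circuit, observing that equivalence preserves the set of rational ballots and that the circuit size is bounded by the fpt compilation time. The two points you flag for careful verification (semantic preservation and size propagation) are precisely the only things to check, and both hold.
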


\section{Case Study: Budget Constraints}

In this section, we illustrate how the results of the previous section can
contribute to providing a computational complexity analysis for an
application setting.
The setting that we consider as an example is that of budget constraints.
This setting is closely related to that of \emph{Participatory Budgeting}
(see, e.g.,~\citealt{BenadeNathProcacciaShah17}), where citizens propose
projects and vote on which projects get funded by public money.
In the setting that we consider,
each issue~$x \in \III$ represents whether or not some measure
is implemented. Each such measure has an implementation cost~$c_x$ associated
with it.
Moreover, there is a total budget~$B$ that cannot be exceeded---that is,
each ballot (individual or collective) can set a set of variables~$x$ to true
such that the cumulative cost of these variables is at most~$B$
(and set the remaining variables to false).
The integrity constraint~$\Gamma$ encodes that the total budget~$B$
cannot be exceeded by the total cost of the variables that are set to true.
(For the sake of simplicity, we assume that all costs and the total budget
are all positive integers.)
%

The concepts and tools from judgment aggregation are
useful and relevant in this setting.
This is witnessed, for instance, by the fact that simply taking a majority vote
will not always lead to a suitable collective outcome.
Consider the example where there are three measures that are each associated
with cost~$1$, and where there is a budget of~$2$.
Moreover, suppose that there are three individuals. The first individual votes
to implement measures~$1$ and~$2$; the second votes for measures~$1$ and~$3$,
and the third for~$2$ and~$3$.
Each of the individuals' opinions is consistent with the budget.
However, taking a majority measure-by-measure vote results in implementing
all three issues, which exceeds the budget.
(In other words, the individual opinions~$r_1,r_2,r_3$ are all rational,
whereas the collective majority opinion~$m_{\prof{r}}$ is not.)
This example is illustrated
in Figure~\ref{figure:ja-budget-example}---%
in this figure, we encode the budget constraint using
a DNNF circuit~$\Gamma$.

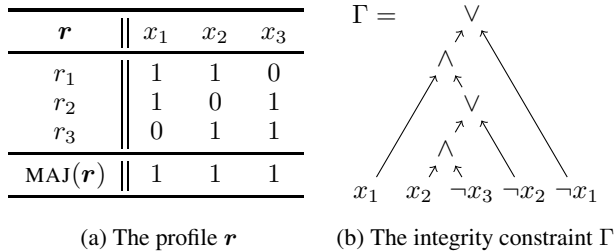
\begin{figure}[h!]
\centering
\begin{subfigure}{.5\linewidth}
  \centering
  \vspace{5pt}
  \begin{tabular}{c || c c c}
    \toprule
    $\prof{r}$ & $x_1$ & $x_2$ & $x_3$ \\
    \midrule
    $r_1$ & $1$ & $1$ & $0$ \\
    $r_2$ & $1$ & $0$ & $1$ \\
    $r_3$ & $0$ & $1$ & $1$ \\
    \midrule
    $\Majority(\prof{r})$ & $1$ & $1$ & $1$ \\
    \bottomrule
  \end{tabular}
  \vspace{5pt}
  \caption{The profile~$\prof{r}$}
  \label{figure:ja-budget-example-a}
\end{subfigure}%
\begin{subfigure}{0.5\linewidth}
\centering
\begin{tikzpicture}[xscale=0.7,yscale=0.8]
      \node[] (x1) at (0,0) {$x_1$};
      \node[] (-x1) at (4,0) {$\neg x_1$};
      \node[] (x2) at (1,0) {$x_2$};
      \node[] (-x2) at (3,0) {$\neg x_2$};
      \node[] (-x3) at (2,0) {$\neg x_3$};
      \node[] (and1) at (1.5,0.75) {$\wedge$};
      \node[] (or1) at (2,1.5) {$\vee$};
      \node[] (and2) at (1.5,2.25) {$\wedge$};
      \node[] (or2) at (2,3) {$\vee$};
      \draw[<-] (and1) -- (x2);
      \draw[<-] (and1) -- (-x3);
      \draw[<-] (or1) -- (and1);
      \draw[<-] (or1) -- (-x2);
      \draw[<-] (and2) -- (or1);
      \draw[<-] (and2) -- (x1);
      \draw[<-] (or2) -- (and2);
      \draw[<-] (or2) -- (-x1);
      \node[] at (0.15,3) {$\Gamma = $};
    \end{tikzpicture}
  \caption{The integrity constraint~$\Gamma$}
  \label{figure:ja-budget-example-b}
\end{subfigure}
  \caption{Example of an aggregation scenario
    with a budget constraint (for~$B = 2$
    and~$c_{x} = 1$ for all~$x \in \III$),
    where the budget constraint
    is represented as a DNNF circuit~$\Gamma$.}
  \label{figure:ja-budget-example}
  \vspace{-10pt}
\end{figure}

\subsection{Encoding into a Polynomial-Size DNNF Circuit}

To use the framework of judgment aggregation to model settings
with budget constraints, we need to encode budget constraints
using integrity constraints~$\Gamma$.
One can do this in several ways. We consider an encoding
using DNNF circuits (as in Figure~\ref{figure:ja-budget-example-b}).
Let~$\III$ be a set of issues, let~$\SBs c_x \SEs_{x \in \III}$ be
a vector of implementation costs, and let~$B \in \NN$ be a total budget.
We say that an integrity constraint~$\Gamma$ encodes
the budget constraint for~$\SBs c_x \SEs_{x \in \III}$ and~$B$
if for each complete ballot~$r : \III \rightarrow \SBs 0,1 \SEs$
it holds that~$r$ satisfies~$\Gamma$ if and only
if~$\sum\nolimits_{x \in \III, r(x) = 1} c_x \leq B$.

We can encode budget constraints efficiently using DNNF circuits
by expressing them as binary decision diagrams.
A \emph{binary decision diagram (BDD)} is a particular type of NNF circuit.
Let~$\Gamma$ be an NNF circuit. We say that a node~$N$ of~$\Gamma$
is a \emph{decision node} if (i)~it is a leaf or (ii)~it is
a disjunction node expressing~$(x \wedge \alpha) \vee (\neg x \wedge \beta)$,
where~$x \in \Var{\Gamma}$ and~$\alpha$ and~$\beta$
are decision nodes.
A binary decision diagram is an NNF circuit whose root is a decision
node.
A \emph{free binary decision diagram (FBDD)} is a BDD
that satisfies decomposability
(see, e.g.,~\citealt{DarwicheMarquis02},
\citealt{GergovMeinel94}).

\begin{theorem}
\label{thm:budget-dnnf-encoding}
For each~$\III$,~$\SBs c_x \SEs_{x \in \III}$ and~$B$,
we can construct a DNNF circuit~$\Gamma$ encoding the
budget constraint for~$\SBs c_x \SEs_{x \in \III}$ and~$B$
in time polynomial in~$B + \Card{\III}$.
\end{theorem}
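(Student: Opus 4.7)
The plan is to build the integrity constraint as a free binary decision diagram (FBDD) using pseudo-polynomial dynamic programming, and then note that FBDDs are a subclass of DNNF circuits. Fix an arbitrary ordering $x_1, \dotsc, x_n$ of the issues in $\III$. For each level $i \in \{1,\dotsc,n+1\}$ and each integer $b \in \{0,1,\dotsc,B\}$, I introduce one node $N_{i,b}$ intended to encode the Boolean function ``the assignment to $x_i, x_{i+1}, \dotsc, x_n$ has total cost at most $b$''. I also use two sink nodes labelled $\top$ and $\bot$.

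Next, I define the structure by downward recursion. For the base case, set $N_{n+1,b} = \top$ for every $b \geq 0$. For $i \leq n$ and each $b$, I construct $N_{i,b}$ as a decision node on $x_i$, that is, as the disjunction $(x_i \wedge A) \vee (\neg x_i \wedge N_{i+1,b})$, where the ``positive'' branch $A$ is defined to be $N_{i+1,b-c_{x_i}}$ if $c_{x_i} \leq b$, and $\bot$ otherwise. The root of $\Gamma$ is then $N_{1,B}$. The total number of nodes is at most $(n+1)(B+1) + 2$ and each node has bounded in-circuit size, so the whole construction runs in time polynomial in $|\III| + B$.

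For correctness, a straightforward induction on $i$ (downward from $n+1$) shows that $N_{i,b}$ evaluates to $1$ under an assignment to $x_i,\dotsc,x_n$ iff the total cost of the variables set to true is at most $b$; applied at the root this gives exactly the budget constraint. For the DNNF property, I verify that the circuit is in NNF (the only negations are at the leaves) and that it is decomposable: each $\wedge$-gate has the form $x_i \wedge N_{i+1,\cdot}$ or $\neg x_i \wedge N_{i+1,\cdot}$, and by construction the subcircuit rooted at any $N_{j,\cdot}$ mentions only variables $x_j, x_{j+1}, \dotsc, x_n$; thus the two conjuncts share no variables. This is essentially the standard fact that every FBDD is a DNNF circuit, which I would cite rather than reprove.

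The only real subtlety is cosmetic: the nodes $N_{i,b}$ with $b < 0$ must be replaced by $\bot$, and one should make sure that shared subnodes are reused (rather than copied) so that the $\wedge$-gates indeed reference a single subcircuit each, preserving both the DNNF property and the polynomial size bound. Beyond that, the proof is a routine dynamic programming argument; no ideas from the preceding complexity sections are required.
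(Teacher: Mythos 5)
Your proposal is correct and follows essentially the same construction as the paper: a layered FBDD built by dynamic programming over (issue index, budget), with decomposability following from the fact that the subcircuit at level~$i$ mentions only $x_i,\dotsc,x_n$. The only difference is cosmetic---you index nodes by the \emph{remaining} budget $b$ whereas the paper indexes by the cost $j$ accumulated so far (equivalently $b = B - j$)---and both arguments yield the same $O(\Card{\III}\cdot B)$ size bound.
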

\begin{proof}
We construct an FBDD~$\Gamma$ encoding the budget constraint
for~$\SBs c_x \SEs_{x \in \III}$ and~$B$ as follows.
Without loss of generality, suppose that~$c_x > 0$ for each~$x \in \III$.
Let~$\III = \SBs x_1,\dotsc,x_n \SEs$.
We introduce a decision node~$N_{i,j}$ for
each~$i \in \SBs 0,\dotsc,n \SEs$ and~$j \in \SBs 0,\dotsc,B \SEs$.
Take arbitrary~$i \in \SBs 0,\dotsc,n \SEs$ and~$j \in \SBs 0,\dotsc,B \SEs$.
If~$i = n$, we let~$N_{i,j} = \top$.
If~$i < n$, we distinguish two cases: either
(i)~$j' \leq B$ or (ii)~$j' > B$,
where~$j' = j + c_{x_i}$..
In case~(i), we let~$N_{i,j} = (x_i \wedge N_{i+1,j'}) \vee (\neg x_i \wedge N_{i+1,j})$.
In case~(ii), we let~$N_{i,j} = (x_i \wedge \bot) \vee (\neg x_i \wedge N_{i+1,j})$.
We let the root of the FBDD be the node~$N_{0,0}$---%
and we remove all nodes that are not descendants of~$N_{0,0}$.
Intuitively, the subcircuit rooted at~$N_{i,j}$ represents all truth assignments
to the variables~$x_{i+1},\dotsc,x_n$ that fit within a budget of~$B-j$.
For each node~$N_{i,j}$ it holds
that the variables in the leaves reachable from~$N_{i,j}$
are among~$x_{i+1},\dotsc,x_n$.
Therefore, we constructed an FBDD.
Moreover, each complete ballot~$r$ satisfies the circuit~$\Gamma$
if and only
if~$\sum\nolimits_{x \in \III, r(x) = 1} c_x \leq B$.
Thus,~$\Gamma$ is a DNNF circuit constructed in time
polynomial in~$B + \Card{\III}$
encoding the budget constraint
for~$\SBs c_x \SEs_{x \in \III}$ and~$B$.
\end{proof}

An example of a DNNF circuit resulting from the
encoding described in the proof of
Theorem~\ref{thm:budget-dnnf-encoding}---%
after some simplifications---%
can be found in Figure~\ref{figure:ja-budget-example-b}.

\subsection{Complexity Results}

Using the encoding result of
Theorem~\ref{thm:budget-dnnf-encoding},
we can establish polynomial-time solvability results
for computing outcomes for several judgment aggregation
procedures in the setting of budget constraints.

\begin{corollary}
\label{cor:budget-tractability}
\Outcome{\Kemeny}, \Outcome{\Slater},
\Outcome{\ReversalScoring}, 
and \Outcome{\Tideman}
are polynomial-time computable when restricted
to the case where~$\Gamma$ expresses a budget constraint.
\end{corollary}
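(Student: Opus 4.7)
The plan is to invoke Theorem~\ref{thm:budget-dnnf-encoding} as a preprocessing step that reduces the budget-constraint case to the DNNF case, and then directly apply the tractability results already established for DNNF integrity constraints. Concretely, given an instance of \Outcome{F} whose integrity constraint is specified by a set~$\III$ of issues, a cost vector~$\SBs c_x \SEs_{x \in \III}$, and a total budget~$B$, I would first construct a DNNF circuit~$\Gamma'$ encoding this budget constraint via the polynomial-time procedure of Theorem~\ref{thm:budget-dnnf-encoding}. Since~$\Gamma'$ represents exactly the same Boolean function as the budget constraint, the set~$\RRR(\III,\Gamma')$ of rational ballots, and hence~$F(\prof{r})$, is unchanged when~$\Gamma'$ is substituted for the original constraint in the instance.

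With~$\Gamma'$ in hand, I would then feed the modified instance~$(\III,\Gamma',\prof{r},s)$ into the polynomial-time algorithms already developed for DNNF inputs: Theorem~\ref{thm:dnnf-kemeny-slater} covers \Kemeny{} and \Slater{}, Theorem~\ref{thm:dnnf-reversal-scoring} handles \ReversalScoring{}, and Corollary~\ref{cor:dnnf-tideman} deals with \Tideman{}. Because the encoding step runs in polynomial time and produces a circuit whose size is polynomial in~$B + \Card{\III}$, and because each of these outcome-determination subroutines runs in time polynomial in the size of the DNNF circuit (and the profile), the composition is an overall polynomial-time algorithm for each of the four procedures. No new argument specific to budget constraints is needed; the work has all been done in the previous section.

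The one subtlety worth flagging, and essentially the only non-routine point, is the accounting of input size: Theorem~\ref{thm:budget-dnnf-encoding} guarantees a running time polynomial in~$B + \Card{\III}$, so the reduction is genuinely polynomial-time only under the standard participatory-budgeting convention that the costs~$c_x$ and the budget~$B$ are given in unary (or, equivalently, are polynomially bounded in the input size). I would state this convention explicitly in the corollary and then write the proof as a two-line chaining of the encoding theorem with each of the DNNF tractability results mentioned above.
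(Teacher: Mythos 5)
Your proposal is correct and follows exactly the paper's own argument: the paper's proof is a one-line chaining of Theorems~\ref{thm:dnnf-kemeny-slater}, \ref{thm:dnnf-reversal-scoring} and~\ref{thm:budget-dnnf-encoding} with Corollary~\ref{cor:dnnf-tideman}, precisely as you describe. Your additional remark about the running time being polynomial in~$B + \Card{\III}$ (and hence requiring~$B$ to be given in unary or polynomially bounded) is a legitimate refinement that the paper leaves implicit.
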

\begin{proof}
The result follows from Theorems~\ref{thm:dnnf-kemeny-slater},
\ref{thm:dnnf-reversal-scoring} and~\ref{thm:budget-dnnf-encoding},
and Corollary~\ref{cor:dnnf-tideman}.
\end{proof}


For the \Young{} and \MaxHamming{} procedures,
we obtain intractability results for the case of budget constraints---%
for both procedures computing outcomes is \ThetaP{2}-hard.

\begin{proposition*}
\label{prop:budget-young}
\Outcome{\Young} is \ThetaP{2}-hard when restricted
to the case where~$\Gamma$ expresses a budget constraint.
\end{proposition*}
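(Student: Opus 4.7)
The plan is to reduce from a canonical $\ThetaP{2}$-hard parity-of-optimum problem whose combinatorial core is already a single linear inequality, so that it has a chance of fitting inside the one inequality that a budget constraint affords. A convenient source is \textsc{Odd-Min-Knapsack}: given positive integers $w_1,\dotsc,w_n$, $v_1,\dotsc,v_n$ and $V$, decide whether the minimum of $\sum_{i\in S}w_i$ over subsets $S\subseteq[n]$ satisfying $\sum_{i\in S}v_i\geq V$ is odd. This problem is $\ThetaP{2}$-hard by a routine reduction from the $\NP$-hard min-knapsack decision problem combined with the standard parity encoding.

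Given such an instance, I would introduce one issue $x_i$ per item together with a constant number of gadget issues $y_0,y_1,\dotsc$, assign costs $c_{x_i}=v_i$ (and small constants to the gadgets), and set the budget to $B=(\sum_i v_i)-V$. The rational ballots are then exactly those whose set $S$ of $0$-valued item-issues satisfies $\sum_{i\in S}v_i\geq V$, i.e., the feasible sets of the source instance. The profile $\prof{r}$ would be designed so that (a)~the initial majority outcome $m_{\prof{r}}$ sets every $x_i$ and every gadget issue to $1$ and therefore violates $\Gamma$, and (b)~each $x_i$ has a margin engineered so that flipping its majority verdict from $1$ to $0$ requires exactly $w_i$ deletions (up to a common scaling). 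The gadget issues would be tuned so that their value after deletion records the parity of $\sum_{i\in S}w_i$ for the set $S$ of $x_i$'s that were flipped, and the partial ballot $s$ would then assert a fixed target parity.

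The central claim to verify is that, writing $d^{*}$ for the minimum number of deletions producing a complete rational majority outcome, one has $d^{*}=\min\SB \sum_{i\in S} w_i \SM S\ \mtext{feasible}\SE$ up to an additive constant, and that $\Young(\prof{r})$ is exactly the set of complete rational ballots that flip a $w$-minimum feasible $S$. The principal obstacle is the unavoidable cross-interference between flips: a single deleted ballot changes the margin of \emph{every} issue simultaneously, so there is no trivially disjoint collection of ``flip blocks''. The delicate step, which I would have to carry out in detail, is a flow/matching-style decoupling lemma showing that an optimal deletion set may always be taken to consist of per-item specialist voters whose contributions to different flips do not interfere, so that the minimum deletion count is indeed additive over the flipped set $S$. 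Once this lemma is in place, the reduction runs in polynomial time, the gadget-based parity readout makes the constructed $\Outcome{\Young}$ instance a yes-instance iff the \textsc{Odd-Min-Knapsack} instance is, and parity of the profile size is arranged by a single additional balancing voter so that the majority outcome after $d^{*}$ deletions is always complete.
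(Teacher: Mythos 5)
Your reduction is left with a genuine gap at exactly the point you flag as ``delicate'': the decoupling lemma asserting that the minimum number of deletions is additive, $d^{*}=\sum_{i\in S}w_i$ over the flipped set $S$, is not just unproven but is in tension with how the Young rule actually behaves. Under ballot deletion the majority threshold of \emph{every} issue moves at once: if an issue $x_{i'}$ starts with support $a$ out of $p$ and you delete $k$ ballots of which $a'$ support $x_{i'}$, then $x_{i'}$ flips to $0$ precisely when $a'$ exceeds (roughly) half of $k$. Consequently, deleting a ``specialist'' voter for item $i$ who does \emph{not} support $x_{i'}$ makes $x_{i'}$ strictly \emph{harder} to flip, while a specialist who does support $x_{i'}$ makes it easier. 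There is therefore no neutral way for per-item specialists to coexist, and with heterogeneous weights $w_i$ the cost of flipping a set $S$ will generically be sub- or super-additive depending on which items are combined. A second, related problem is your parity readout: the Young rule requires the post-deletion majority outcome to be \emph{complete}, so the parity of the number of deleted ballots is constrained by tie-avoidance; but in your construction that parity is $\sum_{i\in S}w_i \bmod 2$, i.e., the very quantity you are trying to decide, so a single balancing voter cannot fix completeness for both answers simultaneously. Finally, the $\ThetaP{2}$-hardness of your source problem \textsc{Odd-Min-Knapsack} is asserted rather than established; a Wagner-style argument is plausible but needs the usual monotone chain of instances, which is itself not entirely free.

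For contrast, the paper's proof avoids all of this by reducing from \MinVC{} (whose $\ThetaP{2}$-completeness it proves separately), i.e., from a \emph{unit-weight} minimization problem. Every issue is arranged to sit at a bare majority of $n+1$ out of $2n+1$ ballots, and deletions are forced (by a budget of high-cost dummy issues and a counting argument) to come in balanced pairs: one ballot from a block of $n$ ``vertex'' ballots and one from a block of $n$ ``threshold'' ballots that support every edge-issue. With this exact balance, an edge-issue $x_j$ loses its majority if and only if at least one deleted vertex-ballot is incident to $e_j$ --- the vertex-cover condition --- and the target vertex $v^{*}$ is read off from a single extra issue $y$ whose margin is tied to whether $r_1$ was deleted. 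The unit weights and the pairing discipline are precisely what make the ``cost of flipping'' well-defined and additive; this is the structural idea your proposal is missing, and without it (or a genuinely new decoupling argument) the reduction does not go through.
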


\begin{corollary}
\label{cor:budget-maxhamming}
\Outcome{\MaxHamming} is \ThetaP{2}-hard when restricted
to the case where~$\Gamma$ expresses
a budget constraint.
\end{corollary}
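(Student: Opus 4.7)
\bigskip

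\noindent\textbf{Proof proposal.} The plan is to derive this corollary directly from Proposition~\ref{prop:outcome-maxhamming}, which already gives $\ThetaP{2}$-hardness of \Outcome{\MaxHamming} for the trivial integrity constraint $\Gamma = \top$. All that is needed is to observe that the always-true constraint can itself be realized as a (non-restrictive) budget constraint while leaving the instance otherwise unchanged.

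Concretely, given an instance $(\III,\top,\prof{r},s)$ of \Outcome{\MaxHamming} with $\III = \SBs x_1,\dotsc,x_n \SEs$, I would produce the instance $(\III,\Gamma',\prof{r},s)$, where $\Gamma'$ encodes the budget constraint with implementation costs $c_{x} = 1$ for every $x \in \III$ and total budget $B = n$. By Theorem~\ref{thm:budget-dnnf-encoding}, such a $\Gamma'$ exists and is constructible in time polynomial in $n$ (in fact a trivial one always satisfied); more importantly, for every complete ballot $r : \III \rightarrow \SBs 0,1 \SEs$ we have $\sum\nolimits_{x \in \III, r(x)=1} c_x \leq n = B$, so $r$ satisfies $\Gamma'$. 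Hence $\RRR(\III,\Gamma') = \RRR(\III,\top) = \SBs 0,1 \SEs^{n}$, and the profile $\prof{r}$ is still a valid profile of rational ballots.

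Since the set of rational ballots, the profile, and the target partial ballot $s$ are all unchanged, and the \MaxHamming{} rule depends only on these, we have $\MaxHamming(\prof{r})$ computed with respect to $\Gamma'$ equal to $\MaxHamming(\prof{r})$ computed with respect to $\top$. Therefore the reduction preserves yes- and no-instances, and Proposition~\ref{prop:outcome-maxhamming} transfers the $\ThetaP{2}$-hardness to the budget-constrained case.

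I do not expect any real obstacle: the main (and essentially only) point is to verify that ``budget constraint'' is general enough to include the unrestricted case, so that the hardness already established for $\Gamma = \top$ can be imported verbatim. The only subtlety is the convention that costs and the budget are positive integers, which is why I take uniform unit costs and $B = n$ rather than $c_x = 0$ or $B = 0$.
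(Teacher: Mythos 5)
Your proposal is correct and matches the paper's argument: the paper's proof of Corollary~\ref{cor:budget-maxhamming} is exactly the one-line observation that the result follows directly from Proposition~\ref{prop:outcome-maxhamming}, since the unrestricted case $\Gamma = \top$ is realizable as a (vacuous) budget constraint. You have simply spelled out the details (unit costs, $B = n$) that the paper leaves implicit, including the sensible handling of the positive-integer convention.
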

\begin{proof}
The result follows directly from
Proposition~\ref{prop:outcome-maxhamming}.
\end{proof}

An overview of the complexity results
that we established in this section
can be found in Table~\ref{table:budget-results}.

\begin{table}[!hbt]
  \centering
  
  \begin{small}
  \begin{tabular}{@{} c |@{\quad}p{4.0cm} @{} } \toprule
       $F$ & complexity of \Outcome{F} \\
      \midrule
      \Kemeny & in P \hfill (Corollary~\ref{cor:budget-tractability}) \\[3pt] 
      \ReversalScoring & in P \hfill (Corollary~\ref{cor:budget-tractability}) \\[3pt] 
      \Slater & in P \hfill (Corollary~\ref{cor:budget-tractability}) \\[3pt] 
      \Young & \ThetaP{2}-c \hfill (Proposition~\ref{prop:budget-young}) \\[3pt]
      \MaxHamming & \ThetaP{2}-c \hfill (Corollary~\ref{cor:budget-maxhamming}) \\[3pt]
      \Tideman & in P \hfill (Corollary~\ref{cor:budget-tractability}) \\
    \bottomrule
  \end{tabular}
  \end{small}
  
  \caption{The computational complexity of outcome determination
  for various procedures~$F$ restricted to the case where~$\Gamma$
  is a budget constraint.}
  \label{table:budget-results}
  \vspace{-10pt}
\end{table}

\section{Directions for Future Research}

In this paper, we provided a set of initial results
for restricted languages for judgment aggregation,
but these results are only the tip
of the iceberg that is to be explored.
We outline some directions for interesting future work
on this topic.

One first direction is to establish the complexity of \Outcome{F}
for cases that are left open in this paper---%
for example, for \Young{} and \ReversalScoring{} for
the case of Krom and (definite) Horn formulas.
Another direction is to pinpoint the complexity of \Outcome{F}
for the languages that we considered for other judgment aggregation
rules studied in the literature
(see, e.g.,~\citealt{LangPigozziSlavkovikVanderTorreVesic17}).

Yet another direction is to extend tractability results obtained in this
paper---e.g., for Krom and Horn formulas---to formulas that are `close'
to Krom or Horn formulas.
One could use the notion of backdoors for this
(see, e.g.,~\citealt{GaspersSzeider12}).

Finally, further restricted languages of propositional formulas or Boolean
circuits need to be studied, to get a more complete picture of where
the boundaries of the expressivity-tractability balance lie in the setting
of judgment aggregation.
A good source for additional languages is the field of knowledge
compilation (see, e.g.,~\citealt{DarwicheMarquis02}, \citealt{Darwiche14}, \citealt{Marquis15}),
where many restricted languages have been studied with
respect to their expressivity and support for performing various
operations tractably.

\section{Conclusion}

In this paper, we initiated the hunt for representation languages for
the setting of judgment aggregation that strike a balance between
(1)~allowing relevant computational tasks to be performed efficiently
and (2)~being expressive enough to model interesting and relevant
application settings.
Concretely, we considered Krom and (definite) Horn formulas,
and we studied the class of Boolean circuits in DNNF.
We studied the impact of these languages on the
complexity of computing outcomes for a number of
judgment aggregation procedures studied in the literature.
Additionally, we illustrated the use of these languages
for a specific application setting:
voting on how to spend a budget.

\appendix
\section{Appendix: Preliminaries}

We give an overview of some notions from
propositional logic and computational complexity
that we use in the paper.

\subsection{Propositional Logic}

\balance

Propositional formulas are constructed from propositional variables
using the Boolean operators~$\wedge,\vee,\rightarrow$, and~$\neg$.
A \emph{literal} is a propositional variable~$x$ (a \emph{positive literal})
or a negated variable~$\neg x$ (a \emph{negative literal}).
A \emph{clause} is a finite set of literals,
not containing a complementary pair~$x$,~$\neg x$,
and is interpreted as the disjunction of these literals.
A formula in \emph{conjunctive normal form (CNF)}
is a finite set of clauses, interpreted as the conjunction
of these clauses.
For each~$r \geq 1$, an \emph{$r$-clause} is a clause that contains
at most~$r$ literals, and $r\CNF{}$ denotes the class of all CNF
formulas consisting only of $r$-clauses.
$2\CNF$ is also denoted by~\Krom{},
and 2\CNF{} formulas are also known as \emph{Krom formulas}.
A \emph{Horn clause} is a clause that contains at most one
positive literal.
A \emph{definite Horn clause} is a clause that contains exactly
one positive literal.
We let \Horn{} denote the class of all CNF formulas that contain
only Horn clauses (\emph{Horn formulas}),
and we let \DefHorn{} denote the class of all CNF formulas that contain
only definite Horn clauses (\emph{definite Horn formulas}).

For a propositional formula~$\varphi$,~$\Var{\varphi}$ denotes
the set of all variables occurring in~$\varphi$.
Moreover, for a set~$X$ of variables,~$\Lit{X}$ denotes
the set of all literals over variables in~$X$,
i.e.,~$\Lit{X} = \SB x, \neg x \SM x \in X \SE$.
We use the standard notion of \emph{(truth)
assignments}~$\alpha : \Var{\varphi} \rightarrow \SBs 0,1 \SEs$
for Boolean formulas and \emph{truth} of a formula
under such an assignment.
For any formula~$\varphi$ and any truth assignment~$\alpha$,
we let~$\varphi[\alpha]$ denote the formula obtained from~$\varphi$
by instantiating variables~$s$ in the domain of~$\alpha$ with~$\alpha(x)$
and simplifying the formula accordingly.
By a slight abuse of notation,
if~$\alpha$ is defined on all~$\Var{\varphi}$,
we let~$\varphi[\alpha]$
denote the truth value of~$\varphi$ under~$\alpha$.

\subsection{Computational Complexity Theory}

We assume the reader to be familiar with the complexity classes
\P{} and \NP{}, and with basic notions such as polynomial-time
reductions.
For more details, we refer to textbooks on computational complexity
theory (see, e.g.,~\citealt{AroraBarak09}).

In this paper, we also refer to the complexity classes \ThetaP{2}
and \DeltaP{2}
that consist of all decision problems that can be solved by
a polynomial-time algorithm that queries an \NP{}
oracle~$O(\log n)$ or~$n^{O(1)}$ times, respectively.
\longversion{
Formally, algorithms with access to an oracle are defined as follows.
Let~$O$ be a decision problem.
A Turing machine~$\mathbb{M}$ with access to an \emph{$O$~oracle}
is a Turing machine with a dedicated \emph{oracle tape}
and dedicated states~$q_{\mtext{query}}$,~$q_{\mtext{yes}}$
and~$q_{\mtext{no}}$.
Whenever~$\mathbb{M}$ is in the state~$q_{\mtext{query}}$,
it does not proceed
according to the transition relation, but instead it transitions into
the state~$q_{\mtext{yes}}$ if the oracle tape contains
a string~$x$ that is a yes-instance for the problem~$O$, i.e.,~if~$x \in O$,
and it transitions into the state~$q_{\mtext{no}}$ if~$x \not\in O$.
Intuitively, the oracle solves arbitrary instances of~$O$
in a single time step.
The class \ThetaP{2} (resp.~\DeltaP{2}) consists of all decision problems~$Q$
for which there exists a deterministic Turing machine
that decides for each instance~$x$ of size~$n$
whether~$x \in Q$ in time polynomial in~$n$
by querying some oracle~$O \in \NP{}$
at most~$O(\log n)$ (resp.~$n^{O(1)}$) times.

Let~$\CCC$ be a class of propositional formulas.
The following problem
is complete for the class \ThetaP{2}
under polynomial-time reductions
when~$\CCC$ is the class of all propositional formulas
\cite{ChenToda95,Krentel88,Wagner90}.

\probdef{
  $\MaxModel(\CCC)$
  
  \emph{Instance:} A satisfiable propositional
    formula~$\varphi \in \CCC$, and a variable~$z \in \Var{\varphi}$.
  
  \emph{Question:} Is there a
    model of~$\varphi$ that sets a maximal number of
    variables in~$\Var{\varphi}$ to true (among all
    models of~$\varphi$) and that sets~$z$ to true?
}

For any class~$\CCC$ of propositional formulas,
we let~$\MaxModel(\CCC)$ denote the problem \MaxModel{}
restricted to formulas~$\varphi \in \CCC$.

%
%
%
%
}

\subsubsection{Acknowledgments.}
This work was supported by the Austrian Science Fund (FWF),
project~J4047.

\shortversion{\clearpage}

\DeclareRobustCommand{\DE}[3]{#3}

\bibliographystyle{aaai}

\longversion{
\clearpage
\appendix

\section{Additional Material: Lemmas and Proofs}

As additional material,
we provide proofs for all statements in the main
paper marked with a star ($\star$), as well
as additional lemmas used for these proofs.

\begin{lemma}
\label{lem:maxmodel:3cnf}
$\MaxModel(3\CNF)$ is \ThetaP{2}-complete.
\end{lemma}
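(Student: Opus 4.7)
The plan is to prove $\ThetaP{2}$-completeness in two parts: membership and hardness. Membership is immediate, since $\MaxModel(3\CNF)$ is a restriction of the unrestricted $\MaxModel$ problem, which the paper uses as $\ThetaP{2}$-complete; the standard binary-search-over-an-$\NP$-oracle argument (determine the maximum number of true variables with $O(\log n)$ queries, then one extra query to check whether that maximum is attainable while forcing $z=1$) gives the $\ThetaP{2}$ upper bound. So the real content lies in $\ThetaP{2}$-hardness.

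For hardness, the plan is to reduce from $\MaxModel$ on arbitrary propositional formulas. Given an instance $(\varphi,z)$, I would first apply the standard Tseitin transformation to obtain a $3\CNF$ formula $\psi$ over $\Var{\varphi} \cup \SBs t_\chi : \chi \text{ a non-leaf subformula of } \varphi \SEs$, using clauses of size at most $3$ to enforce $t_\chi \leftrightarrow \chi$ for each such subformula, plus the unit clause $t_\varphi$ to force the root to be true. Models of $\psi$ are then in bijection with models of $\varphi$, each $t_\chi$ being fixed by the truth value of $\chi$ under the original assignment.

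The main obstacle is that this bijection is \emph{not} weight-preserving: the number of Tseitin variables set to true depends on the underlying model of $\varphi$, so taking max-weight models of $\psi$ does not correspond to taking max-weight models of $\varphi$. To fix this I would introduce, for each Tseitin variable $t_\chi$, a fresh \emph{padding} variable $t'_\chi$, together with the two clauses $(t_\chi \vee t'_\chi)$ and $(\neg t_\chi \vee \neg t'_\chi)$, which together enforce $t'_\chi \leftrightarrow \neg t_\chi$ and are of size $2$, hence permissible in $3\CNF$. Call the resulting formula $\varphi'$. In every model of $\varphi'$ exactly one of $\SBs t_\chi, t'_\chi \SEs$ is true, so the auxiliary block contributes a fixed constant $k$ (the number of Tseitin variables) to the total weight. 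Therefore, for any model $M$ of $\varphi'$, the number of variables set to true by $M$ equals the number set to true by its restriction $M|_{\Var{\varphi}}$ plus $k$.

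This yields a weight-preserving bijection between models of $\varphi$ and models of $\varphi'$, so max-models of $\varphi$ correspond precisely to max-models of $\varphi'$. Keeping $z$ as the distinguished variable, $(\varphi,z)$ is a yes-instance of $\MaxModel$ iff $(\varphi',z)$ is a yes-instance of $\MaxModel(3\CNF)$. Since the construction is clearly polynomial-time, this completes the $\ThetaP{2}$-hardness reduction and, together with membership, the completeness proof. The whole argument is essentially routine once the padding gadget is in place; its role is precisely to neutralise the distorting contribution of the auxiliary Tseitin variables to the weight, which is the only conceptual hurdle.
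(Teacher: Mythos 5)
Your proposal is correct and follows essentially the same route as the paper: a Tseitin transformation to $3\CNF$ followed by the identical padding gadget (a fresh complementary variable for each auxiliary variable, linked by the two $2$-clauses $(t_\chi \vee t'_\chi)$ and $(\neg t_\chi \vee \neg t'_\chi)$) so that the auxiliary block contributes a fixed constant to the weight of every model. The paper's proof is a sketch of exactly this construction, so there is nothing further to reconcile.
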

\begin{proof}
We sketch a reduction from \MaxModel{} for arbitrary propositional formulas.
Let~$(\varphi,z)$ be an instance of \MaxModel{}.
By using the standard Tseitin transformation, 
we can transform~$\varphi$ into a 3\CNF{} formula~$\varphi'$
with~$\Var{\varphi'} = \Var{\varphi} \cup Z$ for some set~$Z$ of new
variables, such that for each truth assignment~$\alpha : \Var{\varphi}
\rightarrow \SBs 0,1 \SEs$ it holds that~$\varphi[\alpha]$ is true
if and only if there exists a truth assignment~$\beta : Z \rightarrow \SBs 0,1 \SEs$
such that~$\varphi'[\alpha \cup \beta]$ is true.

We then transform~$\varphi'$ into a 3\CNF{} formula~$\varphi''$
with~$\Var{\varphi''} = \Var{\varphi'} \cup Z'$,
for the set~$Z' = \SB z' \SM z \in Z \SE$ of fresh variables,
such that the maximal models of~$\varphi''$ correspond exactly to the
maximal models of~$\varphi$.
We define~$\varphi''$ as follows:
\[ \varphi'' = \varphi' \wedge \bigwedge\limits_{z \in Z}
  ((\neg z \vee \neg z') \wedge (z \vee z')). \]
Each model of~$\varphi''$ then must set the same number of variables
in~$Z \cup Z'$ to true---namely~$\Card{Z}$ of them.
\end{proof}

\begin{lemma}
\label{lem:maxmodel:defhorn-krom}
$\MaxModel(\Horn \cap \Krom)$ is \ThetaP{2}-com\-plete.
\end{lemma}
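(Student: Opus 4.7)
The plan is to establish \ThetaP{2}-hardness by a polynomial-time many-one reduction from $\MaxModel(3\CNF)$, whose hardness is given by Lemma~\ref{lem:maxmodel:3cnf}; membership in \ThetaP{2} is inherited from the general \MaxModel{} problem. Given an instance $(\varphi,z)$ with $\varphi$ a satisfiable $3\CNF$ formula over variables $x_1,\dotsc,x_n$ and clauses $C_1,\dotsc,C_m$, I build a formula $\psi \in \Horn \cap \Krom$ as a weighted variant of the classical SAT-to-independent-set encoding.

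Concretely, three kinds of fresh variables are introduced, all linked only by binary pure-negative clauses: (i)~for each $x_k$, a pair of sign variables $p_k, q_k$ with the exclusion clause $\neg p_k \vee \neg q_k$, so that $p_k$ being true encodes $\alpha(x_k)=1$ and $q_k$ being true encodes $\alpha(x_k)=0$; (ii)~for each clause $C_i = (l_{i,1} \vee l_{i,2} \vee l_{i,3})$, each position $j \in \{1,2,3\}$, and each index $t \in [S]$, a witness variable $w_{i,j,t}$, together with pairwise exclusion clauses across different positions inside $C_i$ and an exclusion clause between $w_{i,j,t}$ and the \emph{opposing} sign variable ($q_k$ if $l_{i,j}=x_k$, $p_k$ if $l_{i,j}=\neg x_k$); (iii)~for each $x_k$ and each $r \in [T]$, a bonus leaf $b_{k,r}$ with exclusion clause $\neg b_{k,r} \vee \neg q_k$. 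Every introduced clause has the form $\neg u \vee \neg v$, so $\psi \in \Horn \cap \Krom$; the distinguished variable of $\psi$ is set to $p_z$.

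A model $\mu$ of $\psi$ induces an assignment $\alpha(x_k) = \mu(p_k)$ of the original variables, and the total number of true variables in any max model of $\psi$ equals $n + S\cdot s(\alpha) + T\cdot p(\alpha)$, where $s(\alpha)$ counts the clauses of $\varphi$ satisfied by $\alpha$ and $p(\alpha)$ counts the positives of $\alpha$: each variable gadget contributes $1$, each satisfied clause contributes $S$ via the $S$ witness copies at a satisfying position, and each positive $\alpha(x_k)=1$ unlocks $T$ bonuses. Choosing $T=1$ and $S=n+1$ guarantees $S > Tn$, so missing even a single clause costs strictly more than the maximum possible gain from extra positives; hence max models of $\psi$ correspond exactly to satisfying assignments of $\varphi$ with a maximum number of positives, and $p_z$ is true in such a model iff $\alpha(z)=1$. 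The main obstacle is calibrating $S$ relative to $T$ and $n$ so that clause satisfaction strictly dominates the positive count, which the bound $S \ge Tn+1$ resolves while keeping the overall construction polynomial (of size $O(n^2 m)$).
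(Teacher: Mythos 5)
Your reduction has a genuine gap: by restricting yourself to pure-negative binary clauses (an independent-set style encoding), you lose the ability to force anything to be \emph{true}, and this breaks the intended correspondence. For each original variable $x_k$ your gadget admits a third, ``undefined'' state $p_k=q_k=0$ besides the two intended ones. In that state nothing is blocked: every witness $w_{i,j,t}$ at an occurrence of $x_k$ may be set to true (its only obstacle is the \emph{opposing} sign variable, which is false), and all bonuses $b_{k,r}$ may also be set to true, since they are keyed on $q_k$ being false rather than on $p_k$ being true. With $T=1$ the undefined state therefore contributes $T=1$, exactly as much as the ``false'' state, while blocking strictly fewer witnesses, so maximal models of $\psi$ never need to set any $q_k$ to true; a clause of $\varphi$ then loses its $S$ witness copies only if \emph{all} of its literals are explicitly falsified, which can happen only for all-negative clauses. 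Hence maximal models of $\psi$ maximize the number of true $p_k$'s subject only to the all-negative clauses of $\varphi$ and ignore every clause containing a positive literal. Concretely, for $\varphi$ equivalent to $\neg x_1 \wedge (x_1 \vee \neg x_2)$ (padded to 3CNF) the unique model sets $x_2=0$, yet the unique maximal model of $\psi$ puts $x_1$ in the undefined state and sets $p_{x_2}=1$ (contribution $1+T$ versus $1$, with both clauses still credited via the undefined $x_1$), so your reduction answers yes on $(\varphi,x_2)$ although the correct answer is no. This is not fixable by recalibrating $S$ and $T$: forbidding the undefined state requires both the true and the false state to strictly beat it, which forces $T=0$ and removes the reward for positives.

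The paper's proof repairs exactly this point by also using \emph{implication} clauses, which are still in $\Horn \cap \Krom$: each chosen clause witness $y^u_j$ is tied to the literal it certifies by a clause $(\neg y^u_j \vee x_i^{1})$ (resp.\ $(\neg y^u_j \vee x_i^{0})$), so selecting a witness forces the corresponding signed copy of $x_i$ to be true, and implication cycles among fresh copies realize the weights that make exactly one of $x_i^1,x_i^0$ (and exactly one witness per clause) true in every maximal model. Some mechanism of this kind --- an implication from each witness to the supporting sign variable, together with a weighting that forces exactly one of $p_k,q_k$ to be true --- is the ingredient your construction is missing.
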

\begin{proof} 
We give a reduction from $\MaxModel(3\CNF)$.
Let~$(\varphi,z)$ be an instance of $\MaxModel(3\CNF)$,
where~$\Var{\varphi} = X = \SBs x_1,\dotsc,x_n \SEs$
and where~$\varphi$ consists of the clauses~$c_1,\dotsc,c_m$.
Without loss of generality, we may assume that each clause~$c_j$
is of size exactly~3.
Also, without loss of generality, we may assume that~$\varphi$
is satisfied by the ``all zeroes'' assignment, that is, by
the assignment~$\alpha_0$ such that~$\alpha_0(x_i) = 0$
for all~$i \in [n]$.
Moreover, we may assume without loss of generality that~$m \geq n$.
We construct an instance~$(\varphi',z')$
of $\MaxModel(\Horn \cap \Krom)$ as follows.

For each clause~$c_j$, we introduce fresh
variables~$y_j^u$ and~$y_{j,\ell}^u$,
for~$u \in [3]$ and~$\ell \in [n]$.
Moreover, for each~$x_i$, we introduce fresh
variables~$x_i^1$,~$x_i^0$,~$z_{i,\ell}^1$ for~$\ell \in [m+1]$
and~$z_{i,\ell}^0$ for~$\ell \in [m]$.
We then let~$\varphi'$ consist of the following clauses.
For each~$j \in [m]$, we add the clauses:
\[ (\neg y^1_j \vee \neg y^2_j), (\neg y^1_j \vee \neg y^3_j), (\neg y^2_j \vee \neg y^3_j), \]
ensuring that at most one variable among~$y^1_j,y^2_j,y^3_j$ can be true.
Moreover, for each~$j \in [m]$ and each~$u \in [3]$, we add the clauses:
\[ \begin{array}{r}
  (y^u_j \rightarrow y^u_{j,1}), (y^u_{j,1} \rightarrow y^u_{j,2}),\dotsc,
(y^u_{j,n-1} \rightarrow y^u_{j,n}), \\[3pt]
 (y^u_{j,n} \rightarrow y^u_j),
\end{array} \]
ensuring that the variables~$y^u_j$ and~$y^u_{j,\ell}$ get the same truth
value, for each~$u \in [3]$ and each~$j \in [m]$.

Then, for each~$i \in [n]$, we add the
clause~$(\neg x_i^1 \vee \neg x_i^0)$,
ensuring that at most one variable among~$x_i^1,x_i^0$ is true.
Moreover for each~$i \in [n]$ we add the clauses:
\[ \begin{array}{r}
 (x_i^1 \rightarrow z_{i,1}^1),(z_{i,1}^1 \rightarrow z_{i,2}^1),\dotsc,
(z_{i,m}^1 \rightarrow z_{i,m+1}^1), \\[3pt] (z_{i,m+1}^1 \rightarrow x_i^1),
\end{array} \]
and:
\[ \begin{array}{r}
  (x_i^0 \rightarrow z_{i,1}^0),(z_{i,1}^0 \rightarrow z_{i,2}^0),\dotsc,
(z_{i,m-1}^0 \rightarrow z_{i,m}^0), \\[3pt] (z_{i,m}^0 \rightarrow x_i^0),
\end{array} \]
ensuring that the variables~$x_i^u$ and~$z_{i,\ell}^u$ get the same truth
value, for each~$u \in \SBs 0,1 \SEs$ and each~$i \in [n]$.

Finally, we add the following clauses to~$\varphi'$, for each
clause~$c_j$ of~$\varphi$.
Let~$c_j$ be a clause of~$\varphi$,
and let~$l_{j,u}$ be the $u$-th literal in~$c_j$, for~$u \in [3]$.
If~$l_{j,u} = x_i$ for some~$i \in [n]$,
we add the
clause~$(y^u_j \rightarrow x_i^1)$,
and if~$l_{j,u} = \neg x_i$ for some~$i \in [n]$,
we add the
clause~$(y^u_j \rightarrow x_i^0)$.

To finish our construction,
we let~$z' = x_i^1$, for the unique~$i$ such that~$z = x_i$.

Before we show correctness of this reduction,
we establish several other properties of the formula~$\varphi'$.
Any maximal model of~$\varphi'$ sets at least~$n(m+1)+m(n+1)
= 2nm + n + m$ variables to true.
Since the ``all zeroes'' assignment~$\alpha_0$ satisfies~$\varphi$,
we can satisfy~$\varphi'$ by setting all variables~$x_i^0,z_{i,\ell}^0$
to true, setting all variables~$x_i^1,z_{i,\ell}^1$ to false,
and for each~$j \in [m]$ setting all variables~$y^u_j,y^u_{j,\ell}$
to true for some~$u \in [3]$, and setting all
variables~$y^{u'}_j,y^{u'}_{j,\ell}$ to false for the other~$u' \in [3]$.
This model of~$\varphi'$ sets~$2nm+n+m$ variables to true.

Moreover, by construction of~$\varphi'$, we know that each model
of~$\varphi'$ sets at most~$n(m+2)+m(n+1) = 2nm+2n+m$
variables to true.

By construction of~$\varphi'$, we know that any model of~$\varphi'$
sets variables~$x_i^u,z_{i,\ell}^u$ to true for \textbf{at most}
one~$u \in \SBs 0,1 \SEs$ for each~$i \in [n]$,
and that it sets variables~$y^u_j,y^u_{j,\ell}$ to true for \textbf{at most}
one~$u \in [3]$ for each~$j \in [m]$.
We argue that any maximal model of~$\varphi'$ must set
variables~$x_i^u,z_{i,\ell}^u$ to true for \textbf{exactly}
one~$u \in \SBs 0,1 \SEs$ for each~$i \in [n]$,
and must set variables~$y^u_j,y^u_{j,\ell}$ to true for \textbf{exactly}
one~$u \in [3]$ for each~$j \in [m]$.
Suppose that there is some maximal model of~$\varphi'$
that sets all variables~$x_i^1,x_i^0,z_{i,\ell}^1,z_{i,\ell}^0$
to false, for some~$i \in [n]$.
Then we know that this model can set at most~$2nm+2n-2$
variables to true.
Since~$m \geq n$, we know that this model cannot be maximal,
since there is a model that sets~$2nm+n+m > 2nm+2n-2$
variables to true.
From this we can conclude that each maximal model of~$\varphi'$
must set
variables~$x_i^u,z_{i,\ell}^u$ to true for exactly
one~$u \in \SBs 0,1 \SEs$ for each~$i \in [n]$.
An entirely similar argument can be used to show that
each maximal model of~$\varphi'$
must set variables~$y^u_j,y^u_{j,\ell}$ to true for exactly
one~$u \in [3]$ for each~$j \in [m]$.

Then, for each maximal model~$\alpha'$ of~$\varphi'$, we can construct
a truth assignment~$\alpha : X \rightarrow \SBs 0,1 \SEs$ as follows.
For each~$x_i \in X$, we let~$\alpha(x_i) = 1$ if and only if~$\alpha'$
sets~$x_i^1$ to true, and we let~$\alpha(x_i) = 0$ if and only
if~$\alpha'$ sets~$x_i^0$ to true.
Moreover, this truth assignment~$\alpha$ satisfies~$\varphi$.
To derive a contradiction, suppose that~$\alpha$ does not satisfy~$\varphi$,
that is, that there is some clause~$c_j$ of~$\varphi$ that~$\alpha$
does not satisfy.
Then there must be a clause of the form~$(y_j^u \rightarrow x_i^{u'})$
in~$\varphi'$, for some~$u \in [3]$ and some~$u' \in \SBs 0,1 \SEs$,
that is not satisfied by~$\alpha'$.
This is a contradiction with our assumption that~$\alpha'$
satisfies~$\varphi'$.
Therefore, we can conclude that~$\alpha$ satisfies~$\varphi$.

Conversely, for any model~$\alpha$ of~$\varphi$
we can construct a model~$\alpha'$ of~$\varphi'$ as follows.
For each~$i \in [n]$ and each~$u \in \SBs 0,1 \SEs$,%
~$\alpha'$ sets the variables~$x_i^u,z_{i,\ell}^u$ to true if and
only if~$\alpha(x_i) = u$.
Moreover, since~$\alpha$ satisfies~$\varphi$, we know that
for each~$j \in [m]$ there is some~$u_j \in [3]$ such
that~$\alpha$ satisfies the $u_j$-th literal in clause~$c_j$.
Then, for each~$j \in [m]$ and each~$u \in [3]$,%
~$\alpha'$ sets the variables~$y^u_j,y^u_{j,\ell}$
to true if and only if~$u = u_j$.
It is straightforward to verify that~$\alpha'$ satisfies~$\varphi'$.

We will now argue that there is a maximal model of~$\varphi$ that sets~$z$
to true if and only if there is a maximal model of~$\varphi'$ that sets~$z'$
to true.

$(\Rightarrow)$
Suppose that there is a maximal model~$\alpha$ of~$\varphi$ that sets~$z$ to true.
We can then construct a model~$\alpha'$ of~$\varphi'$, as described above.
It is easy to verify that~$\alpha'$ sets~$z'$ to true.
We argue that~$\alpha'$ is a maximal model of~$\varphi'$.
Suppose, to derive a contradiction, that~$\alpha'$ is not a maximal model of~$\varphi'$---%
that is, there is some model~$\beta'$ of~$\varphi'$ that sets more variables to true
than~$\alpha'$.
Then, as described above, we can construct a model~$\beta$ of~$\varphi$
from~$\beta'$.
It is straightforward to verify that~$\beta$ sets more variables in~$X$ to true than~$\alpha$.
This is a contradiction with our assumption that~$\alpha$ is a maximal model
of~$\varphi$.
Therefore, we can conclude that~$\alpha'$ is a maximal model of~$\varphi'$.

$(\Leftarrow)$
Conversely, suppose that there is a maximal model~$\alpha'$ of~$\varphi'$ that
sets~$z'$ to true.
We can then construct a model~$\alpha$ of~$\varphi'$, as described above.
It is easy to verify that~$\alpha$ sets~$z$ to true.
We argue that~$\alpha$ is a maximal model of~$\varphi$.
Suppose, to derive a contradiction, that~$\alpha$ is not a maximal model of~$\varphi$---%
that is, there is some model~$\beta$ of~$\varphi$ that sets more variables to true
than~$\alpha$.
Then, as described above, we can construct a model~$\beta'$ of~$\varphi'$
from~$\beta$.
It is straightforward to verify that~$\beta'$ sets more variables in~$\Var{\varphi'}$ to
true than~$\alpha'$.
This is a contradiction with our assumption that~$\alpha'$ is a maximal model
of~$\varphi'$.
Therefore, we can conclude that~$\alpha$ is a maximal model of~$\varphi$.
\end{proof}

\begin{lemma}
\label{lem:horn-kemeny}
\Outcome{\Kemeny} is \ThetaP{2}-hard even when
restricted to the case where~$\Gamma \in \Horn$.
\end{lemma}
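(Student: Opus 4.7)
The plan is to reduce from $\MaxModel(\Horn \cap \Krom)$, which is \ThetaP{2}-hard by Lemma~\ref{lem:maxmodel:defhorn-krom}. Given an instance $(\varphi, z)$ with $\varphi$ satisfiable and, as in the proof of that lemma, satisfied by the all-zeros assignment, I would construct an instance $(\III, \Gamma, \prof{r}, s)$ of $\Outcome{\Kemeny}$ with $\Gamma \in \Horn$ such that a Kemeny outcome agrees with $s$ iff some maximal model of $\varphi$ sets $z$ to $1$.

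The construction would enlarge the agenda with fresh ``indicator'' variables $y_1, \dotsc, y_n$ (one per $x_i \in \Var{\varphi}$) and take $\Gamma = \varphi \wedge \bigwedge_i (\neg y_i \vee x_i)$, which is Horn since each added clause has at most one positive literal. The implications $y_i \to x_i$ make the maximum number of $y_i$'s that a rational ballot can set to $1$ exactly equal to the size of a maximal model of $\varphi$ (achieved by taking $y_i = x_i$ in that model). Using the equivalent formulation $\Kemeny(\prof{r}) = \argmin\nolimits_{r \in \RRR(\III,\Gamma)} \sum\nolimits_i d_{\mtext{H}}(r_i,r)$, I would then design $\prof{r}$ so that minimising total Hamming distance on the $y_i$'s rewards setting them to $1$, while votes on the $x_i$'s do not override this preference. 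The all-zeros ballot satisfies $\Gamma$ (by the WLOG assumption on $\varphi$ and the vacuous truth of the added implications when $y_i = 0$); this anchors the construction, and a concrete profile would combine this ballot with ballots derived from specific models of $\varphi$, plus padding multiplicities, to produce the desired majority pressure on each $y_i$. Setting the partial ballot $s$ to fix $z = 1$ completes the construction.

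The main obstacle is calibrating the profile so that every individual ballot satisfies $\Gamma$ while, collectively, the Kemeny objective is minimised precisely at a rational ballot $r^{*}$ whose restriction to $\Var{\varphi}$ is a maximal model of $\varphi$ and whose indicator variables satisfy $y_i = x_i$. This is delicate because, for a general Horn-Krom $\varphi$, no single rational ballot can vote up all $y_i$'s at once: doing so would require a model of $\varphi$ that is all-ones on $\Var{\varphi}$, which need not exist. So the calibration must exploit combinations of ballots (anchored by the all-zeros witness) together with carefully chosen multiplicities so that, for each $y_i$, the Hamming penalty of setting $r^{*}(y_i) = 0$ strictly exceeds that of setting $r^{*}(y_i) = 1$. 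Correctness would then follow by arguing that the set of Kemeny outcomes is in bijection with the set of extensions of maximal models of $\varphi$ to the full agenda, so that the query on $z$ transfers exactly.
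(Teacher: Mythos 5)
Your high-level architecture matches the paper's: reduce from a Horn-restricted \MaxModel{} problem, enlarge the agenda with auxiliary variables governed by Horn implications into the $x_i$, and build a profile whose Kemeny objective rewards accepting the auxiliaries. You also correctly identify the crux (individual rationality versus collective pressure on the $y_i$'s). But the specific gadget you propose cannot be calibrated to work, no matter how the multiplicities are chosen. Your added clauses $(\neg y_i \vee x_i)$ are 2-clauses, and by exactly the pigeonhole argument used in Theorem~\ref{thm:krom-majority-consistent}, any majority over rational ballots automatically satisfies every Krom consequence of $\Gamma$: if a majority sets $y_i=1$ and a majority sets $x_i=0$, some single ballot violates $(\neg y_i \vee x_i)$, contradicting its rationality. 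Concretely, every ballot with $y_i=1$ must also have $x_i=1$, so $y_i$ can only acquire majority support if $x_i$ already has it; the implication then never conflicts with the majority, the auxiliary variables generate no tension, and the Kemeny outcome is not forced to trade off the number of accepted $x_i$'s against anything. The reduction therefore fails to encode maximality.

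The missing idea, which the paper's proof supplies, is to make the trigger \emph{conjunctive}: for each $i$ introduce groups $y_{i,1},y_{i,2},y_{i,3}$ (and a second group $y'_{i,j}$) with the width-four Horn clause $(\neg y_{i,1} \vee \neg y_{i,2} \vee \neg y_{i,3} \vee x_i)$, and use a cyclic profile of three ballots in which each ballot accepts exactly two of the three variables in each group (and rejects all $x_i$). Then every individual ballot satisfies the clause vacuously while every $y_{i,j}$ enjoys two-out-of-three majority support, so a \emph{rational collective} ballot must either pay to accept $x_i$ or pay to break the majority on some $y_{i,j}$ and some $y'_{i,j}$; the relative costs are arranged so that accepting $x_i$ is cheaper exactly when this contributes to a maximum-cardinality model of $\varphi$. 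Without a gadget of clause width at least three on the auxiliary side, no choice of profile or multiplicities can create the required majority inconsistency, so your plan as stated has a genuine gap at its central step.
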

\begin{proof} 
We give a reduction from~$\MaxModel(\Horn)$.
Let~$(\varphi,z)$ be an instance of~$\MaxModel(\Horn)$,
where~$\Var{\varphi} = X = \SBs x_1,\dotsc,x_n \SEs$.
We may assume without loss of generality that the ``all zeroes''
assignment~$\alpha_0 : X \rightarrow \SBs 0,1 \SEs$,
for which~$\alpha_0(x_i) = 0$ for all~$i \in [n]$, satisfies~$\varphi$.
We construct an instance~$(\III,\Gamma,\prof{r},s)$
of~\Outcome{\Kemeny}, with~$\Gamma \in \Horn$, as follows.

We let~$\III = X \cup \SB y_{i,j}, y'_{i,j} \SM i \in [n], j \in [3] \SE$.
We define~$\Gamma$ as follows:~$\Gamma =
\varphi \wedge \bigwedge\nolimits_{i \in [n]}
  ((y_{i,1} \wedge y_{i,2} \wedge y_{i,3} \rightarrow x_i) \wedge
  (y'_{i,1} \wedge y'_{i,2} \wedge y'_{i,3} \rightarrow x_i))$.
We define the profile~$\prof{r} = (r_1,r_2,r_3)$ as shown
in Figure~\ref{fig:profile-cbased-horn}.
\begin{figure}[b!]
\begin{center}
  \vspace{-5pt}
  \begin{tabular}{c || @{\ \ \ \ } c @{\ \ \ \ } | @{\ \ \ } c c c@{\ \ \ } | @{\ \ \ } c c c}
    \toprule
    $\prof{r}$ & $x_i$
      & $y_{i,1}$ & $y_{i,2}$ & $y_{i,3}$
      & $y'_{i,1}$ & $y'_{i,2}$ & $y'_{i,3}$ \\
    \midrule
    $r_1$ &$0$&$1$&$1$&$0$&$1$&$1$&$0$ \\
    $r_2$ &$0$&$1$&$0$&$1$&$1$&$0$&$1$ \\
    $r_3$ &$0$&$0$&$1$&$1$&$0$&$1$&$1$ \\
    \bottomrule
  \end{tabular}
\end{center}
\vspace{-10pt}
\caption{The profile~$\prof{r} = (r_1,r_2,r_3)$
in the proof of Lemma~\ref{lem:horn-kemeny}---%
here~$i$ ranges over~$[n]$.}
\label{fig:profile-cbased-horn}
\end{figure}
Finally, we let~$s$ be the partial ballot that only
sets~$z$ to~$1$.

Clearly, each rational ballot~$r^{*} \in \RRR(\III,\Gamma)$
must satisfy~$\varphi$, since~$\Gamma \models \varphi$.
Moreover, to satisfy~$\Gamma$, each rational ballot~$r^{*}$
must---for each~$i \in [n]$---either (i)~set~$x_i$ to~$1$
or (ii)~set at least one variable among~$y_{i,1},y_{i,2},y_{i,3}$
and at least one variable among~$y'_{i,1},y'_{i,2},y'_{i,3}$
to~$0$.
In case~(i), the total Hamming distance to the profile~$\prof{r}$
increases with~$3$,
and in case~(ii), the total Hamming distance to the profile~$\prof{r}$
increases with at least~$4$.
Therefore, the rational ballots~$r^{*}$ with minimal cumulative Hamming
distance to the profile~$\prof{r}$ correspond exactly to the
models of~$\varphi$ that set a maximal number of variables~$x \in X$
to true.
From this it immediately follows that
there exists some~$r^{*} \in \Kemeny(\prof{r})$
that agrees with~$s$ if and only if
there is a maximal model of~$\varphi$ that sets~$z$ to true.
\end{proof}

\begin{proof}[Proof of Proposition~\ref{prop:defhorn-kemeny} (sketch)]
We give a reduction from \Outcome{\Kemeny}
restricted to the case where~$\Gamma \in \Horn$.
Let~$(\III,\Gamma,\prof{r},s)$ be an instance
of~\Outcome{\Kemeny} with~$\Gamma \in \Horn$.
Let~$\prof{r} = (r_1,\dotsc,r_p)$.
Also, let~$c_1,\dotsc,c_m$ denote the clauses of~$\Gamma$.
Moreover, suppose that the clauses~$c_1,\dotsc,c_u$
are non-definite Horn clauses,
and that the clauses~$c_{u+1},\dotsc,c_m$ are definite Horn clauses.
We construct an equivalent
instance~$(\III',\Gamma',\prof{r}',s)$
of~\Outcome{\Kemeny} with~$\Gamma' \in \DefHorn$,
as follows.

Firstly, we let~$\III' = \III \cup \SB y_{j,\ell} \SM j \in [u], \ell \in [n+1] \SE$.
We obtain the definite Horn formula~$\Gamma'$ from~$\Gamma$
as follows.
Firstly, we add the clauses~$c_{u+1},\dotsc,c_m$ to~$\Gamma'$.
Then, for each non-definite Horn clause~$c_j$, with~$j \in [u]$,
we add a definite Horn clause~$(c_j \vee y_{j,\ell})$ to~$\Gamma'$
for each~$\ell \in [n+1]$.
We obtain the profile~$\prof{r}' = (r'_1,\dotsc,r'_p)$ from~$\prof{r}$
as follows.
For each~$i \in [p]$, we let~$r'_i$ agree with~$r_i$ on the issues
in~$\III$.
Moreover, for each~$i \in [p]$ and each~$x' \in \III' \backslash \III$,
we let~$r'_i(x') = 0$.
It is straightforward to verify that each~$r'_i$ is rational.

We firstly show that for each~$r^{*} \in \Kemeny(\prof{r}')$
and for each~$j \in [u]$, it holds that~$r^{*}$ sets all
variables~$y_{j,\ell}$ to~$0$.
We proceed indirectly, and suppose that this is not the case,
i.e., that there is some~$j \in [u]$ such that~$r^{*}$ does
not set all variables~$y_{j,\ell}$ to~$0$.
We distinguish two cases: either (i)~for all~$\ell \in [n+1]$
it holds that~$r^{*}$ sets~$y_{j,\ell}$ to~$1$,
or (ii)~this is not the case.
In case~(i), we know that the cumulative Hamming distance
from~$r^{*}$ to the profile~$\prof{r}'$ is at least~$p(n+1)$.
However, the ballot~$r_0$ such that~$r_0(x) = 0$ for all~$x \in \III'$
is rational and has cumulative distance of at most~$pn$
to~$\prof{r}'$.
Thus,~$r^{*}$ does not have minimal distance to~$\prof{r}'$,
which contradicts our assumption that~$r^{*} \in \Kemeny(\prof{r}')$.
In case~(ii), we know that there exists some~$\ell,\ell' \in [n+1]$
such that~$r^{*}$ sets~$y_{j,\ell}$ to~$1$
and~$y_{j,\ell'}$ to~$0$.
Then, we know that~$r^{*} \models c_j$,
since~$r^{*} \models (c_j \vee y_{j,\ell'})$.
However, then modifying~$r^{*}$ by setting~$y_{j,\ell}$ to~$0$
would result in a rational ballot with strictly smaller cumulative distance
to the profile~$\prof{r}'$,
which is a contradiction with our assumption that~$r^{*} \in \Kemeny(\prof{r}')$.
Thus, we can conclude that for each~$r^{*} \in \Kemeny(\prof{r}')$
and for each~$j \in [u]$, it holds that~$r^{*}$ sets all
variables~$y_{j,\ell}$ to~$0$.

It is then straightforward to verify that each~$r^{*} \in \Kemeny(\prof{r}')$
satisfies~$\Gamma$, and
that there exists a ballot~$r^* \in \Kemeny(\prof{r})$
such that~$s$ agrees with~$r^*$
if and only if there exists a ballot~$r^* \in \Kemeny(\prof{r}')$
such that~$s$ agrees with~$r^*$.
\end{proof}

\begin{lemma}
\label{lem:horn-slater}
\Outcome{\Slater} is \ThetaP{2}-hard even when
restricted to the case where~$\Gamma \in \Horn$.
\end{lemma}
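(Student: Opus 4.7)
The plan is to reduce from $\MaxModel(\Horn)$ (which is $\ThetaP{2}$-hard by Lemma~\ref{lem:maxmodel:defhorn-krom}) using essentially the same gadget construction as in the proof of Lemma~\ref{lem:horn-kemeny}, but with the cost analysis redone against the Slater metric (Hamming distance to the majority outcome) rather than against cumulative Hamming distance to the profile. Given an instance $(\varphi, z)$ with $\varphi \in \Horn$ on variables $X = \SBs x_1,\dotsc,x_n \SEs$, I would take $\III = X \cup \SB y_{i,j}, y'_{i,j} \SM i \in [n], j \in [3] \SE$, let $\Gamma$ be $\varphi$ conjoined with the Horn implications $(y_{i,1} \wedge y_{i,2} \wedge y_{i,3} \rightarrow x_i)$ and $(y'_{i,1} \wedge y'_{i,2} \wedge y'_{i,3} \rightarrow x_i)$ for each $i \in [n]$, take $\prof{r} = (r_1,r_2,r_3)$ exactly as in Figure~\ref{fig:profile-cbased-horn}, and let $s$ be the partial ballot that sets only $z$ to $1$. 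Since $\varphi$ is Horn it is satisfied by the all-zeros assignment, so each $r_i$ is rational: every voter sets every $x_i$ to $0$ and every triple of $y$- or $y'$-variables contains at least one $0$, so the implications hold vacuously.

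The central observation is that the majority outcome $m_{\prof{r}}$ is the complete ballot with $m_{\prof{r}}(x_i) = 0$ and $m_{\prof{r}}(y_{i,j}) = m_{\prof{r}}(y'_{i,j}) = 1$ for all $i$ and $j$ (and is itself not rational). For any rational $r^*$ and any block $i$, I would then argue: if $r^*(x_i) = 1$, then $r^*$ may set all of the $y_{i,j}$ and $y'_{i,j}$ to $1$ without violating the implications, so the block contributes exactly $1$ to $d_{\mtext{H}}(r^*, m_{\prof{r}})$ (only $x_i$ disagrees with $m_{\prof{r}}$); if $r^*(x_i) = 0$, then the two implications force at least one $y_{i,j}$ and at least one $y'_{i,k}$ to be $0$, contributing at least $2$. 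Hence minimizing $d_{\mtext{H}}(r^*, m_{\prof{r}})$ over rational $r^*$ is equivalent to maximizing the number of $i$ with $r^*(x_i) = 1$ subject to $r^*|_X \models \varphi$, so $\Slater(\prof{r})$ restricted to $X$ is precisely the set of maximum models of $\varphi$. It follows that some $r^* \in \Slater(\prof{r})$ agrees with $s$ if and only if some maximum model of $\varphi$ sets $z$ to $1$, which is the desired reduction.

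The only nontrivial technical step is the per-block cost accounting ($1$ versus at least $2$), which is immediate from the shape of $m_{\prof{r}}$ together with the implications; the hardest part to get right is ensuring that the profile really does produce a complete majority outcome with the stated values, which holds here because every variable is supported by either $3$ or $2$ out of $3$ voters on a single value. Compared with the cumulative-distance analysis of Kemeny, Slater's metric assigns each variable a uniform weight of $0$ or $1$, which makes the gadget behave cleanly and aligns the penalties strictly in favor of setting $x_i = 1$ whenever $\varphi$ allows it.
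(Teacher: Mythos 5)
Your proposal takes essentially the same route as the paper: the paper's proof of Lemma~\ref{lem:horn-slater} reuses the reduction from Lemma~\ref{lem:horn-kemeny} verbatim and redoes exactly your per-block accounting against the majority outcome~$m_{\prof{r}}$ (contribution~$1$ to $d_{\mtext{H}}(r^*,m_{\prof{r}})$ when $r^*(x_i)=1$, at least~$2$ otherwise), so the maximal models of~$\varphi$ are exactly the Slater winners restricted to~$X$. One small correction: a Horn formula need not be satisfied by the all-zeros assignment (it may contain positive unit clauses), so rationality of the ballots~$r_1,r_2,r_3$ does not follow from Horn-ness alone; as in the paper's Lemma~\ref{lem:horn-kemeny}, you should instead \emph{assume} without loss of generality that the all-zeros assignment satisfies~$\varphi$.
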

\begin{proof} 
The proof of this statement is analogous to the proof
of Lemma~\ref{lem:horn-kemeny}---we use the same
reduction from~$\MaxModel(\Horn)$.
That is, we construct~$\III$,~$\Gamma$,~$\prof{r}$
and~$s$ in exactly the same way.
What remains to show is that this reduction
is also correct for the problem \Outcome{\Slater}.

Clearly, each rational ballot~$r^{*} \in \RRR(\III,\Gamma)$
must satisfy~$\varphi$, since~$\Gamma \models \varphi$.
Moreover, to satisfy~$\Gamma$, each rational ballot~$r^{*}$
must---for each~$i \in [n]$---either (i)~set~$x_i$ to~$1$
or (ii)~set at least one variable among~$y_{i,1},y_{i,2},y_{i,3}$
and at least one variable among~$y'_{i,1},y'_{i,2},y'_{i,3}$
to~$0$.
In case~(i), the total Hamming distance to the 
majority outcome~$m_{\prof{r}}$
increases with~$1$,
and in case~(ii), the total Hamming distance to the 
majority outcome~$m_{\prof{r}}$
increases with at least~$2$.
Therefore, the rational ballots~$r^{*}$ with minimal cumulative Hamming
distance to the profile~$\prof{r}$ correspond exactly to the
models of~$\varphi$ that set a maximal number of variables~$x \in X$
to true.
\end{proof}

\begin{proof}[Proof of Proposition~\ref{prop:defhorn-slater} (sketch)]
The proof of this statement is analogous to the proof
of Proposition~\ref{prop:defhorn-kemeny}.
That is, we take the reduction from \Outcome{\Kemeny}
to \Outcome{\Kemeny} from the proof of Proposition~\ref{prop:defhorn-kemeny},
and we employ it as a reduction from \Outcome{\Slater}
to \Outcome{\Slater}.
Since this reduction results in an instance where~$\Gamma \in \DefHorn$,
this suffices.
The argument for correctness of the reduction is entirely analogous.
\end{proof}

\begin{lemma}
\label{lem:max-hamming}
Let~$\varphi$ be a 3CNF formula with clauses~$c_1,\dotsc,c_m$
(all of size exactly~$3$)
and with~$n$ variables
such that~$\varphi \setminus \SBs c_1 \SEs$ is 1-in-3-satisfiable.
We can then in polynomial time
construct a set~$\III$ of issues together with a profile~$\prof{r}$
for~$\III$ (and for~$\Gamma = \top$), and positive integers~$u_1,u_2,u_3$
(with~$u_3 < u_1$)
that are polynomial in~$\Card{\varphi}$, and that depend
only on~$n$ and~$m$, such that:
\begin{itemize}
  \item if~$\varphi$ is 1-in-3-satisfiable, then the minimum max-Hamming distance from
    any ballot~$r^*$ to~$\prof{r}$ is~$u$, and moreover, there exists some ballot~$r^*$
    such that the Hamming distance from~$r^*$ to each individual ballot in~$\prof{r}$
    is exactly~$u_1$;
  \item if~$\varphi$ is not 1-in-3-satisfiable, then the minimum max-Hamming distance from
    any ballot~$r^*$ to~$\prof{r}$ is~$u_1+u_2$, and moreover, for each ballot~$r^*$
    that achieves his minimum max-Hamming distance to~$\prof{r}$ it holds
    that the Hamming distance from~$r^*$ to each individual ballot in~$\prof{r}$
    is exactly~$u+w$; and
  \item the Hamming distance from any ballot~$r^*$ that achieves the minimum
    max-Hamming distance to the profile~$\prof{r}$ to the all-zeroes ballot~$r_0$
    is exactly~$u_3$.
\end{itemize}
\end{lemma}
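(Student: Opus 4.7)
The plan is to construct a Closest String-style instance by taking as issues the $n$ variables of $\varphi$ together with polynomially many auxiliary per-clause issues used only for balancing. The profile $\prof{r}$ will contain, for each clause $c_j$ with literals $\ell_{j,1},\ell_{j,2},\ell_{j,3}$, three ``literal ballots'' $r_{j,1},r_{j,2},r_{j,3}$, where $r_{j,k}$ sets $\ell_{j,k}$ to $\mathit{true}$ and $\ell_{j,k'}$ to $\mathit{false}$ for $k'\ne k$ (extended canonically on the remaining issues). The auxiliary issues are used to pad each $r_{j,k}$ so that all ballots in the profile have the same Hamming weight~$W$ and so that the three ballots within a single clause are pairwise equidistant; together these ensure that $d_{\mtext{H}}(r^{*}, r_{j,k})$ depends on a candidate ballot $r^{*}$ only through the pattern $r^{*}$ induces on the three literals of~$c_j$.

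With this setup, the three claims would be verified by explicit case analysis. If $\varphi$ is 1-in-3-satisfiable, I take $r^{*}$ to be the ballot corresponding to such an assignment (canonically extended to the auxiliary issues). For each clause $c_j$, exactly one of $\ell_{j,1},\ell_{j,2},\ell_{j,3}$ is true, so by the symmetry of the three literal ballots and by the weight-balancing, $r^{*}$ is at the same distance~$u_1$ from each $r_{j,k}$; a direct count shows that $u_1$ depends only on~$n$ and~$m$. In the non-1-in-3-satisfiable case, the hypothesis that $\varphi\setminus\{c_1\}$ is 1-in-3-satisfiable yields an $r^{*}$ that 1-in-3-satisfies every clause except~$c_1$; on the gadget for~$c_1$ the distance from $r^{*}$ to each literal ballot rises by the same additive amount~$u_2$, and a short argument using the symmetry and the weight-balancing shows that no ballot can beat this or achieve the optimum less uniformly---any assignment that violates 1-in-3 on two or more clauses pays the $u_2$ penalty at least twice. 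The Hamming weight of any optimal $r^{*}$ is therefore pinned down to a value that depends only on~$n$ and~$m$, giving $d_{\mtext{H}}(r^{*}, r_0) = u_3$ for some $u_3 < u_1$ (the inequality $u_3 < u_1$ is then a matter of choosing the padding weights appropriately).

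The main obstacle will be enforcing the equal-distance-to-every-ballot condition uniformly: in the yes-case there must \emph{exist} such an $r^{*}$, while in the no-case \emph{every} optimum must have this property. This requires careful symmetrization, likely by supplementing the literal ballots with complementary ``dual'' ballots so that $d_{\mtext{H}}(r^{*}, r) + d_{\mtext{H}}(r^{*}, \bar{r})$ equals the total number of issues for each complementary pair. Equally delicate is ensuring that the $u_2$ penalty for the violated clause is distributed uniformly across \emph{all} clause ballots in the profile, rather than concentrating on the ballots of the single clause~$c_1$; achieving this will likely require additional per-clause balancing gadgets that couple the distance contributions from different clauses, and a careful use of the ``$\varphi\setminus\{c_1\}$ is 1-in-3-satisfiable'' hypothesis to argue that exactly-one-clause-violation is the cheapest deviation.
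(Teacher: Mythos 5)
There is a genuine gap here: you have correctly identified the two hard points of the lemma (existence of an equidistant optimum in the yes-case, and uniformity of \emph{every} optimum in the no-case), but you defer both to unspecified ``balancing gadgets'' and ``careful symmetrization,'' and with the construction as you describe it the second bullet of the lemma actually fails. If the profile consists of per-clause literal ballots, then a candidate ballot that violates the 1-in-3 condition only on~$c_1$ pays the penalty~$u_2$ only in its distance to the ballots of the $c_1$-gadget and remains at distance~$u_1$ from the ballots of every satisfied clause; so its distance to each individual ballot is \emph{not} uniformly $u_1+u_2$. Worse, since the objective is a maximum and not a sum, your claim that violating two clauses ``pays the $u_2$ penalty at least twice'' is not meaningful: penalties incurred against different profile ballots do not accumulate under $\max$. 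The paper's proof resolves exactly this with a device your proposal lacks: after building a single-copy gadget (using a dual-rail encoding $y_j,y'_j$ of each variable, bit-pattern ballots that pin down the Hamming weight of any near-optimal ballot, and three ballots $s_k,s'_k,s''_k$ per clause), it takes $b=2\log n+3m$ disjoint copies of the issue set and defines a profile of $b$ ballots where ballot~$i$ agrees with original ballot $t_{i+\ell \bmod b}$ on the $\ell$-th copy. This cyclic rotation means each profile ballot sees the entire original multiset of ballots across the copies, so the max-Hamming distance to the rotated profile is governed by the \emph{cumulative} Hamming distance in the original gadget; this simultaneously yields an equidistant optimum in the yes-case, forces uniformity of all optima in the no-case, and makes penalties from violated clauses add up.

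The dual-rail encoding is also load-bearing for the third bullet: the bit-pattern ballots force any optimal ballot to set exactly one of $y_j,y'_j$ to true for each $j$, which pins the Hamming weight of every optimum and hence the distance $u_3$ to the all-zeroes ballot. Your plan of using the $n$ variables directly as issues with weight padding does not obviously enforce this, since different satisfying assignments of $\varphi$ can have different weights. In short, your outline is a reasonable problem analysis, but the construction that makes the lemma true---the copy-and-rotate symmetrization converting $\max$ into a sum, together with the weight-pinning dual-rail encoding---is precisely the part left unbuilt.
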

\begin{proof}
Take an arbitrary 3CNF formula~$\varphi$ with clauses~$c_1,\dotsc,c_m$
and~$\Var{\varphi} = \SBs x_1,\dotsc,x_n \SEs$ such that~$\varphi \setminus
\SBs c_1 \SEs$ is 1-in-3-satisfiable.
Without loss of generality, suppose that~$n$ is a power of~$2$.

We proceed in two steps.
In the first step, we will construct a set of issues and a set of ballots such that
the minimum max-Hamming distance to this set of ballots is lower than a particular
threshold if and only if~$\varphi$ is 1-in-3-satisfiable.
Then, in the second step, we will use these issues and ballots
to construct another set of issues and another set of ballots that satisfy the
conditions specified in the statement of the lemma.

We begin by introducing~$2n+4$ issues~$y_1,\dotsc,y_n,y'_1,\dotsc,y'_n, \allowbreak
z_1,\dotsc,z_4$,
together with a set of~$2 \log n + 3m$ ballots on these issues.
We define the first~$2 \log n$ ballots~$r_1,\dotsc,r_{\log n}$
and~$r'_1,\dotsc,r'_{\log n}$ as follows:
\[ r_i(y_j) = r_i(y'_j) = \begin{dcases*}
  1 & if the $i$-th bit of~$j$ is~$1$, \\
  0 & otherwise, \\
\end{dcases*} \]
and~$r'_i(y_j) = r'_i(y'_j) = 1-r_i(y_j)$.

It is straightforward to verify that any ballot~$r^*$ that sets exactly one
of~$y_j$ and~$y'_j$ to true (for each~$j \in [n]$) achieves the minimum
possible max-Hamming distance to these ballots (namely distance~$n$).
Moreover, any ballot that has a higher minimum max-Hamming distance to these
ballots has a Hamming distance strictly higher than~$n$ to more than one
of the ballots.
Intuitively, setting~$y_j$ to true in a ballot~$r^*$ corresponds to setting
variable~$x_j$ to true, and setting~$y'_j$ to true corresponds to setting
variable~$x_j$ to false.

Next, for each clause~$c_k$ of~$\varphi$,
we add a ballot~$s_k$, that is defined as follows.
For all variables~$x_j \not\in \Var{c_k}$, we let~$s_k(y_j) = s_k(y'_j) = 0$.
Then, for each literal~$l \in c_k$, if~$l = x_j$, we let~$s_k(y_j) = 1$
and~$s_k(y'_j) = 0$, and if~$l = \neg x_j$, we let~$s_k(y_j) = 0$
and~$s_k(y'_j) = 1$.
Moreover, for each~$\ell \in [4]$,~$s_k(z_\ell) = 0$.

Then, for each clause~$c_k$ of~$\varphi$,
we add ballots~$s'_k$ and~$s''_k$, that are defined as follows.
For all variables~$x_j \not\in \Var{c_k}$, we let~$s'_k(y_j) = s''_k(y_j) = s'_k(y'_j) = s''_k(y'_j) = 0$.
Then, for each literal~$l \in c_k$, if~$l = x_j$, we let~$s'_k(y_j) = s''_k(y_j) = 0$
and~$s'_k(y'_j) = s''_k(y'_j) = 1$, and if~$l = \neg x_j$, we let~$s'_k(y_j) = s''_k(y_j) = 1$
and~$s'_k(y'_j) = s''_k(y'_j) = 0$.
Moreover, for both~$\ell \in [2]$,~$s'_k(z_\ell) = 0$ and~$s''_k(z_\ell) = 1$,
and for both~$\ell \in [3,4]$,~$s'_k(z_\ell) = 1$ and~$s''_k(z_\ell) = 0$.

It is now routine to verify the following statements.
(1)~If~$\varphi$ is 1-in-3-satisfiable, then there is a ballot~$r^*$ that achieves
a minimum max-Hamming distance of~$n+4$ to these ballots---namely by setting
the variables~$z_\ell$ to~$0$, and by setting the variables~$y_j$ and~$y'_j$
according to the truth assignment witnessing exactly-1-satisfiability.
(2a)~If~$\varphi$ is not 1-in-3-satisfiable, then the minimum max-Hamming distance
of any ballot~$r^*$ to these ballots is strictly more than~$n+4$.
(2b)~If~$\varphi$ is not 1-in-3-satisfiable, then any ballot~$r^*$
has a cumulative Hamming distance to these ballots of at
least~$(2 \log n + 3m)(n+4)+2$.
(2c)~If~$\varphi$ is not 1-in-3-satisfiable, there is a ballot~$r^*$ that has
Hamming distance~$n+4$ to all ballots except one, to which it has Hamming
distance~$n+6$.
(3)~The Hamming distance from the all-zeroes ballot~$r_0$ to any ballot~$r^*$
achieving the minimum max-Hamming distance or the minimum cumulative Hamming
distance to these ballots is exactly~$n$.

Next, in the second step, we will use the issues and ballots that we constructed above
to construct the set~$\III$ of issues and the profile~$\prof{r}$ of ballots as specified
in the statement of the lemma.
We do this by making~$2 \log n + 3m$ copies of each of the issues~$y_j,y'_j,z_i$.
Then, we construct the profile~$\prof{r}$ that consists of~$2 \log n + 3m$ ballots,
each of which consists of a different ballot (among~$y_j,y'_j,z_i$) for each set of
copies of the issues.
This can be done as follows. Let~$t_1,\dotsc,t_b$ be the ballots that we defined above,
where~$b = 2 \log n + 3m$. Then let~$\prof{r} = \SBs t'_1,\dotsc,t'_b \SEs$.
For each~$i \in [2 \log n + 3m]$, the ballot~$t'_i \in \prof{r}$
agrees with ballot~$t_{i+\ell \mod 2 \log n + 3m}$ on the $\ell$-th copies of~$y_j,y'_j,z_i$,
for each~$\ell \in [2 \log n + 3m]$.

It is now straightforward to verify that if~$\varphi$ is 1-in-3-satisfiable,
then the minimum max-Hamming distance from any ballot~$r^*$ to~$\prof{r}$
is~$u_1 = (2 \log n + 3m)(n+4)$, and that there exists some ballot~$r^*$ that
has Hamming distance~$u$ to each ballot in~$\prof{r}$.
Also, if~$\varphi$ is not 1-in-3-satisfiable,
then the minimum max-Hamming distance from any ballot~$r^*$ to~$\prof{r}$
is~$u_1 + u_2$, where~$u_2 = 2(2 \log n + 3m)$,
and that any ballot~$r^*$ that achieves this minimum has Hamming distance
exactly~$u_1 + u_2$ to each ballot in~$\prof{r}$.

Moreover, any ballot~$r^*$ that achieves the minimum max-Hamming distance
to the profile~$\prof{r}$ has Hamming distance exactly~$u_3 = n(2 \log n + 3m)$
to the all-zeroes ballot~$r_0$.
\end{proof}

\begin{proof}[Proof of Proposition~\ref{prop:outcome-maxhamming}]
Membership in \ThetaP{2} (for the general case) has been shown before
\cite{DeHaanSlavkovik17}.
We show \ThetaP{2}-hardness for the case where~$\Gamma = \top$
by giving a reduction from the \ThetaP{2}-complete
problem of deciding whether the maximum number of variables set to true in
any satisfying assignment of a (satisfiable) propositional formula~$\varphi$ is odd.
Let~$\varphi$ be an arbitrary satisfiable propositional formula with~$n$
variables.
Suppose without loss of generality that~$n$ is even.

For each~$i \in [n]$, we construct a 3CNF formula~$\psi_i$ that is
1-in-3-satisfiable if and only if there is a truth assignment that satisfies~$\varphi$
and that sets at least~$i$ variables among~$\Var{\varphi}$ to true
(by NP-completeness of 1-in-3SAT, using the standard reduction).
We can do this in such a way that all of the formulas~$\psi_i$ have the same
number of clauses and the same number of variables,
and such that for each~$\psi_i$, there is some clause~$c \in \psi_i$ such
that~$\psi_i \setminus \SBs c \SEs$ is 1-in-3-satisfiable.

Then, since the formulas~$\psi_i$ satisfy the requirements for
Lemma~\ref{lem:max-hamming}, we can construct sets~$\III_1,\dotsc,\III_n$
of issues and profiles~$\prof{r}_1,\dotsc,\prof{r}_n$ such that for each~$i \in [n]$,
the issues~$\III_i$ and the profile~$\prof{r}_i$ satisfy the conditions mentioned
in the statement of Lemma~\ref{lem:max-hamming}.
We can do this in such a way that the sets~$\III_1,\dotsc,\III_n$ are disjoint.
Moreover, the profiles~$\prof{r}_1,\dotsc,\prof{r}_n$ have the same number~$b$
of individual ballots.
For each~$i \in [n]$, let~$\prof{r}_i$ consist of the ballots~$r^i_1,\dotsc,r^i_b$.
We then use these sets~$\III_1,\dotsc,\III_n$ and profiles~$\prof{r}_1,\dotsc,\prof{r}_n$
to construct a single set~$\III$ of issues and a single profile~$\prof{r}$.
We let~$\III = \bigcup\nolimits_{i = 1}^{n} \III_i \cup \SBs z \SEs$,
where~$z$ is a fresh propositional variable.
We let~$\prof{r}$ consist of the ballots~$r_1,\dotsc,r_b,r'_1,\dotsc,r'_b$,
that we will define below.

For each~$j \in [b]$, we define~$r_j$ as follows.
For each odd~$i \in [n]$ and each~$x \in \III_i$, we let~$r_j$ agree with~$r^i_j$,
i.e.,~$r_j(x) = r^i_j(x)$.
For each even~$i \in [n]$ and each~$x \in \III_i$, we let~$r_j(x) = 0$.
Finally, we let~$r_j(z) = 0$.

For each~$j \in [b]$, we define~$r'_j$ as follows.
For each even~$i \in [n]$ and each~$x \in \III_i$, we let~$r'_j$ agree with~$r^i_j$,
i.e.,~$r'_j(x) = r^i_j(x)$.
For each odd~$i \in [n]$ and each~$x \in \III_i$, we let~$r'_j(x) = 0$.
Finally, we let~$r_j(z) = 1$.

Finally, we let~$s$ be the partial ballot defined by letting~$l(z) = 1$
and~$l(x) = \star$ for all~$x \in \III \setminus \SBs z \SEs$.
We show that the maximum number of variables among~$\Var{\varphi}$
that are set to true in any satisfying assignment of~$\varphi$ is odd
if and only if there is some~$r^{*} \in \MaxHamming(\prof{r})$
that agrees with~$s$.

$(\Rightarrow)$
Suppose the maximum number of variables among~$\Var{\varphi}$
that are set to true in any satisfying assignment of~$\varphi$ is odd.
Then the number of formulas~$\psi_i$ that are not 1-in-3-satisfiable
is the same as the number of formulas~$\psi_i$ that are 1-in-3-satisfiable.
As a result, for any ballot~$r^{*}$ over~$\III \setminus \SBs z \SEs$
that minimizes the max-Hamming distance to~$\prof{r}$ (restricted
to~$\III \setminus \SBs z \SEs$), the Hamming distance to the
ballots~$r_1,\dotsc,r_b$ is equal to the Hamming distance to
the ballots~$r'_1,\dotsc,r'_b$.
As a result, any such ballot~$r^{*}$ over~$\III \setminus \SBs z \SEs$
minimizing the max-Hamming distance to~$\prof{r}$
(restricted to~$\III \setminus \SBs z \SEs$) can be extended to a
ballot minimizing the max-Hamming distance to~$\prof{r}$
by setting~$z$ to~$1$.

$(\Leftarrow)$
Suppose the maximum number of variables among~$\Var{\varphi}$
that are set to true in any satisfying assignment of~$\varphi$ is even.
Then there are more formulas~$\psi_i$ that are not 1-in-3-satisfiable
than formulas~$\psi_i$ that are 1-in-3-satisfiable.
As a result, for any ballot~$r^{*}$ over~$\III \setminus \SBs z \SEs$
that minimizes the max-Hamming distance to~$\prof{r}$ (restricted
to~$\III \setminus \SBs z \SEs$), the Hamming distance to the
ballots~$r_1,\dotsc,r_b$ is larger than the Hamming distance to
the ballots~$r'_1,\dotsc,r'_b$.
As a result, any ballot~$r^{*}$ that minimizes the max-Hamming distance
to~$\prof{r}$ must set~$z$ to~$0$.
\end{proof}

\probdef{
  \MinVC{}
  
  \emph{Instance:} A graph~$G = (V,E)$,
    and a vertex~$v^{*} \in V$.
  
  \emph{Question:} Is there a minimum-size vertex cover~$C \subseteq V$
    that includes~$v^{*}$?
}

\begin{lemma*}
\label{lem:maxvc}
\MinVC{} is \ThetaP{2}-complete.
\end{lemma*}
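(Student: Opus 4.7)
The plan is to prove $\ThetaP{2}$-completeness in two steps: membership via binary search with $\NP$-oracle queries, and hardness by reduction from $\MaxModel(3\CNF)$.

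For membership, I would use $O(\log |V|)$ queries to an $\NP$ oracle to binary-search for the minimum vertex cover size $k^{*}$ of $G$ (each query being ``does $G$ have a vertex cover of size at most $k$?''), followed by a single additional $\NP$ query asking whether $G$ has a vertex cover of size at most $k^{*}$ that contains $v^{*}$. This yields a polynomial-time algorithm using $O(\log |V|)$ $\NP$-oracle queries, placing $\MinVC$ in $\ThetaP{2}$.

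For hardness, I would reduce from $\MaxModel(3\CNF)$, which is $\ThetaP{2}$-complete by Lemma~\ref{lem:maxmodel:3cnf}. Given an instance $(\varphi, z)$ with variables $x_1,\dotsc,x_n$, clauses $c_1,\dotsc,c_m$ and $z = x_{i^{*}}$, build $G$ from the standard 3SAT-to-vertex-cover gadget (an edge $(u_i, \bar{u}_i)$ for each variable $x_i$, and, for each clause $c_j$, a triangle $\{t^{j}_{1},t^{j}_{2},t^{j}_{3}\}$ with $t^{j}_{k}$ joined to the vertex of the $k$-th literal of $c_j$), together with two modifications: attach a single pendant $p_i$ to each $u_i$, and replicate each clause triangle $M = n + 1$ times. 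Set $v^{*} = u_{i^{*}}$.

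A direct counting argument then shows that, in any minimal selection, a variable $x_i$ contributes one vertex to the cover if set to true (just $u_i$) and two if set to false (namely $\bar{u}_i$ and the pendant $p_i$), while each of the $M$ triangle copies for $c_j$ contributes $2$ if some literal of $c_j$ is true and $3$ otherwise. The total cover size corresponding to an assignment with $T$ true variables and $U$ unsatisfied clauses is therefore exactly $(2n - T) + M(2m + U)$. Since $M > n \geq T$, every cover derived from an assignment violating some clause is strictly larger than any cover derived from a satisfying assignment, and among satisfying assignments the size is minimized precisely when $T$ is maximized---that is, by the max-models of $\varphi$. Because $u_{i^{*}}$ lies in such a cover iff the corresponding max-model sets $z$ to true, we obtain the equivalence $(\varphi, z) \in \MaxModel(3\CNF) \iff (G, v^{*}) \in \MinVC$. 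The main technical point of the reduction is calibrating the pendant and clause-copy counts so that satisfiability strictly dominates the true-variable count; one pendant per variable together with $M = n+1$ copies per clause is enough to achieve this.
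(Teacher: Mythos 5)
Your membership argument (binary search for the minimum cover size with $O(\log |V|)$ \NP{}-oracle queries, then one final query) is fine and matches what the paper leaves as routine. Your hardness reduction, however, has a genuine gap, and it is a different route from the paper's (which reduces from the ``is the maximum number of true variables odd'' problem via a family of clique instances glued together at two special vertices and then complemented). The gap is in the backward direction of your equivalence: a minimum vertex cover of your graph need not correspond to any truth assignment, because for a variable $x_i$ the configuration $\{u_i,\bar u_i\}$ costs exactly $2$ --- the same as the ``false'' configuration $\{\bar u_i, p_i\}$ --- while covering the clause-gadget edges incident to \emph{both} polarities of $x_i$. Your counting formula $(2n-T)+M(2m+U)$ silently assumes every minimum cover picks exactly one of $u_i,\bar u_i$ per variable, which pendants alone do not enforce.

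This breaks the reduction concretely. Take $\varphi=(\neg z\vee\neg a\vee\neg b)\wedge(\neg z\vee\neg c\vee\neg d)\wedge(\neg z\vee\neg e\vee\neg f)$, which is satisfiable, has its unique maximum model setting $z$ to false and the other six variables to true ($T^{*}=6$), so $(\varphi,z)$ is a no-instance of $\MaxModel(3\CNF)$. The minimum cover size of your graph is $2n-T^{*}+2Mm=8+6M$, attained by the canonical cover $\{u_a,\dotsc,u_f,\bar u_z,p_z\}$ plus two triangle vertices per copy. But the set $\{u_a,\dotsc,u_f,u_z,\bar u_z\}$ plus, in each triangle copy, the two vertices attached to the non-$z$ literals is also a vertex cover of size $8+6M$ (the pendant edge at $u_z$ is covered by $u_z$, and $\bar u_z$ covers the $\neg z$ literal edges in every clause gadget), and it contains $v^{*}=u_z$. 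Hence $(G,v^{*})$ is a yes-instance of \MinVC{} while $(\varphi,z)$ is a no-instance. To repair this you would need to make the ``both endpoints'' option strictly more expensive than the ``false'' option (simple pendants cannot do this symmetrically in unweighted vertex cover), or argue about covers rather than assignments throughout; alternatively, follow the paper's route, which sidesteps the issue by encoding the $\ThetaP{2}$ computation into a disjoint union of clique instances and passing to the complement graph.
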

\begin{proof} 
Membership in \ThetaP{2} can be shown routinely.
We show \ThetaP{2}-hardness by reducing from the problem of deciding
whether the maximum number of variables satisfied by any model
for a given (satisfiable) propositional formula is odd.
Let~$\varphi$ be an arbitrary satisfiable propositional formula,
and let~$n = \Card{\Var{\varphi}}$.
Since the propositional satisfiability problem is NP-complete,
we can construct propositional formulas~$\psi_1,\dotsc,\psi_n$
such that for each~$i \in [n]$,~$\psi_i$ is satisfiable if and only
if~$\varphi$ can be satisfied by setting at least~$i$ variables among~$\Var{\varphi}$
to true.
Then, by NP-completeness of the problem of deciding whether a
graph has a clique of size at least~$m$,
we can transform these formulas~$\psi_i$ into graphs~$G_1,\dotsc,G_n$
together with positive integers~$m_1,\dotsc,m_n$ such that
for each~$i \in [n]$ it holds that~$G_i$ has a clique of size at least~$m_i$.
Moreover, we can ensure that no~$G_i$ has a clique of size~$m_i + 1$.
Let~$m_{\max} = \max \SBs m_1,\dotsc,m_n \SEs$.
Assume without loss of generality that~$m_{\max}$ is even.
Then we can straightforwardly transform the graphs~$G_1,\dotsc,G_n$ and the
integers~$m_1,\dotsc,m_n$ into graphs~$G'_1,\dotsc,G'_n$ and
integers~$m'_1,\dotsc,m'_n$ such that (1)~for each~$i \in [n]$
it holds that~$\psi_i$ is true if and only if~$G'_i$ has a clique of size~$m'_i$,
(2)~for each~$i \in [n]$,~$G'_i$ has no clique of size~$m'_i+1$,
and (3)~$m'_1 < m'_2 < \dotsm < m'_n$.
Assume without loss of generality that~$G'_1,\dotsc,G'_n$ are pairwise disjoint.
Then construct the graph~$G''$ by putting together~$G'_1,\dotsc,G'_n$,
adding two additional vertices~$v^{*}_1,v^{*}_2$,
connecting~$v^{*}_1$ to all vertices in~$G'_i$ for odd~$i$,
and connecting~$v^{*}_2$ to all vertices in~$G'_i$ for even~$i$.
It is straightforward to verify that every clique of~$G''$ of maximum size
does not contain~$v^{*}_2$ if and only if the maximum number of variables
satisfied by any model of~$\varphi$ is odd.
Then the instance~$(G''',v^{*}_2)$ of \MinVC{}---where~$G'''$ is the complement
of~$G''$---is a yes-instance of \MinVC{} if and only if the maximum number
of variables satisfied by any model of~$\varphi$ is odd.
This completes our proof of \ThetaP{2}-hardness.
\end{proof}

\begin{proof}[Proof of Proposition~\ref{prop:budget-young}]
We show \ThetaP{2}-hardness by reducing from \MinVC{}.
Let~$(G,v^{*})$ be an instance of \MinVC{}, where~$G = (V,E)$
with~$V = \SBs v_1,\dotsc,v_n \SEs$ and~$E = \SBs e_1,\dotsc,e_m \SEs$.
Without loss of generality, assume that~$2n+1$ is a multiple of~$3$.
Moreover, without loss of generality, assume that~$v^{*} = v_1$.

For each~$i \in [n]$, let~$d_i = \Card{\SB e_j \SM j \in [m], v_i \in e_j \SE}$
denote the degree of vertex~$v_i$.
For each~$j \in [m]$, let~$d'_j = \Card{\SB v_i \SM i \in [n], v_i \in e_j \SE}$
denote the degree of edge~$e_j$.
Moreover, for each~$i \in [n], j \in [m]$, let~$a_{i,j} = 1$ if and only
if~$v_i \in e_j$, and~$a_{i,j} = 0$ otherwise.
That is,~$a_{i,j}$ encodes whether~$v_i$ is incident to edge~$e_j$.
Also, for each~$i \in [n], j \in [m]$, let~$t_{i,j} = 1$ if and only if~$i \leq n+1-d'_j$,
and let~$t_{i,j} = 0$ otherwise.

We construct an instance~$(\III,\Gamma,\prof{r},s)$ of \Outcome{\Young} as follows.
We let~$\III = \SB x_j \SM j \in [m] \SE \cup \SBs y,z \SEs \cup \SB w_i,w'_i \SM i \in [e] \SE$.
Then, we let~$\Gamma$ be a budgetary constraint that assigns cost~$2$ to each
variable in~$\SB x_j \SM j \in [m] \SE$, cost~$2$ to~$z$, cost~$1$ to~$y$,
cost~$2m$ to each variable in~$\SB w_i,w'_i \SM i \in [e] \SE$,
and that assigns a total budget of~$12m+1$.
Then, we let~$\prof{r}$ be the profile as depicted in Figure~\ref{fig:outcome-young}.
It is straightforward to verify that each ballot in the profile satisfies the budgetary
constraint~$\Gamma$.
Finally, we let~$s$ be the partial ballot defined by~$l(y) = 0$ and~$l(v) = \star$
for all~$v \in \III \setminus \SBs y \SEs$.

\begin{figure*}[t]
\begin{center}
  \begin{tabular}{c || c c c c c | c c c c | c || c || c}
    $\prof{r}$ & $r_1$ & $r_2$ & $r_3$ & $\dotsm$ & $r_n$ & $r_{n+1}$ & $r_{n+2}$ & $\dotsm$ & $r_{2n}$ & $r_{2n+1}$ & $m_{\prof{r}}$ & cost \\
    \hline\hline
    $x_1$ & $a_{1,1}$ & $a_{2,1}$ & $a_{3,1}$ & $\dotsm$ & $a_{n,1}$ & $t_{1,1}$ & $t_{2,1}$ & $\dotsm$ & $t_{n,1}$ & 0 & 1 & $\bm{2}$ \\
    $x_2$ & $a_{1,2}$ & $a_{2,2}$ & $a_{3,2}$ & $\dotsm$ & $a_{n,2}$ & $t_{1,2}$ & $t_{2,2}$ & $\dotsm$ & $t_{n,2}$ & 0 & 1 & $\bm{2}$ \\
    $\vdots$ & $\vdots$ & $\vdots$ & $\vdots$ & $\ddots$ & $\vdots$ & $\vdots$ & $\vdots$ & $\ddots$ & $\vdots$ & $\vdots$ & $\vdots$ & $\vdots$ \\
    $x_m$ & $a_{1,m}$ & $a_{2,m}$ & $a_{3,m}$ & $\dotsm$ & $a_{n,m}$ & $t_{1,m}$ & $t_{2,m}$ & $\dotsm$ & $t_{n,m}$ & 0 & 1 & $\bm{2}$ \\
    \hline
    $y$ & $1$ & $0$ & $0$ & $\dotsm$ & 0 & $1$ & $1$ & $\dotsm$ & $1$ & $0$ & 1 & $\bm{1}$ \\
    \hline
    $z$ & $1$ & $1$ & $1$ & $\dotsm$ & $1$ & $0$ & $0$ & $\dotsm$ & $0$ & 0 & 0 & $\bm{2}$ \\
    \hline
    $w_1$ & $1$ & $1$ & $0$ & $1$ & \multicolumn{1}{c}{$1$} & $0$ & $\dotsm$ & $1$ & \multicolumn{1}{c}{$1$} & $0$ & 1 & $\bm{2m}$ \\
    $w_2$ & $1$ & $0$ & $1$ & $1$ & \multicolumn{1}{c}{$0$} & $1$ & $\dotsm$ & $1$ & \multicolumn{1}{c}{$0$} & $1$ & 1 & $\bm{2m}$ \\
    $w_3$ & $0$ & $1$ & $1$ & $0$ & \multicolumn{1}{c}{$1$} & $1$ & $\dotsm$ & $0$ & \multicolumn{1}{c}{$1$} & $1$ & 1 & $\bm{2m}$ \\
    \hline
    $w'_1$ & $1$ & $1$ & $0$ & $1$ & \multicolumn{1}{c}{$1$} & $0$ & $\dotsm$ & $1$ & \multicolumn{1}{c}{$1$} & $0$ & 1 & $\bm{2m}$ \\
    $w'_2$ & $1$ & $0$ & $1$ & $1$ & \multicolumn{1}{c}{$0$} & $1$ & $\dotsm$ & $1$ & \multicolumn{1}{c}{$0$} & $1$ & 1 & $\bm{2m}$ \\
    $w'_3$ & $0$ & $1$ & $1$ & $0$ & \multicolumn{1}{c}{$1$} & $1$ & $\dotsm$ & $0$ & \multicolumn{1}{c}{$1$} & $1$ & 1 & $\bm{2m}$ \\
  \end{tabular}
\end{center}
\caption{Construction of the profile~$\prof{r}$ in the proof of Proposition~\ref{prop:budget-young}.}
\label{fig:outcome-young}
\end{figure*}

Clearly, the majority outcome~$m_{\prof{r}}$ does not satisfy the budgetary constraint~$\Gamma$,
as all variables in~$\III \setminus \SBs z \SEs$ enjoy majority support, and the total cost
of these variables is~$14m+1 > 12m+1$.
There are two ways of saving a total cost of at least~$2m$ by deleting individual ballots:
either (1)~delete a set of ballots such that some~$w_i$ or~$w'_i$ is not supported by a majority
anymore, or (2)~delete a set of ballots such that all variables in~$\SB x_j \SM j \in [m] \SE \cup
\SBs z \SEs$ are not supported by a majority.
Option~(1) requires deleting more than~$1/3(2n+1)$ individual ballots,
as each~$w_i$ and~$w'_i$ enjoys a two-thirds majority support.
Without loss of generality, we may assume that the smallest vertex cover of~$G$
is of size less than~$1/6(2n+1)$---if this is not the case, we can simply add unconnected
vertices to increase~$n$.
We show that option~(2) requires less than~$1/3(2n+1)$ individual ballots.
Let~$C \subseteq V$ denote some vertex cover of~$G$.
Now remove from~$\prof{r}$ those individual ballots~$r_{i}$
and the~$\Card{C}$ individual ballots~$r_{n+1},\dotsc,r_{n+\Card{C}}$.
Without loss of generality, we may assume that these ballots~$r_{n+1},\dotsc,r_{n+\Card{C}}$
support all variables~$x_j$---again, if this were not the case, we could increase~$n$
by adding unconnected vertices.
It is straightforward to verify that removing these ballots results in a profile where
the variables in~$\SB x_j \SM j \in [m] \SE \cup \SBs z \SEs$ do not enjoy majority support.
Moreover, since~$\Card{C} < 1/6(2n+1)$, we deleted less than~$1/3(2n+1)$ individual ballots.
Thus, we can restrict our attention to deleting individual ballots that ensure that
the variables in~$\SB x_j \SM j \in [m] \SE \cup \SBs z \SEs$ do not enjoy majority support.

Let~$I \subseteq [2n+1]$ be a set of indices (of size smaller than~$1/3(2n+1)$)
such that if we delete the individual ballots~$r_{i}$ for all~$i \in I$, then
the variables in~$\SB x_j \SM j \in [m] \SE \cup \SBs z \SEs$ do not enjoy majority support.
By the way the Young judgment aggregation procedure is defined, it suffices to look
at sets~$I$ of even size.
Without loss of generality, we can assume that~$2n+1 \not\in I$---if this were not the case,
one could replace~$2n+1$ by any other index.
Moreover, without loss of generality, we can assume that for each~$i \in I \cap [n+1,2n]$
it holds that~$r_i$ accepts all variables in~$\SBs x_j \SM j \in [m] \SE$---if this were not the
case, we could replace such an~$i$ by another~$i' \in [n+1,2n]$ for which this is the case;
as mentioned above, since we can arbitrarily increase~$n$ by adding variables,
we may assume without loss of generality that enough such indices~$i'$ exist.
Now, let~$I_1 = I \cap [n]$ and let~$I_2 = I \cap [n+1,2n]$.
If~$\Card{I_2} > \Card{I_1}$, we know that in the resulting profile (after deleting the individual
ballots according to~$I$), the variable~$z$ has majority support.
This contradicts our assumption, and thus we can conclude that~$\Card{I_1} \geq \Card{I_2}$.
Then, if~$\Card{I_1} > \Card{I_2}$, we could replace some indices in~$I_1$ by other
indices in~$[n+1,2n] \setminus I_2$, and we would end up with another suitable
set~$I$ of indices.
Therefore, we can restrict our attention to the case where~$\Card{I_1} = \Card{I_2}$.

Each such set~$I$ corresponds to a vertex cover of~$G$ in the following way.
Let~$C_I \subseteq V$ be defined as~$C_I = \SB v_i \SM i \in I \cap [n] \SE$.
Suppose, to derive a contradiction, that~$C_I$ is not a vertex cover, i.e.,
that there is some~$e_j \in E$ such that~$C_I \cap e_j = \emptyset$.
Then in the profile resulting from deleting the individual ballots with indices
in~$I$, the variable~$x_j$ enjoys majority support.
This is a contradiction with our assumption that deleting the ballots corresponding
to~$I$ results in a profile where all variables
in~$\SB x_j \SM j \in [m] \SE \cup \SBs z \SEs$ do not enjoy majority support.
Thus, we can conclude that~$C_I$ is a vertex cover of~$G$.

We will now show that there is a minimum-size vertex cover~$C \subseteq V$
of~$G$ that includes~$v^{*}$ if and only if there is some~$r^{*} \in \Young(\prof{r})$
that agrees with~$s$.

$(\Rightarrow)$
Take a minimum-size vertex cover~$C \subseteq V$ of~$G$
that includes~$v^{*}$.
We show how to construct a minimum size set of individual ballots to delete
to result in a majority outcome~$r^{*}$ that satisfies~$\Gamma$.
Moreover, we show that deleting this set of ballots results in an outcome~$r^{*}$
that agrees with~$s$.
Define the set~$I$ of indices of ballots to delete as follows.
Let~$I = \SB i \in [n] \SM v_i \in C \SE \cup \SBs v_{n+1},\dotsc,v_{n+\Card{C}} \SEs$.
It is straightforward to verify, since~$C$ is a vertex cover of~$G$,
that deleting individual ballots according to~$I$ results in a consistent majority outcome
that does not include~$y$ (and thus that agrees with~$s$).
We show that~$I$ is of minimum size (among all such~$I$ that lead to a consistent
majority outcome).
Suppose, to derive a contradiction, that this is not the case, i.e., that there is some
suitable~$I'$ that is smaller than~$I$.
Then, as described above, we can construct a vertex cover~$C_{I'}$ of~$G$
that is smaller than~$C$, which is a contradiction.
Therefure,~$I$ is of minimum size.

$(\Leftarrow)$
Conversely, suppose that there is some~$r^{*} \in \Young(\prof{r})$
that agrees with~$s$, i.e., such that~$r^{*}(y) = 0$.
Then~$r^{*}$ results as the majority outcome of the profile after deleting
individual ballots according to some (minimum size) set~$I \subseteq [2n+1]$.
As described above, we can construct a vertex cover~$C_I$ of~$G$.
Since~$r^{*}(y) = 0$, it is straightforward to verify that~$v^{*} \in C$.
We show that~$C$ is a minimum size vertex cover.
Suppose, to derive a contradiction, that there exists a smaller vertex cover~$C'$
of~$G$.
Then define the set~$I'$ of indices of ballots to delete as follows.
Let~$I' = \SB i \in [n] \SM v_i \in C' \SE \cup \SBs v_{n+1},\dotsc,v_{n+\Card{C'}} \SEs$.
It is straightforward to verify, since~$C'$ is a vertex cover of~$G$,
that deleting individual ballots according to~$I'$ results in a consistent majority outcome.
Moreover, since~$C'$ is smaller than~$C$, we get that~$I'$ is smaller than~$I$.
This is a contradiction with our assumption that~$I$ is of minimum size.
Thus, we can conclude that~$C$ is a minimum size vertex cover of~$G$.
\end{proof}
}

\end{document}